 \newtheorem{theorem}{Theorem}
\newtheorem{definition}{Definition}
\newtheorem{corollary}{Corollary}
\newcommand{\expectation}{\mathds{E}}
\newcommand{\diag}{\mathsf{diag}}
\newcommand{\probabilitymeasure}{\mathds{P}}
\newcommand{\indicator}{\mathds{1}}
\newcommand{\R}{\mathbb{R}}
\newcommand{\argmin}{\arg\min}
\newcommand{\timeindex}{k}
\newcommand{\timeindexalt}{m}
\newcommand{\statespace}{\mathcal{X}}
\newcommand{\statedim}{X}
\newcommand{\statevar}{x}
\newcommand{\stateidx}{i}
\newcommand{\observationspace}{\mathcal{Y}}
\newcommand{\llmobservation}{z}
\newcommand{\observationmatrix}{B}
\newcommand{\observation}{y}
\newcommand{\prior}{\pi}
\newcommand{\priorupdate}{\mathcal{T}}
\newcommand{\actionprob}{R}
\newcommand{\posterior}{\eta}
\newcommand{\action}{u}
\newcommand{\decision}{a}
\newcommand{\actionalt}{\Tilde{u}}
\newcommand{\actionspace}{\mathcal{U}}
\newcommand{\cost}{c}
\newcommand{\bayesianagenttext}{large language model agent}
\newcommand{\bayesianagentstext}{large language model agents}
\newcommand{\bayesianagent}{LLMA}
\newcommand{\markovkernel}{P}
\newcommand{\herdtime}{K}
\newcommand{\policy}{\mu}
\newcommand{\avgcost}{J}
\newcommand{\discountfactor}{\rho}
\newcommand{\stoppingtime}{\tau}
\newcommand{\delaycost}{d}
\newcommand{\errorcost}{\Upsilon}
\newcommand{\indicatorstate}{e}
\newcommand{\filtration}{\mathcal{F}}
\newcommand{\actionfiltration}{\mathcal{G}}
\newcommand{\fusionfiltration}{\mathcal{H}}
\newcommand{\probspace}{\mathcal{P}}
\newcommand{\decisionspace}{\mathcal{D}}
\newcommand{\region}{\mathcal{S}}
\newcommand{\timehorizon}{T}
\newcommand{\entropyfunction}{\mathcal{E}}
\newcommand{\threshold}{\theta}
\newcommand{\lines}{\mathcal{L}}
\newcommand{\linesymb}{L}
\newcommand{\bigcost}{C}
\newcommand{\priorratio}{\Lambda}
\newcommand{\stateidxalt}{j}
\newcommand{\actionratio}{\Gamma}
\newcommand{\constant}{\kappa}
\newcommand{\bayesianagents}{LLMAs}
\newcommand{\llm}{\mathcal{L}} 
\newcommand{\ones}{\mathbf{1}}
\newcommand{\incentive}{p}
\newcommand{\incentivepolicy}{\nu}
\newcommand{\optincentivepolicy}{\incentivepolicy^*}
\newcommand{\costconstantone}{\alpha}
\newcommand{\costconstanttwo}{\omega}
\newcommand{\actioncost}{\Delta}
\newcommand{\policyspace}{\Pi}
\newcommand{\fusionweight}{g}
\newcommand{\fusioncost}{f}
\newcommand{\discountfactorfusion}{\rho}
\newcommand{\thresholdprior}{\Bar{\prior}}
\newcommand{\incentivefunction}{\chi}
\newcommand{\budget}{B}
\newcommand{\probabilityregion}{\mathcal{R}}
\newcommand{\numagentsprev}{L}
\newcommand{\policyparameter}{\Theta}
\newcommand{\timehorizonspsa}{H}
\newcommand{\perturbationsize}{\delta}
\newcommand{\episodelength}{T}
\newcommand{\approxgradient}{\hat{\nabla}}
\newcommand{\stepsize}{\beta}
\newcommand{\approxcost}{\hat{\avgcost}}
\newcommand{\approximationparameter}{\varepsilon}
\newcommand{\environment}{m}
\newcommand{\actionposterior}{p}
\newcommand{\reconstructedreward}{\hat{r}}
\newcommand{\environmentspace}{\mathcal{M}}
\newcommand{\reconstructedinfocost}{z}
\newcommand{\environmentalt}{l}
\newcommand{\dataset}{\mathbb{D}}
\newcommand{\utility}{r}
\newcommand{\numenvironments}{{M}}
\newcommand{\infoacquisitioncost}{Z}
\newcommand{\argmax}{\arg\max}
\newcommand{\stochastickernels}{\Delta}
\newcommand{\expectedreward}{U}
\newcommand{\observationdim}{Y}
\newcommand{\nias}{\text{NIAS}}
\newcommand{\niac}{\text{NIAC}}
\newcommand{\reconstructedic}{\hat{\infoacquisitioncost}}
\newcommand{\orderof}{\mathsf{O}}
\newcommand{\tokenspace}{\mathcal{D}}
\newcommand{\lengthcontext}{m_{\text{context}}}
\newcommand{\lengthsystem}{m_{\text{control}}}
\newcommand{\lengthuser}{m_{\text{user}}}
\newcommand{\lengthoutput}{m_{\text{output}}}
\newcommand{\systemprompt}{c}
\newcommand{\context}{\kappa}
\newcommand{\llmoutputfn}{g}
\newcommand{\llmprob}{\phi}
\newcommand{\observationmatrixllm}{O}
\newcommand{\ribum}{RIBUM}
\newcommand{\dpiter}{\environment}
\newcommand{\dpset}{\environmentspace}
\newcommand{\obsset}{\mathcal{Y}}
\newcommand{\runcostinst}{c}
\newcommand{\dpitertwo}{\environmentalt}
\newcommand{\attfunagent}[1]{\observationmatrix(#1)}
\newcommand{\attfunsymb}{\observationmatrix}
\newcommand{\normalizationfactor}{\sigma}
 \newcommand{\transitionmatrix}{P}
 \newcommand{\valuefn}{V}
 \newcommand{\change}[1]{{{#1}}}
\SetMathAlphabet{\mathrm}{bold}{T1}{times}{b}{n}
\SetMathAlphabet{\mathit}{bold}{T1}{times}{b}{it}
\SetMathAlphabet{\mathbf}{bold}{T1}{times}{b}{n}
\SetMathAlphabet{\mathtt}{bold}{OT1}{pcr}{b}{n}
\renewcommand\boldmath{\@nomath\boldmath\mathversion{bold}}}
\theoremstyle{definition}
\newtheorem{remark}{Remark}
\newtheorem{example}{Example}
\def\BibTeX{{\rm B\kern-.05em{\sc i\kern-.025em b}\kern-.08em
    T\kern-.1667em\lower.7ex\hbox{E}\kern-.125emX}}
\begin{document}
\history{Date of publication February 04 2025.}
\doi{10.1109/ACCESS.2025.3538599}

\title{Interacting Large Language Model Agents.
Bayesian Social Learning Based Interpretable Models. }
\author{\uppercase{Adit Jain}\authorrefmark{1}\IEEEmembership{Student Member, IEEE}, and 
\uppercase{Vikram Krishnamurthy}\authorrefmark{2}\IEEEmembership{Fellow, IEEE}}

\address[1]{School of Electrical and Computer Engineering, Cornell University, Ithaca, NY 14853 USA (e-mail: aj457@cornell.edu)}
\address[1]{School of Electrical and Computer Engineering, Cornell University, Ithaca, NY 14853 USA (e-mail: vikramk@cornell.edu)}

\tfootnote{
© 2025 IEEE. Personal use of this material is permitted. Permission from IEEE must be obtained for all other uses, in any current or future media, including reprinting/republishing this material for advertising or promotional purposes, creating new collective works, for resale or redistribution to servers or lists, or reuse of any copyrighted component of this work in other works. The final published version of this paper can be accessed at https://ieeexplore.ieee.org/document/10870230. This research was supported by Army Research Office grant W911NF-24-1-0083 and National Science Foundation grants CCF-2312198 and CCF-2112457.
}
\markboth
{Jain and Krishnamurthy: Interacting Large Language Model Agents for  Bayesian Inference}
{Jain and Krishnamurthy: Interacting Large Language Model Agents for  Bayesian Inference}

\corresp{Corresponding author: Adit Jain (e-mail: aj457@cornell.edu).}

\begin{abstract}
This paper discusses the theory and algorithms for interacting large language model agents (LLMAs) using methods from statistical signal processing and microeconomics. While both fields are mature, their application to decision-making involving interacting LLMAs remains unexplored. 
Motivated by Bayesian sentiment analysis on online platforms, we construct interpretable models and stochastic control algorithms that enable LLMAs to interact and perform Bayesian inference. Because interacting LLMAs learn from both prior decisions and external inputs, they can exhibit bias and herding behavior. Thus, developing interpretable models and stochastic control algorithms is essential to understand and mitigate these behaviors.

This paper has three main results. First, we show using Bayesian revealed preferences from microeconomics that an individual LLMA satisfies the necessary and sufficient conditions for rationally inattentive (bounded rationality) Bayesian utility maximization and, given an observation, the LLMA chooses an action that maximizes a regularized utility. Second, we utilize Bayesian social learning to construct interpretable models for  LLMAs that interact sequentially with each other and the environment while performing Bayesian inference. Our proposed models capture the herding behavior exhibited by interacting LLMAs. Third, we propose a stochastic control framework to delay herding and improve state estimation accuracy under two settings: (a) centrally controlled LLMAs and (b) autonomous LLMAs with incentives. Throughout the paper, we numerically demonstrate the effectiveness of our methods on real datasets for hate speech classification and product quality assessment, using open-source models like LLaMA and Mistral and closed-source models like ChatGPT.  The main takeaway of this paper, based on substantial empirical analysis and mathematical formalism, is that LLMAs act as rationally bounded Bayesian agents that exhibit social learning when interacting. Traditionally, such models are used in economics to study interacting human decision-makers. 
\end{abstract}

\begin{keywords}
Bayesian Social Learning, Large Language Models, Bayesian Revealed Preferences,  Structural Results, Optimal Stopping POMDPs, Self-Attention, Rational Inattention, Model Collapse
\end{keywords}

\titlepgskip=-21pt

\maketitle

\section{Introduction}
\PARstart{T}{his} paper discusses the theory and algorithms for interacting Large Language Model Agents (LLMAs) by leveraging techniques from Bayesian inference, stochastic control, and microeconomics. Specifically, we focus on developing interpretable models and stochastic control algorithms for \bayesianagents, enabling them to interact sequentially for Bayesian inference.


We construct interpretable models of \bayesianagents\ at two levels of abstraction, as outlined in Figure~\ref{fig:interpret}.   
 First, we model an individual \bayesianagent\ as a rationally inattentive Bayesian utility maximizer, capturing the agent’s decision-making process under limited attention. Second, we extend this approach to a sequence of \bayesianagents\ engaging in Bayesian social learning or \textit{groupthink}, where each agent is a Bayesian utility maximizer.  Our models are inspired by the self-attention mechanism in large language models (LLMs) and observed challenges, such as model collapse, that arise during LLM training.

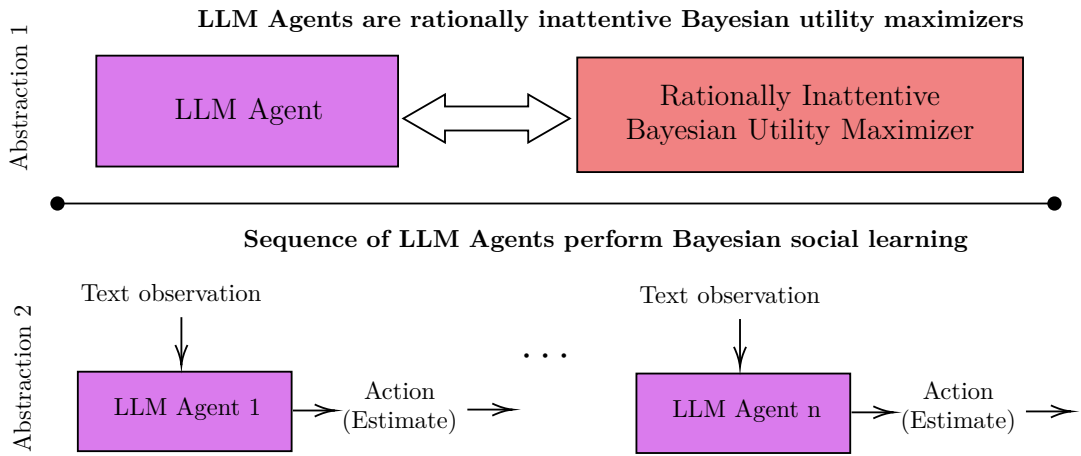
\begin{figure*}[ht!]
    \centering
    \tikzset{every picture/.style={line width=0.75pt}} 
\resizebox{0.8\textwidth}{!}{

\tikzset{every picture/.style={line width=0.75pt}} 

\begin{tikzpicture}[x=0.75pt,y=0.75pt,yscale=-1,xscale=1]

\draw  [fill={rgb, 255:red, 230; green, 24; blue, 24 }  ,fill opacity=0.54 ] (355,119) -- (596,119) -- (596,181) -- (355,181) -- cycle ;
\draw  [fill={rgb, 255:red, 189; green, 16; blue, 224 }  ,fill opacity=0.55 ] (95,118) -- (258.14,118) -- (258.14,178) -- (95,178) -- cycle ;
\draw   (261,152) -- (283.5,139) -- (283.5,145.5) -- (328.5,145.5) -- (328.5,139) -- (351,152) -- (328.5,165) -- (328.5,158.5) -- (283.5,158.5) -- (283.5,165) -- cycle ;
\draw  [fill={rgb, 255:red, 189; green, 16; blue, 224 }  ,fill opacity=0.55 ] (85,289) -- (200.59,289) -- (200.59,332) -- (85,332) -- cycle ;
\draw    (141,259.5) -- (141,283.5) ;
\draw [shift={(141,285.5)}, rotate = 270] [color={rgb, 255:red, 0; green, 0; blue, 0 }  ][line width=0.75]    (10.93,-3.29) .. controls (6.95,-1.4) and (3.31,-0.3) .. (0,0) .. controls (3.31,0.3) and (6.95,1.4) .. (10.93,3.29)   ;
\draw    (201.31,310) -- (222,310) ;
\draw [shift={(224,310)}, rotate = 180] [color={rgb, 255:red, 0; green, 0; blue, 0 }  ][line width=0.75]    (10.93,-3.29) .. controls (6.95,-1.4) and (3.31,-0.3) .. (0,0) .. controls (3.31,0.3) and (6.95,1.4) .. (10.93,3.29)   ;
\draw    (295.59,309.5) -- (318,309.5) ;
\draw [shift={(320,309.5)}, rotate = 180] [color={rgb, 255:red, 0; green, 0; blue, 0 }  ][line width=0.75]    (10.93,-3.29) .. controls (6.95,-1.4) and (3.31,-0.3) .. (0,0) .. controls (3.31,0.3) and (6.95,1.4) .. (10.93,3.29)   ;
\draw  [fill={rgb, 255:red, 189; green, 16; blue, 224 }  ,fill opacity=0.55 ] (387,290) -- (502.59,290) -- (502.59,333) -- (387,333) -- cycle ;
\draw    (443,260.5) -- (443,284.5) ;
\draw [shift={(443,286.5)}, rotate = 270] [color={rgb, 255:red, 0; green, 0; blue, 0 }  ][line width=0.75]    (10.93,-3.29) .. controls (6.95,-1.4) and (3.31,-0.3) .. (0,0) .. controls (3.31,0.3) and (6.95,1.4) .. (10.93,3.29)   ;
\draw    (503.31,311) -- (524,311) ;
\draw [shift={(526,311)}, rotate = 180] [color={rgb, 255:red, 0; green, 0; blue, 0 }  ][line width=0.75]    (10.93,-3.29) .. controls (6.95,-1.4) and (3.31,-0.3) .. (0,0) .. controls (3.31,0.3) and (6.95,1.4) .. (10.93,3.29)   ;
\draw    (597.59,310.5) -- (620,310.5) ;
\draw [shift={(622,310.5)}, rotate = 180] [color={rgb, 255:red, 0; green, 0; blue, 0 }  ][line width=0.75]    (10.93,-3.29) .. controls (6.95,-1.4) and (3.31,-0.3) .. (0,0) .. controls (3.31,0.3) and (6.95,1.4) .. (10.93,3.29)   ;
\draw    (74,198) -- (613,198) ;
\draw [shift={(613,198)}, rotate = 0] [color={rgb, 255:red, 0; green, 0; blue, 0 }  ][fill={rgb, 255:red, 0; green, 0; blue, 0 }  ][line width=0.75]      (0, 0) circle [x radius= 3.35, y radius= 3.35]   ;
\draw [shift={(74,198)}, rotate = 0] [color={rgb, 255:red, 0; green, 0; blue, 0 }  ][fill={rgb, 255:red, 0; green, 0; blue, 0 }  ][line width=0.75]      (0, 0) circle [x radius= 3.35, y radius= 3.35]   ;

\draw (475.5,150) node  [font=\large] [align=left] {\begin{minipage}[lt]{148.96pt}\setlength\topsep{0pt}
\begin{center}
Rationally Inattentive \\Bayesian Utility Maximizer 
\end{center}

\end{minipage}};
\draw (177.57,148) node  [font=\large] [align=left] {LLM Agent};
\draw (405.03,303) node [anchor=north west][inner sep=0.75pt]   [align=left] {LLM Agent $n$};
\draw (387.14,241) node [anchor=north west][inner sep=0.75pt]   [align=left] {Text observation};
\draw (526.74,293) node [anchor=north west][inner sep=0.75pt]   [align=left] {\begin{minipage}[lt]{49.19pt}\setlength\topsep{0pt}
\begin{center}
Action\\(Estimate)
\end{center}

\end{minipage}};
\draw (323,274.4) node [anchor=north west][inner sep=0.75pt]  [font=\LARGE]  {$\cdots $};
\draw (47,182) node [anchor=north west][inner sep=0.75pt]  [rotate=-270] [align=left] {Abstraction 1};
\draw (49,334) node [anchor=north west][inner sep=0.75pt]  [rotate=-270] [align=left] {Abstraction 2};
\draw (103.03,302) node [anchor=north west][inner sep=0.75pt]   [align=left] {LLM Agent 1};
\draw (85.14,240) node [anchor=north west][inner sep=0.75pt]   [align=left] {Text observation};
\draw (224.74,293) node [anchor=north west][inner sep=0.75pt]   [align=left] {\begin{minipage}[lt]{49.19pt}\setlength\topsep{0pt}
\begin{center}
Action\\(Estimate)
\end{center}

\end{minipage}};
\draw (173.14,211) node [anchor=north west][inner sep=0.75pt]   [align=left] {\textbf{Sequence of LLM Agents perform Bayesian social learning}};
\draw (148.14,92) node [anchor=north west][inner sep=0.75pt]   [align=left] {\textbf{LLM Agents are rationally inattentive Bayesian utility maximizers}};

\end{tikzpicture}
}
    \caption{Summary of the proposed contributions: We discuss the different blackbox models for LLMA and how LLMAs can be used as a sensing mechanism to perform Bayesian inference. Part 1 models the LLMAs as a rationally inattentive Bayesian utility maximizer and numerically establishes the behavior in applications of product quality identification and hate speech classification. Part 2 discusses how Bayesian social learning in a sequence of LLMAs can be used for sequential state estimation. However, in Part 3, we show that the agents can perform the same incorrect action due to herding. We then discuss a stochastic control approach to delay herding when LLMAs are centrally controlled and when they are autonomous but are incentivized.  }
    \label{fig:interpret}
\end{figure*}

Furthermore, motivated by the observed bias in the behavior of interacting \bayesianagents, we demonstrate that a sequence of \bayesianagents\ engaging in Bayesian social learning end up making identical decisions, or "herd". To address this phenomenon, we propose a stochastic control approach, formulating an optimal stopping problem to balance the trade-off between privacy and herding, to detect the failure state. Our approach is designed for two scenarios: (a) when the \bayesianagents\ are centrally controlled, and (b) when they operate autonomously.


The overarching goal of this paper is to demonstrate that concepts from controlled sensing and microeconomics, traditionally applied to human decision-making, can be used to both understand and synthesize the behavior of interacting \bayesianagents~\cite{kaplinanddeen,bhattcontrolled2020,chamley_rational_2004,quickestchangedetectionitit,krishnamurthy_partially_2016}. We support our theoretical findings with numerical experiments using advanced LLMs for Bayesian inference~\cite{gelman2013bayesian} on real-world data. This paper is written to engage a broad readership, highlighting applications of Bayesian agents in diverse fields, including financial news analysis, e-commerce review evaluation, and online toxicity detection. These examples underscore the flexibility of our methodologies for cross-disciplinary applications. The reproducible code for our experiments is publicly accessible at github.com/aditj/sociallearningllm.

\subsection{Motivation}
LLM agents (\bayesianagents) are being rapidly deployed for different applications~\cite{whitepaperagents}, and to quote Sam Altman, CEO of OpenAI (creators of ChatGPT, a popular LLM which has 200 million weekly active users):  ``in 2025, we may see the first AI agents join the workforce''~\cite{samaltman}. 

\bayesianagents\ use a large language model (LLM) to parse the input and have additional functionality to perform tasks. LLMs (such as ChatGPT and LLaMA) are neural networks with billions of parameters trained on trillions of tokens of textual data to parse long texts for summarizing, compiling key facts, and generating new text. The key technical improvement that leads to the efficient deployment of LLMs is the transformer architecture~\cite{10.5555/3295222.3295349}. The effectiveness of LLMs on textual texts has made their deployment and adoption widespread~\cite{min_recent_2023}. Many applications have been proposed in healthcare, online platform moderation, and finance, where these LLMs are used to parse the textual observations and suggest decisions based on their outputs~\cite{li_pre-trained_2022}. 
\begin{figure*}[ht!]
    \centering
    \tikzset{every picture/.style={line width=0.75pt}} 
\resizebox{0.7\textwidth}{!}{

\tikzset{every picture/.style={line width=0.75pt}} 

\begin{tikzpicture}[x=0.75pt,y=0.75pt,yscale=-1,xscale=1]

\draw    (242,105.5) -- (242,311.5) ;
\draw  [fill={rgb, 255:red, 218; green, 226; blue, 70 }  ,fill opacity=0.89 ] (342.81,42.13) .. controls (342.81,33.8) and (349.56,27.04) .. (357.89,27.04) .. controls (366.22,27.04) and (372.98,33.8) .. (372.98,42.13) .. controls (372.98,50.46) and (366.22,57.21) .. (357.89,57.21) .. controls (349.56,57.21) and (342.81,50.46) .. (342.81,42.13) -- cycle ;
\draw   (46,196) -- (116,196) -- (116,242) -- (46,242) -- cycle ;
\draw   (143,195) -- (213,195) -- (213,245) -- (143,245) -- cycle ;
\draw   (33,183) -- (228,183) -- (228,256) -- (33,256) -- cycle ;
\draw  [color={rgb, 255:red, 0; green, 0; blue, 0 }  ,draw opacity=1 ][fill={rgb, 255:red, 189; green, 16; blue, 224 }  ,fill opacity=0.55 ] (282,168.5) -- (451,168.5) -- (451,206.27) -- (282,206.27) -- cycle ;
\draw   (288,246) -- (454,246) -- (454,305) -- (288,305) -- cycle ;
\draw  [color={rgb, 255:red, 0; green, 0; blue, 0 }  ,draw opacity=1 ][fill={rgb, 255:red, 189; green, 16; blue, 224 }  ,fill opacity=0.55 ] (514,193) -- (582,193) -- (582,241) -- (514,241) -- cycle ;
\draw  [color={rgb, 255:red, 0; green, 0; blue, 0 }  ,draw opacity=1 ][fill={rgb, 255:red, 189; green, 16; blue, 224 }  ,fill opacity=0.55 ] (597,193) -- (665,193) -- (665,241) -- (597,241) -- cycle ;
\draw  [color={rgb, 255:red, 0; green, 0; blue, 0 }  ,draw opacity=1 ][fill={rgb, 255:red, 189; green, 16; blue, 224 }  ,fill opacity=0.55 ] (514,133) -- (582,133) -- (582,181) -- (514,181) -- cycle ;
\draw  [color={rgb, 255:red, 0; green, 0; blue, 0 }  ,draw opacity=1 ][fill={rgb, 255:red, 189; green, 16; blue, 224 }  ,fill opacity=0.55 ] (597,133) -- (665,133) -- (665,181) -- (597,181) -- cycle ;
\draw  [fill={rgb, 255:red, 218; green, 226; blue, 70 }  ,fill opacity=0.89 ] (341,56.21) -- (375.39,56.21) -- (375.39,90) -- (341,90) -- cycle ;
\draw   (369.37,210.94) -- (369.55,229.61) -- (374,229.57) -- (365.22,242.1) -- (356.21,229.74) -- (360.66,229.7) -- (360.48,211.03) -- cycle ;
\draw    (361,91) -- (361,103) ;
\draw [shift={(361,105)}, rotate = 270] [color={rgb, 255:red, 0; green, 0; blue, 0 }  ][line width=0.75]    (10.93,-3.29) .. controls (6.95,-1.4) and (3.31,-0.3) .. (0,0) .. controls (3.31,0.3) and (6.95,1.4) .. (10.93,3.29)   ;
\draw    (341,77) -- (140,77) -- (140,104) ;
\draw [shift={(140,106)}, rotate = 270] [color={rgb, 255:red, 0; green, 0; blue, 0 }  ][line width=0.75]    (10.93,-3.29) .. controls (6.95,-1.4) and (3.31,-0.3) .. (0,0) .. controls (3.31,0.3) and (6.95,1.4) .. (10.93,3.29)   ;
\draw    (374,76) -- (599,76) -- (599,96) ;
\draw [shift={(599,98)}, rotate = 270] [color={rgb, 255:red, 0; green, 0; blue, 0 }  ][line width=0.75]    (10.93,-3.29) .. controls (6.95,-1.4) and (3.31,-0.3) .. (0,0) .. controls (3.31,0.3) and (6.95,1.4) .. (10.93,3.29)   ;
\draw    (583,157) -- (595,157) ;
\draw [shift={(597,157)}, rotate = 180] [color={rgb, 255:red, 0; green, 0; blue, 0 }  ][line width=0.75]    (10.93,-3.29) .. controls (6.95,-1.4) and (3.31,-0.3) .. (0,0) .. controls (3.31,0.3) and (6.95,1.4) .. (10.93,3.29)   ;
\draw    (582,218) -- (594,218) ;
\draw [shift={(596,218)}, rotate = 180] [color={rgb, 255:red, 0; green, 0; blue, 0 }  ][line width=0.75]    (10.93,-3.29) .. controls (6.95,-1.4) and (3.31,-0.3) .. (0,0) .. controls (3.31,0.3) and (6.95,1.4) .. (10.93,3.29)   ;
\draw    (597,181) -- (583.56,191.75) ;
\draw [shift={(582,193)}, rotate = 321.34] [color={rgb, 255:red, 0; green, 0; blue, 0 }  ][line width=0.75]    (10.93,-3.29) .. controls (6.95,-1.4) and (3.31,-0.3) .. (0,0) .. controls (3.31,0.3) and (6.95,1.4) .. (10.93,3.29)   ;
\draw    (118,219.5) -- (141,219.5) ;
\draw [shift={(143,219.5)}, rotate = 180] [color={rgb, 255:red, 0; green, 0; blue, 0 }  ][line width=0.75]    (10.93,-3.29) .. controls (6.95,-1.4) and (3.31,-0.3) .. (0,0) .. controls (3.31,0.3) and (6.95,1.4) .. (10.93,3.29)   ;
\draw   (593,243) -- (592.91,257.6) -- (596.48,257.62) -- (589.28,267.31) -- (582.19,257.54) -- (585.76,257.56) -- (585.85,242.96) -- cycle ;
\draw    (488,104.5) -- (488,314.5) ;
\draw   (136.16,157.39) -- (136.07,171.99) -- (139.65,172.01) -- (132.44,181.7) -- (125.35,171.93) -- (128.93,171.95) -- (129.02,157.35) -- cycle ;
\draw   (138.16,260.39) -- (138.07,274.99) -- (141.65,275.01) -- (134.44,284.7) -- (127.35,274.93) -- (130.93,274.95) -- (131.02,260.35) -- cycle ;

\draw (58,205) node [anchor=north west][inner sep=0.75pt]  [font=\Large] [align=left] {LLM};
\draw (149,204) node [anchor=north west][inner sep=0.75pt]  [font=\small] [align=left] {\begin{minipage}[lt]{43.04pt}\setlength\topsep{0pt}
\begin{center}
Bayesian \\Engine
\end{center}

\end{minipage}};
\draw (35,109) node [anchor=north west][inner sep=0.75pt]  [font=\normalsize] [align=left] {{\small \textbf{LLMA as a Sensing Mechanism}}};
\draw (337,174) node [anchor=north west][inner sep=0.75pt]  [font=\Large] [align=left] {LLMA};
\draw (294,256) node [anchor=north west][inner sep=0.75pt]  [font=\normalsize] [align=left] {{\small Rational Inattention Cost \ \ \ \ }\\{\small Bayesian Utility}};
\draw (519,144) node [anchor=north west][inner sep=0.75pt]  [font=\Large] [align=left] {LLMA};
\draw (603,144) node [anchor=north west][inner sep=0.75pt]  [font=\Large] [align=left] {LLMA};
\draw (519,206) node [anchor=north west][inner sep=0.75pt]  [font=\Large] [align=left] {LLMA};
\draw (603,205) node [anchor=north west][inner sep=0.75pt]  [font=\Large] [align=left] {LLMA};
\draw (490,102) node [anchor=north west][inner sep=0.75pt]  [font=\normalsize] [align=left] {\begin{minipage}[lt]{150.94pt}\setlength\topsep{0pt}
\begin{center}
{\small \textbf{Bayesian Social Learning and Stochastic Control}}
\end{center}

\end{minipage}};
\draw (385.01,36.19) node [anchor=north west][inner sep=0.75pt]  [font=\large] [align=left] {\textbf{LLMA Engineer }};
\draw (528,271) node [anchor=north west][inner sep=0.75pt]  [font=\small] [align=left] {\begin{minipage}[lt]{94.58pt}\setlength\topsep{0pt}
\begin{center}
Interpretable \& Robust\\Bayesian Inference
\end{center}

\end{minipage}};
\draw (81,138) node [anchor=north west][inner sep=0.75pt]  [font=\small] [align=left] {\begin{minipage}[lt]{71.6pt}\setlength\topsep{0pt}
\begin{center}
Text Observation
\end{center}

\end{minipage}};
\draw (88,289) node [anchor=north west][inner sep=0.75pt]  [font=\small] [align=left] {\begin{minipage}[lt]{62.42pt}\setlength\topsep{0pt}
\begin{center}
State Estimate
\end{center}

\end{minipage}};
\draw (252,106.5) node [anchor=north west][inner sep=0.75pt]  [font=\normalsize] [align=left] {\begin{minipage}[lt]{154.22pt}\setlength\topsep{0pt}
\begin{center}
\textbf{{\small Reconstructed Interpretable Model}}\\{\small using }\\{\small Bayesian Revealed Preferences}
\end{center}

\end{minipage}};

\end{tikzpicture}
}
    \vspace{-5mm}
    \caption{Engineering with \bayesianagentstext\ (\bayesianagents): We propose engineering with LLMs on three different levels: a) First, we construct \bayesianagents\ with an LLM attached to the Bayesian engine. The LLM acts as a sensor for the text input and outputs interpretable low-dimensional outputs, which are used by the Bayesian engine to produce a state estimate. b) Second, we formulate necessary and sufficient conditions for a \bayesianagents\ to be a rationally inattentive Bayesian utility maximizer (\ribum). We also present algorithms to reconstruct feasible utilities and rational inattention costs if the \bayesianagent\ is indeed a \ribum, attributing the \bayesianagent\ with an interpretable microeconomic model. c) Finally, we demonstrate how a sequence of \bayesianagents\ can efficiently perform sequential Bayesian social learning by controlling their outputs to delay herding optimally. Our Bayesian social learning models can be extended to study Bayesian social learning in a network of \bayesianagents. }
    \label{fig:engineer}
\end{figure*}
In many tasks, the outputs of the LLMs are often part of a more extensive pipeline; for example, the output of the LLMs, either in a specified format or as embeddings, is frequently used as inputs to other Bayesian entities, including classifiers~\cite{minaee_deep_2021}. The Bayesian framework also becomes essential in applications where the LLMs have to output decisions and need to provide confidence in the decision output. Thus, it is of interest to study a single Bayesian agent that uses the LLM to parse text observations, update its Bayesian belief, and take action. This paper studies such entities and refers to them as \textbf{L}arge \textbf{L}anguage \textbf{M}odels \textbf{A}gents (\bayesianagents).
Constructing interpretable models for LLMAs is crucial to understanding and controlling their interaction.  
\subsubsection{Interacting \bayesianagentstext}

It is predicted that by 2030, 90\% of web content will be generated by large language models (LLMs)~\cite{yahooOnlineContent}. In recent practical implementations, individual LLMs are part of a bigger system, referred to as LLMAs, and interact with the content generated by other LLMAs and the external environment~\cite{zhou2023agents}.  Furthermore, recent research has shown how generative models are trained on the data generated by other generative models can collapse~\cite{shumailov2024ai}. Therefore, naturally, LLMAs interact with each other either implicitly or explicitly.

Hence, controlling the dynamics of interacting LLMAs is essential to improve the accuracy and trustworthiness of decisions by LLMAs. To the best of our knowledge, only a few recent works systematically study the behaviors of LLMAs using tools from microeconomics and signal processing~\cite{jain2024identifying}. This study aims to bridge this gap by systematically reviewing LLMAs and the different mathematical frameworks by studying Bayesian social learning in a sequence of LLMAs to achieve Bayesian inference. 

\subsubsection{Interpretable Engineering of \bayesianagents} 
Many different third-party services have already started providing various kinds of \bayesianagents\ as a service, including Agentforce by Salesforce and IBM AI agents~\cite{agentforce}. The underlying intelligence engine of these third-party agents is an LLM or a vision language model (VLM). The \bayesianagents\ are used in personal applications for coding, shopping, and scraping data and in enterprise applications for getting insights on user activity and automating industrial workflows. 

Therefore, it becomes imperative to study interpretable models for these agents since many of the proposed applications for these agents involve sensitive information (like personal records, financial information, bio-medical data, and personal preferences). By interpretable, we refer to 
\textit{ models that facilitate a transparent understanding of complex models through clear and explainable representations of their decision-making processes.}
The workflows of the AI agents also include making decisions, and the interpretability and reliability of these agents become vital for them to be trustworthy. Therefore, mathematical models are needed to aid in engineering and deploying \bayesianagents. To this end, we propose a \bayesianagent\ composed of an LLM and a Bayesian engine, which by construction is interpretable. Further, we use Bayesian revealed preferences\footnote{The framework of Bayesian revealed preferences is also referred to as inverse optimization or inverse reinforcement learning.} to reconstruct a Bayesian utility function for both our constructed \bayesianagent\ and for off-the-shelf \bayesianagents.  

\subsubsection{Bayesian Inference from Multi-Modal Data Stream}
In various applications, like online e-commerce platforms, video streaming platforms, and social networks, there is a rich stream of multimodal data available using text, images, and videos. Different inference tasks involve fusing information from various data streams to get actionable insights. With the recent progress in deep learning, many of the traditional signal processing methods are being replaced with contemporary methods that use LLMs and VLMs. However, just using static models is not sufficient to model the dynamics of real-life settings, e.g. on online platforms, and underlying dynamics are better modeled in a Bayesian framework. Therefore, motivated by practical applications, we propose the construction of  \bayesianagents\ which can perform Bayesian inference sequentially on a data stream. This complements continual learning, which deals with continually learning new tasks without forgetting what was learned previously~\cite{continuallearning}.  

\subsection{Main Results}
\begin{figure*}
    \centering
    \tikzset{every picture/.style={line width=0.75pt}} 
\resizebox{0.9\textwidth}{!}{
\begin{tikzpicture}[x=0.75pt,y=0.75pt,yscale=-1,xscale=1]

\draw    (100.67,67) -- (100.67,160.33) ;
\draw    (273.33,62.33) -- (274,212) ;
\draw    (100,110.33) -- (138,110.33) ;
\draw    (100.67,160.33) -- (138.67,160.33) ;
\draw    (273.33,91) -- (311.33,91) ;
\draw    (272.67,146.33) -- (310.67,146.33) ;
\draw    (272.67,195) -- (310.67,195) ;
\draw    (458.67,61) -- (458.67,219) ;
\draw    (458.67,89.67) -- (496.67,89.67) ;
\draw    (458,145) -- (496,145) ;
\draw    (458,186.33) -- (496,186.33) ;
\draw    (458.67,219) -- (496.67,219) ;

\draw (142.67,86.67) node [anchor=north west][inner sep=0.75pt]   [align=left] {{\footnotesize \textbf{Sec. 3}: LLM Agent }\\{\footnotesize as a Sensor}};
\draw (142,136.67) node [anchor=north west][inner sep=0.75pt]   [align=left] {{\footnotesize \textbf{Sec. 4}: LLM Agent as a }\\{\footnotesize Rationally Inattentive }\\{\footnotesize Bayesian Utility }\\{\footnotesize Maximizer (RIBUM)}};
\draw (89.33,26) node [anchor=north west][inner sep=0.75pt]   [align=left] {{\footnotesize \textbf{Part 1}}\\{\footnotesize \textbf{Individual LLM Agent }}};
\draw (253.33,24.67) node [anchor=north west][inner sep=0.75pt]   [align=left] {{\footnotesize \textbf{Part 2}}\\{\footnotesize \textbf{Network of LLM Agents }}};
\draw (313.33,72) node [anchor=north west][inner sep=0.75pt]   [align=left] {{\footnotesize \textbf{Sec. 5: } Sequential }\\{\footnotesize Bayesian Social Learning}};
\draw (314.67,128) node [anchor=north west][inner sep=0.75pt]   [align=left] {{\footnotesize \textbf{Sec. 6}: Word-of-Mouth}\\{\footnotesize Bayesian Social Learning}};
\draw (314,176.67) node [anchor=north west][inner sep=0.75pt]   [align=left] {{\footnotesize \textbf{Sec. 6: }Asynchronous}\\{\footnotesize Bayesian Social Learning}};
\draw (448,23.33) node [anchor=north west][inner sep=0.75pt]   [align=left] {{\footnotesize \textbf{Part 3}}\\{\footnotesize \textbf{Stochastic Control for delaying Herding in LLM Agents }}};
\draw (500.67,70) node [anchor=north west][inner sep=0.75pt]   [align=left] {{\footnotesize \textbf{Sec. 7:} Optimal Stopping for }\\{\footnotesize Centrally Controlled LLM Agents}};
\draw (500.67,126.67) node [anchor=north west][inner sep=0.75pt]   [align=left] {{\footnotesize \textbf{Sec. 8: }Optimal Stopping for }\\{\footnotesize Incentivized Autonomous LLMAs}};
\draw (499.33,176) node [anchor=north west][inner sep=0.75pt]   [align=left] {{\footnotesize \textbf{Sec. 9:} Stochastic Approximation for Optimal Policy }};
\draw (498,209.33) node [anchor=north west][inner sep=0.75pt]   [align=left] {{\footnotesize \textbf{Sec. 10:} Numerical Results on Bayesian inference }\\{\footnotesize Product Quality and Hate Speech Peddler Identification}};

\end{tikzpicture}}
    \caption{Organization of the paper: The paper is divided into three parts. Part 1 deals with interpretable models for an individual LLM agent. Part 2 extends the models to a social learning setting where LLM agents interact with each other to perform Bayesian inference. Part 3 proposes stochastic control methods to delay herding in a sequence of LLM agents.}
    \label{fig:organization}
\end{figure*}
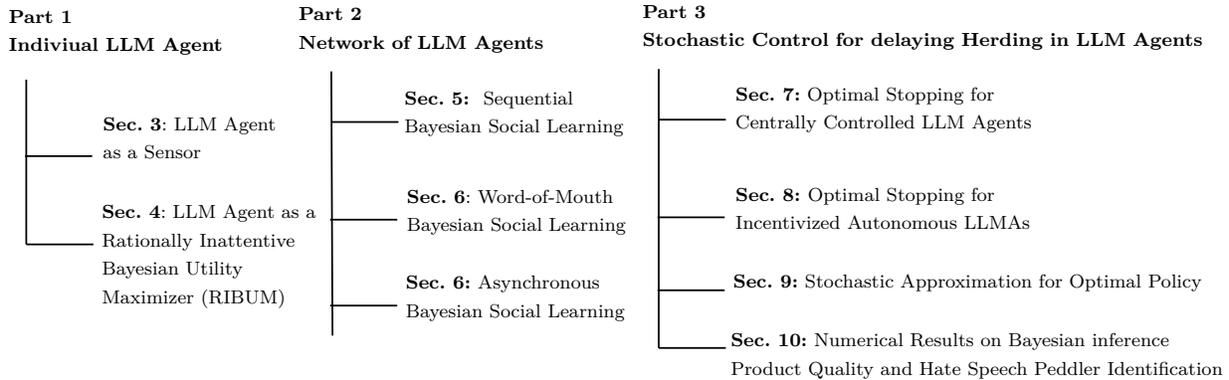
This paper builds on Bayesian revealed preferences from microeconomics (inverse reinforcement learning), sequential Bayesian estimation (from signal processing), and structured stochastic control to construct interpretable models and synthesize interaction of \bayesianagents. The impact of our results on more efficient, systematic, and interpretable engineering of \bayesianagents\ is summarized in Figure~\ref{fig:engineer}. The main contributions of this paper are:
\begin{enumerate}
    \item We propose constructing a \bayesianagent\ as a composition of a large language model (LLM) sensor\footnote{We consider LLMs as social sensors and not physical sensors since they do not sense physical quantities like temperature but virtually analyze text and multimodal data to provide observations.}, which acts as a low-dimensional map from the text space, and a Bayesian engine, which uses the measurement from the LLM to update the posterior and act optimally. We show how this model is useful for interpretable Bayesian inference with applications in sequential data on online platforms. 
    \item To obtain an interpretable utility function for a \bayesianagent, we provide necessary and sufficient conditions in Theorem~\ref{th:nscribum} for a \bayesianagent\ to be a rationally inattentive Bayesian utility maximizer (\ribum). For a \bayesianagent\ who is a \ribum, we propose Algorithm~\ref{alg:maxmargin} and Algorithm~\ref{alg:sparse} to reconstruct the max-margin and sparsest utility estimate, respectively. Our methods are applicable to both our \bayesianagent\ and off-the-shelf \bayesianagents.
    \item We study Bayesian social learning in a \bayesianagents, sequentially estimating a state given text observations and in Theorem~\ref{th:herding} show that such a sequence of \bayesianagents\ form an information cascade and herd in their actions. We show that this is true for both when no private observations are shared and when a finite number of private observations are shared. Further, we provide a detailed analysis of the effect of the quality of results from LLM of the \bayesianagent\ and the number of private observations. 
    \item To delay herding in a sequence of \bayesianagents, we formulate an optimal stopping problem for two regimes: 
    
    a) when the \bayesianagents\ are centrally controlled by an entity b) when the \bayesianagents\ are autonomous but are incentivized by an entity. We show in Theorem~\ref{th:threshold} and Theorem~\ref{th:fusioncenter} that under certain assumptions on the observation matrix and cost functions, the optimal policy for the partially observed Markov decision process of both the optimal stopping problems has a threshold structure. We then propose a policy gradient algorithm in Algorithm~\ref{alg:stochasticapprox}, which exploits the structural results to estimate the optimal policy parameters. The algorithm does not need access to the system parameters, is computationally efficient, and can track changes in the system.
    \item We finally present several numerical experiments to demonstrate the efficacy of our proposed methods. We show how our constructed \bayesianagent\ can be used for interpretable Bayesian inference for analyzing financial data. We show how the Bayesian revealed preferences framework can estimate the utility of an off-the-shelf LLM when used for hate-speech detection. Finally, we show numerical studies on two examples of sequential Bayesian inference: hate speech peddler identification and product quality analysis, to demonstrate herding of \bayesianagents, and the applicability of our structural results. 
\end{enumerate}

To summarize, this paper attempts to answer the following questions with respect to interacting LLM Agents,
\begin{enumerate}
\item {How can \bayesianagents\ be constructed so that they can be used for sequential Bayesian inference such that the observation and outputs are interpretable?}
\item {What is a principled approach to analyze whether a \bayesianagent\ is a Bayesian utility maximizer and also reconstructs its utility function given only blackbox access?}
\item {How does one systematically study Bayesian social learning in multiple interacting \bayesianagents\ to explain observed behaviors such as herding and model collapse?  }
    \item {How can herding in (centrally controlled or autonomous) \bayesianagents\ be optimally delayed so that the agents optimally switch between preserving privacy and improving estimation to achieve sequential detection?}
\end{enumerate}
\subsection{Organization}
This paper is organized into three parts, and the schematic of the organization is given in Figure~\ref{fig:organization}. Part I discusses the interpretable model for a single \bayesianagent\ and attempts to answer questions 1 and 2 above. 
Section~\ref{sec:relatedwork} discusses the related work in large language models, agents using LLMs, and current interpretable models for Bayesian inference. Section~\ref{sec:llmasensors} discusses the mathematical model used for modeling \bayesianagents\ in this paper and motivates the different components involved. Section~\ref{sec:ribum} gives the necessary and sufficient conditions for the \bayesianagents\ to be rationally inattentive Bayesian utility maximizers (\ribum). It further proposes algorithms to estimate the utility function for a \bayesianagent\ which is a \ribum.

Part II discusses interpretable models for interacting \bayesianagents\ and attempts to answer question 3. 
Section~\ref{sec:sociallearning} discusses the mathematical framework of Bayesian social learning in \bayesianagents\ and proves that a sequence of \bayesianagents\ form an information cascade in finite time. Section~\ref{sec:quickesttimeherding} discusses a stochastic control problem for the optimal stopping time problem to achieve quickest time herding with minimal loss to the privacy of \bayesianagents. Section~\ref{sec:wordofmouth}  discusses interpretable models to explain model collapse and data incest in \bayesianagents\ using word-of-mouth and asynchronous social learning. 

To decrease the bias when a sequence of \bayesianagents\ perform Bayesian inference, Part III deals with stochastic control for delaying herding in interacting \bayesianagents\ performing Bayesian sequential learning proves structural results, and proposes a policy gradient approach. 
Section~\ref{sec:incentivizedherding} considers the problem of a central controller optimally optimizing a sequence of autonomous~\bayesianagents\ to achieve the state estimation by optimally controlling herding. Section~\ref{sec:stochasticapproximation} proposes a policy gradient based approach to approximate the optimal policy, which has a threshold switching curve. Numerical results on real-life text classification tasks and related applications are discussed in Section~\ref{sec:numerical}. Section~\ref{sec:conclusion} concludes the paper with discussions on future works, open problems, and research opportunities. The appendix contains the proofs and details about the numerical experiments. 

We emphasize that the paper is built around a coherent framework with the unifying theme of building interpretable models for interacting LLMAs using Bayesian social learning and powerful generative models from behavioral economics.
For the ease of the reader, we have included a motivation and a discussion subsection in each section, which grounds the different aspects of \bayesianagents\ to a real-life application and different microeconomics and statistical signal processing tools presented in the section. We also provide different block diagrams and illustrative examples to further aid the reader. All the results reported in this paper are fully reproducible with code downloadable from github.com/aditj/sociallearningllm. 
\section{Brief Literature Review and Related Work}\label{sec:relatedwork}
In this section, we review related work on LLMs\footnote{In this paper, LLMs also refer to the various transformer architectures that process multi-modal data, including images, audio, and documents.}, \bayesianagents\ and social learning using LLMs. We first provide a brief background on LLMs and discuss the different applications and models for \bayesianagents. We provide motivation for the interpretability of the LLM agents. Finally, we review literature in sequential state estimation setup studied in this paper and provide motivation for using a sequence of contemporary \bayesianagents\ in a classical Bayesian inference setting. We also review applications of sequential Bayesian inference using \bayesianagents. Table~\ref{tab:relatedwork} summarizes some of the related work.

\begin{table}[]
\centering
\resizebox{\columnwidth}{!}{%
\begin{tabular}{@{}cll@{}}
\toprule
\textbf{Area}                             & \textbf{Topic}             & \textbf{Papers} \\ \midrule
\multirow{2}{*}{Large Language Models}    & Survey                     &  \cite{kaddour_challenges_2023,llmaccesssurvey}               \\
                                          & Applications               & \cite{llmsgraphs,llmsongames,chatgpteducation,inforetrieval}                \\\midrule
\multirow{3}{*}{LLM Agents (LLMAs)}               & Survey                     &  \cite{ijcai2024p890,li2024personalllmagentsinsights,li2024finmem,hsu2023helpinghelpersupportingpeer}               \\
                                          & Applications               &   \cite{zhou2023agents,llmautomation,suri_software_2023,webshop}              \\ 
                                          & Networks                &      \cite{sumers2024cognitive,zhuge2023mindstormsnaturallanguagebasedsocieties,liu2024a,wang2023jarvis1,chuang2024simulatingopiniondynamicsnetworks,han2024regulatormanufactureraiagentsmodeling,marchi2024heatdeathgenerativemodels}              \\\midrule
\multirow{3}{*}{Bayesian Social Learning} & General                    &  \cite{bhattcontrolled2020,quickestchangedetectionitit,chamley_rational_2004,duan_flocks_nodate}               \\
                                          & LLMs                       &   \cite{mohtashami_social_2024,llmsgraphs,NEURIPS2023_1190733f,chevalier2023adapting}              \\
                                          & Incentivization            &     \cite{kaddour_challenges_2023,agentforce}      \\\midrule
\multirow{4}{*}{Interpretability}         & Mechanistic                &         \cite{brinkmann-etal-2024-mechanistic,nanda2023progress}        \\
                                          & Glass Box Models           &         \cite{singh_augmenting_2023}        \\
                                          & blackbox Models           &         \cite{pattanayak,JMLR:v24:20-1202}        \\
                                          & Sociological Fairness      &         \cite{biasfairness,anthis2024impossibilityfairllms,accessxai,operationalizingresponsibleAI}        \\
                                \midrule
\multirow{3}{*}{\begin{tabular}[c]{@{}c@{}}Applications of LLMAs \\ in Sequential State Estimation\end{tabular}} & Product Quality Identification & \cite{Liu2024,sentimentanalysis} \\
                                          & Hate Speech Classification &    \cite{jain2024identifying,sachdeva_measuring_2022}             \\
                                          & Recommendation             &      \cite{recommendationllms,ni-etal-2019-justifying}           \\ \cmidrule(l){1-3} 
\end{tabular}%
}
\caption{Summary of Related Literature studying \bayesianagents\ and their interaction: There has been work in engineering, sciences, and economics, which motivates a careful study.}
\label{tab:relatedwork}
\end{table}
\subsection{Background on LLMs}
Large language models (LLMs) have become omnipresent in various industry applications, given the drastic improvement in compute availability and rapid development of open and closed-sourced models~\cite{llmaccesssurvey}. They are being rapidly deployed for various applications, including education, information retrieval, gaming, recommendation systems, and understanding graphs~\cite{chatgpteducation,inforetrieval,llmsgraphs,llmsongames}. 

The primary reason for the proliferation of LLMs is that they can take a high-dimensional multi-modal (text, images, audio etc.) data input and provide useful inferences about them. This is possible because they are deep learning networks (transformer architecture) with billions of parameters trained on massive amounts (in the order of petabytes) of data (CommonCrawl, etc.) using extremely fast GPUs that can parallelize computations efficiently. There are two different classes of LLMs: a) open source LLMs like LLaMA and Mistral~\cite{jiang2023mistral7b,touvron2023llamaopenefficientfoundation} and b) closed source LLMs like ChatGPT and Claude. Open source LLMs make available the underlying deep learning architecture that they use, some even share the data that the LLM is trained on; closed-source LLMs on the other hand only provide an inferencing interface, where the LLM can be asked different questions, the answer to which is provided by the LLM. The methods and framework of our paper only require blackbox access to these LLMs and are generally applicable to both classes. 

Although the adaption of LLMs is rapidly increasing, from a safety and reliability perspective, their deployment in sensitive applications like healthcare, finance, and defense still poses challenges~\cite{llmaccesssurvey}. Even in general-purpose applications, LLM-based chatbots provide spurious information, a phenomenon referred to as hallucination~\cite{llmaccesssurvey}. There have been different approaches to ensure that the outputs of the LLMs are reliable and interpretable, and many of the challenges specific to the new paradigm require revisiting traditional interpretability literature~\cite{singh2024rethinkinginterpretabilityeralarge}. As reviewed in~\cite{zhang2024llm}, one of the approaches is to improve the reasoning capabilities of LLMs so that the LLMs provide a descriptive reason for the output. Another school of thought is to mechanistically understand the transformer architecture and training procedure that is the backbone of an LLM~\cite{brinkmann-etal-2024-mechanistic,nanda2023progress} to better understand the working and eliminate the sources of bias, if possible.

\subsection{LLM Agents}
Standalone LLMs are powerful tools for many applications, but there is strong motivation to consider networked LLMs as part of complex systems and integrate them into existing workflows. These networked LLMs are often given \textit{functionality} to interact with the virtual environment and are referred to as LLM agents (\bayesianagents)~\cite{ijcai2024p890}. There are two distinct features that \bayesianagents\ have that make them different from LLMs: 
\begin{enumerate}
    \item \textit{Decisions}: In the workflows that the \bayesianagents\ are used in, they are provided functionality (agency) using different mechanisms, including function calls
    \item \textit{Communication}: The \bayesianagents\ are allowed to communicate with other \bayesianagents, to exchange information. Often, tasks are also broken into smaller sub-tasks and are performed parallelly and sequentially by different \bayesianagents\ leading to different topologies of \bayesianagents~\cite{li2024personalllmagentsinsights}. 
\end{enumerate}

\subsubsection{Applications of LLM Agents}
An important application \bayesianagents\ are used for is programming, primarily because of LLMs ability to generate code given a text prompt.   \bayesianagents\ are used to automate different parts of the software lifecycle, including development, deployment, testing, and fixing bugs~\cite{li2024personalllmagentsinsights}. Other applications propose using \bayesianagents\ in healthcare for counselling~\cite{hsu2023helpinghelpersupportingpeer}, financial trading~\cite{li2024finmem}, automating customer service~\cite{agentforce} and shopping assistants~\cite{webshop}. 
\subsubsection{Models for LLM Agents}
The different components used in the standard model of a \bayesianagent\ include a memory, retrieval mechanism, action sets, and an environment. 
 ~\cite{sumers2024cognitive} studies different cognitive architectures using these components for \bayesianagents. There has been a lot of work to improve the capabilities of these components in \bayesianagents, using a dynamic  context~\cite{chevalier2023adapting} and using self notes to perform continual learning~\cite{lanchantin2023learning}. However, we propose augmenting the LLM with a  Bayesian engine to perform sequential Bayesian inference on a stream of data. The Bayesian engine model proposed in this paper can also be used for more general tasks, as we discuss in the conclusion. 

\subsubsection{Networks of Agents}
 Since many of the LLM agents interact with other agents directly or through content generated by them, there is a need for more systematic and mathematically rich blackbox models for LLMs and LLM agents (LLMAs). Such models help understand their behavior and eventually control it to ensure reliability. Often, the collaboration of \bayesianagents\ are modeled as a directed graph~\cite{ijcai2024p890,chuang2024simulatingopiniondynamicsnetworks,han2024regulatormanufactureraiagentsmodeling}, which ~\cite{liu2024a} proposes dynamically adapting depending on the task. There are various different programmatic frameworks where the \bayesianagents\ can be abstractly programmed to perform different tasks~\cite{wang2024voyager,wang2023jarvis1}. Some of these frameworks even allow making these agents autonomous~\cite{zhou2023agents}. The methods in this paper deal with a line graph topology of \bayesianagents, which perform sequential Bayesian estimation. The setup studied in this paper can be extended to more general graph structures, and different issues such as data incest can be studied.


\subsection{Bayesian social learning in a sequence of LLM Agents}

\subsubsection{Multiple Bayesian agents sequentially estimating a state}
The interaction of multiple such Bayesian agents, each receiving a private observation, is motivated by privacy, improved detection, and finite context length. If the same private observation (even the low-dimensional representation) is used, the LLM can be fine-trained on this data, which might contain sensitive information~\cite{duan_flocks_nodate}. Also, different LLMs can be given a diverse set of contexts, which enables reducing the bias involved with their decisions~\cite{kaddour_challenges_2023}. Also, practically due to the finite context length, the observations can be considered private with respect to consecutive \bayesianagents\ evaluations.

\subsubsection{Framework of Bayesian social learning} \change{Recent work has looked at social learning in LLMs using a teacher-student framework}~\cite{mohtashami_social_2024}, but this work was in a static setting where the LLMs do not have a belief that they adaptively update. In general, sequential social learning in Bayesian agents has been studied extensively~\cite{chamley_rational_2004}, and our work formalizes the problem of Bayesian social learning in~\bayesianagents. The theoretical results presented in this paper have been studied before in the context of distributed Bayesian sensors in \cite{quickestchangedetectionitit} and \cite{bhattcontrolled2020}. Compared to~\cite{bhattcontrolled2020,quickestchangedetectionitit}, we view the \bayesianagents\ as interpretable Bayesian sensors and provide a more comprehensive outlook. We also consider multiple observations being shared and provide a concentration inequality (Theorem~\ref{th:submartingale}) for overspending the incentive with respect to a budget constraint. Recently \cite{infocascades}, looked at detecting information cascades using deep learning. Although in this paper we focus on delaying an information cascade to improve estimation accuracy, methods similar to~\cite{infocascades} can be integrated with our approach.

\subsubsection{Incentivization of the \bayesianagents\ by a central controller}
Recent research has studied modeling LLMs as autonomous agents and making LLM part of bigger autonomous agents, including robots, self-driving cars, and programming co-pilots~\cite{cui_drive_2024,suri_software_2023}. Such autonomous agents can be \textit{leased} from third-party services at a unit cost. The incentive can also be looked at from the following perspective: providing more context to the same LLM can lead to a more accurate output~\cite{kaddour_challenges_2023} but increases the cost of processing the query. Third-party \bayesianagents\ often offer a tiered pricing structure, where higher pricing provides access to more accurate LLMs. 

\subsection{Interpretability and Social Fairness}
There has been recent work in augmenting the LLM with an explainable artificial intelligence (xAI~\cite{accessxai}) system to provide more interpretable outputs as in~\cite{singh_augmenting_2023}.  This work is more aligned with the latter, in which we propose using the LLM as a sensor that provides interpretable low-dimensional outputs, used by a Bayesian engine to estimate the state. 

However, we use an interpretable Bayesian model to perform sequential Bayesian inference from text observations of an underlying state, whereas~\cite{singh_augmenting_2023} create an xAI model with decision trees and n-grams models using outputs from an LLM. Another such work is~\cite{sun2024crafting}, where the authors propose training a separate concept neural network that uses the output of an LLM to interpretably classify text embeddings. This approach can complement the work in this paper when the setting is dynamic. Our work uses tools from revealed preferences and social learning to analyze the behavior of individuals and interacting \bayesianagents\ from a microeconomic lens.

Further, our focus is also on designing \bayesianagents, which are safe, reliable, and fair, goals which are aligned with operationalizing responsible AI~\cite{operationalizingresponsibleAI}. This becomes challenging to do since \bayesianagents\ have been known to show biases which are inherent to human beings like conformity~\cite{baltaji-etal-2024-conformity} and bias towards different attributes~\cite{borah2024implicitbiasdetectionmitigation}. These effects will be more prominent when the \bayesianagents\ interact with each other in different scenarios like a Mindstorm \cite{zhuge2023mindstormsnaturallanguagebasedsocieties} or a language model cascades~\cite{gupta2024language}.
Finally, the need to study interpretable models for \bayesianagents\ is motivated by the rise of unified agents~\cite{palo2023towards}, which are a representation of a trend in artificial intelligence that the different models are converging to a single efficient model~\cite{pmlr-v235-huh24a}. This paper therefore tries to systematically understand Bayesian social learning in \bayesianagents, to help prevent undesirable phenomena like model collapse~\cite{shumailov2024ai,marchi2024heatdeathgenerativemodels}.

There has been substantial work in the fairness of the machine learning models~\cite{biasfairness}, and even evaluating large language models for different measures of social fairness~\cite{evaluationsurveyllms}. However,~\cite {anthis2024impossibilityfairllms} recently highlighted how it is extremely difficult to benchmark LLMs on existing fairness metrics because of the way LLMs are used. This becomes even more challenging for \bayesianagents, where the agents further have functionality and can also communicate with other \bayesianagents. Therefore, the focus of our study is to construct interpretable models for \bayesianagents, which can be used to understand the decisions of \bayesianagents. This understanding can help construct more suitable societal fairness metrics. Our work additionally has relatively mild assumptions on the utility/cost function of the \bayesianagents, and hence can be adapted for different sociological costs.

\subsection{Applications of Sequential State Estimation using Bayesian Social Learning in \bayesianagents}
We detail examples of real-life problems where textual observations of the state are available and sequential Bayesian learning in \bayesianagents\ is used to perform state estimation. 
\subsubsection{Hate Speech Peddler Identification On Social Networks }
Identifying hate speech\footnote{There is an active debate on the definition of hate speech and the tradeoff between free speech and hate speech~\cite{howard_free_2019}. Hence, to circumvent this discussion, we use hate speech as an exemplary case study of our methods, and the definition of hate speech is implicit from the source of the dataset in the experiments. Our techniques can be applied to different definitions of hate speech and other applications as is described later.} and toxic content has been studied in various contexts, e.g., in reducing unintended bias, detecting covert hate speech, and mitigating hate speech on online platforms~\cite{kennedy_contextualizing_2020}. \cite{sachdeva_measuring_2022} have looked at how to quantify the intensity of hate speech and created labeled datasets. In~\cite{jain2024controlling}, the authors looked at controlling federated learning for hate speech classification. In this paper, we look at the problem of Bayesian agents identifying hate speech peddlers by sequentially parsing comments from users using an LLM.

\subsubsection{Financial Networks } In financial networks, LLMs can be used as sensors to parse textual information, including news articles, opinions on social networks, and financial reports. This can be especially useful for making decisions based on the low-dimensional observations from the LLMs. This process can be automated using \bayesianagents, based on algorithmic rules~\cite{li2024finmem}. But, since the actions of \bayesianagents\ of a single entity affect the environment (market), such a sequence of agents can herd in their decisions, leading to a financial bubble. This has been studied classically in human traders~\cite{chamley_rational_2004} and makes the study of sequential Bayesian learning in \bayesianagents\ interesting. 

\subsubsection{Product Quality Identification}
One of the issues on e-commerce platforms is to identify poor quality products early on; however, just using numerical ratings can be uninformative, especially when the number of ratings is less. However, there is a lot of information contained in the descriptive reviews of the product, which one can efficiently extract using LLMs. Therefore, a sequence of \bayesianagents\ can efficiently analyze the reviews of a product to identify if the product is of good quality or not. This is an extended and sophisticated version of opinion mining, which has shown to be effective in sentiment analysis on online platforms~\cite{sentimentanalysis}. 

\subsubsection{Personalized Recommendation Systems}
Another primary application of LLMs is in recommendation systems~\cite{recommendationllms}, where the LLMs act as a natural language interface between the user and the item (e.g., movies, products) database. \bayesianagents\ can be further used to enhance the recommendation quality by analyzing the past activity of the user and of the user's social network. However, using \bayesianagents\ directly raises privacy concerns since the users information is sensitive. This paper proposes one way to deal with this, by ensuring that each \bayesianagents\ has a different private observation. 

\subsection{Comparision To Prior Work}

\change{
The primary focus of this paper is to employ ideas from statistical signal processing and microeconomic theory to model and analyze interacting LLMAs. However, given the domain-specific constraints that are inherent to LLMAs, we claim the following modeling and theoretical innovation over prior work in Bayesian social learning and interpretable utility reconstruction: a) There is a hierarchical observation structure inherent to the Bayesian sensor model of an LLMA considered here, whereas the Bayesian revealed preference framework only has a single observation likelihood}~\cite{kaplinanddeen} \change{b) There has been limited research on interpretable models for LLMAs. However, in contrast to the other interpretable machine learning literature, including glassbox models, we reconstruct a utility function and an information acquisition cost given only blackbox access to the LLMA and is more closely aligned to}~\cite{pattanayak}\change{. 
c) We study the effect of sharing more private observations and a better observation likelihood on herding in a classical Bayesian social learning setting}~\cite{chamley_rational_2004,acemoglu_opinion_2011} \change{d) The social learning framework considered here allows for LLMAs with different LLMs and is presented for the specific application of Bayesian inference in contrast to}~\cite{mohtashami_social_2024}.

\vspace{0.5cm}
{\color{accessblue}{\hrule}} \vspace{0.5cm}

{\hspace{-3mm}\Large\color{accessblue} \textbf{Part I: Analyzing a  Single LLM Agent }}
\vspace{0.5cm}

\hspace{-4mm} In Part I of this paper, we consider a single \bayesianagent\ in isolation, where we first construct a Bayesian sensor model for a \bayesianagent\ which comprises an LLM and a Bayesian engine. Then, we look at the \bayesianagent\ as a rationally inattentive Bayesian utility maximizer and propose methods to reconstruct utilities for both our constructed Bayesian \bayesianagent\ and more general \bayesianagent. The motivation for this modeling from this perspective from the self-attention mechanism inherent to LLMs. Part I of this paper comprises of Section~\ref{sec:llmasensors} and Section~\ref{sec:ribum}. 
\section{LLM Agent as a Social Sensor }\label{sec:llmasensors}

\begin{figure*}[t!]
    \centering
    \tikzset{every picture/.style={line width=0.75pt}} 
\resizebox{0.8\textwidth}{!}{

\begin{tikzpicture}[x=0.75pt,y=0.75pt,yscale=-1,xscale=1]

\draw   (276,250) -- (606,250) -- (606,350) -- (276,350) -- cycle ;
\draw   (296,270) -- (426,270) -- (426,330) -- (296,330) -- cycle ;
\draw   (456,270) -- (586,270) -- (586,330) -- (456,330) -- cycle ;
\draw    (606,300) -- (634,300) ;
\draw [shift={(636,300)}, rotate = 180] [color={rgb, 255:red, 0; green, 0; blue, 0 }  ][line width=0.75]    (10.93,-3.29) .. controls (6.95,-1.4) and (3.31,-0.3) .. (0,0) .. controls (3.31,0.3) and (6.95,1.4) .. (10.93,3.29)   ;
\draw    (226,300) -- (274,300) ;
\draw [shift={(276,300)}, rotate = 180] [color={rgb, 255:red, 0; green, 0; blue, 0 }  ][line width=0.75]    (10.93,-3.29) .. controls (6.95,-1.4) and (3.31,-0.3) .. (0,0) .. controls (3.31,0.3) and (6.95,1.4) .. (10.93,3.29)   ;
\draw    (426,300) -- (454,300) ;
\draw [shift={(456,300)}, rotate = 180] [color={rgb, 255:red, 0; green, 0; blue, 0 }  ][line width=0.75]    (10.93,-3.29) .. controls (6.95,-1.4) and (3.31,-0.3) .. (0,0) .. controls (3.31,0.3) and (6.95,1.4) .. (10.93,3.29)   ;
\draw    (226,340) -- (274,340) ;
\draw [shift={(276,340)}, rotate = 180] [color={rgb, 255:red, 0; green, 0; blue, 0 }  ][line width=0.75]    (10.93,-3.29) .. controls (6.95,-1.4) and (3.31,-0.3) .. (0,0) .. controls (3.31,0.3) and (6.95,1.4) .. (10.93,3.29)   ;
\draw    (226,261) -- (274,261) ;
\draw [shift={(276,261)}, rotate = 180] [color={rgb, 255:red, 0; green, 0; blue, 0 }  ][line width=0.75]    (10.93,-3.29) .. controls (6.95,-1.4) and (3.31,-0.3) .. (0,0) .. controls (3.31,0.3) and (6.95,1.4) .. (10.93,3.29)   ;
\draw  [dash pattern={on 0.84pt off 2.51pt}] (30,230) -- (250,230) -- (250,370) -- (30,370) -- cycle ;

\draw (208,291.4) node [anchor=north west][inner sep=0.75pt]    {$y^{'}_{k}$};
\draw (168,300.4) node [anchor=north west][inner sep=0.75pt]    {$x_{k}$};
\draw (185,300.4) node [anchor=north west][inner sep=0.75pt]    {$\sim $};
\draw (307,281) node [anchor=north west][inner sep=0.75pt]   [align=left] {\begin{minipage}[lt]{77.03pt}\setlength\topsep{0pt}
Large Language
\begin{center}
Model
\end{center}

\end{minipage}};
\draw (481,282) node [anchor=north west][inner sep=0.75pt]  [font=\scriptsize] [align=left] {Bayesian Engine\\(e.g. Probabilistic \\Neural Network)};
\draw (430,225) node   [align=left] {\begin{minipage}[lt]{190.23pt}\setlength\topsep{0pt}
\begin{center}
Large \textbf{L}anguage \textbf{M}odel \textbf{A}gent (Rationally Inattentive Utility Maximizer)
\end{center}

\end{minipage}};
\draw (639,291.4) node [anchor=north west][inner sep=0.75pt]    {$a_{k}$};
\draw (434,275.4) node [anchor=north west][inner sep=0.75pt]    {$y_{k}$};
\draw (187,332.4) node [anchor=north west][inner sep=0.75pt]    {$a_{k-1}$};
\draw (207,252.4) node [anchor=north west][inner sep=0.75pt]    {$c_{k}$};
\draw (41,249) node [anchor=north west][inner sep=0.75pt]  [font=\footnotesize] [align=left] {Control \\(System Prompt)};
\draw (41,289) node [anchor=north west][inner sep=0.75pt]  [font=\footnotesize] [align=left] {Private Observation\\(User Prompt)\\ (Text to be analyzed)};
\draw (41,329) node [anchor=north west][inner sep=0.75pt]  [font=\footnotesize] [align=left] {Context\\(Background)};
\draw (31,212) node [anchor=north west][inner sep=0.75pt]   [align=left] {LLM Input (\textcolor[rgb]{0.29,0.56,0.89}{Text Input})};
\draw (611,251) node [anchor=north west][inner sep=0.75pt]   [align=left] {LLM Output};
\draw (611,271) node [anchor=north west][inner sep=0.75pt]   [align=left] {(\textcolor[rgb]{0.29,0.56,0.89}{Sentiment})};

\end{tikzpicture}
}
    \caption{Brief schematic of a large language model agent as a sensing mechanism for Bayesian Inference: LLM Input is composed of the system instruction prompt, which is also the control; the user prompt, which is a private observation; and the in-context examples generated from the previous LLM agents are the past actions. Based on the input, the LLM outputs an intermediate textual output. The Bayesian engine uses a likelihood function and past actions to select an action maximizing the expected utility. If utility function is not explicitly given, Bayesian revealed preference is used to obtain a set-valued estimate using an input-output dataset. The paper discusses variations of this model with application in Bayesian sentiment analysis.   }
    \label{fig:systemmodel}
\end{figure*}
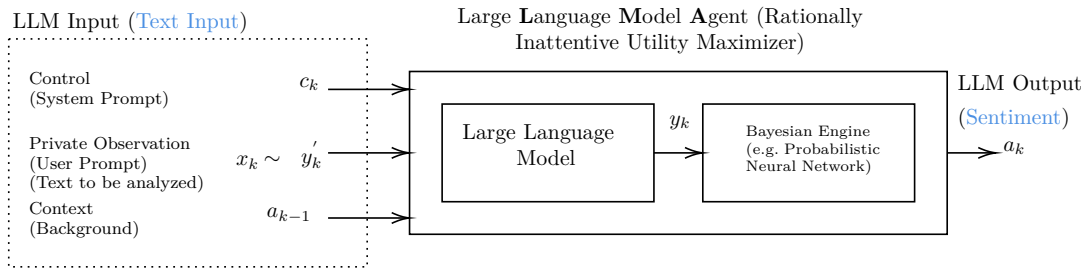
Motivated by interpretable Bayesian inference on online platforms using \bayesianagents\, this section discusses the Bayesian sensor model we consider for a single large language model agent. We propose the model of LLMs as a sensing mechanism as a map from a high dimensional space (e.g., text prompt) to a low dimensional space (e.g., structured outputs). The LLMs are equipped with a Bayesian engine and are referred to as an LLM agent (LLMA), which updates the prior regarding the state to be estimated using the text observations. This proposed model is depicted in Figure~\ref{fig:systemmodel}.

The different aspects of the mathematical model for the LLMAs are discussed, and the utility of a \bayesianagent\ is introduced, which can be reconstructed for a blackbox \bayesianagent\ using the Bayesian revealed preference framework discussed later. We discuss how the framework and the results of the paper can be extended to contemporary models like vision language models (VLMs).
In essence, this section, therefore, shows how \bayesianagent\ acts as a social sensor, which can be applied to sophisticated settings such as online platforms where physical sensors do not work to sense the underlying state from observation obtained by the interaction of humans.

\subsection{Motivation. LLM Agent for Interpretable Sentiment Analysis. }
Since in many contemporary applications, LLMs are used for inferring the underlying state given a text observation; we construct an LLM (which acts as a likelihood function) equipped with a Bayesian engine, both of which act as a \bayesianagent\ to perform Bayesian inference on textual data with applications in sentiment analysis on online platforms~\cite{sentimentanalysis}.  

We further motivate the construction of such a Bayesian sensor model for an LLM from the point of view of interpretability, reliability, and controllability. There has been a lot of work done to improve the interpretability of the output of a standard LLM~\cite{singh2024rethinkinginterpretabilityeralarge}. In the blackbox setting, the approach proposes asking the LLMs to provide a reason in addition to the output~\cite{zhang2024llm}.  This works well in practice for simple applications, however when sequential Bayesian inference needs to be performed on millions of text observations interpreting the reason itself becomes a tedious task. Therefore, we propose using the LLMs as a low-dimensional map from the high-dimensional text space by designing prompts that are useful in analyzing.  The LLMs can either be explicitly controlled using the system prompt or their outputs can be restricted to a certain state. 

A Bayesian engine then uses these low-dimensional variates to update the belief, which is an easier task to do than on the high-dimensional text data due to the curse of dimensionality. This helps in using the \bayesianagent\ in a reliable way since the \bayesianagents\ can provide confidence in the actions they would take given the observations. Such a model of \bayesianagent\ is also controllable with respect to the cost function associated with the Bayesian engine, as we illustrate below. 

\textbf{Notation: }
${\statevar}^\prime$ denotes the transpose of vector $\statevar$.
$\diag(\bf{\statevar}')$ denotes a diagonal matrix with $\bf{\statevar}$ as diagonal entries. Capital letters (e.g. $\observationmatrix$) denote matrices and $\observationmatrix_\observation$ denotes $\observation^{\text{th}}$ row of the matrix. 

\subsection{Bayesian Utility Maximization Model for LLMA}
We consider a \bayesianagent\ composed of a large language model (LLM) and a Bayesian Engine.

 \subsubsection{Abstracting a Large Language Model (LLM) as a sensor.}
First, we give a mathematical model for a general-purpose LLM.   For this paper, we consider blackbox access to the LLM and hence both open-source LLMs like LLaMA3~\cite{touvron2023llamaopenefficientfoundation} and closed-source LLMs like ChatGPT. One of the ways to model a blackbox LLM is as an input-output block. The input to an LLM is a single text prompt, which we decompose into three things: the \textit{system prompt}, which we refer to as \textit{control}, the \textit{context}, and the \textit{user prompt}, which we refer to as \textit{observation}. 

Assume that the dictionary of all words of the LLM (tokens\footnote{Although the input of the LLM is text, in most of the architectures, the text (or more generally multi-modal data) input is first decomposed into different \textit{tokens} and the tokens are processed, but since we consider the LLM as a blackbox, our interpretable models abstract these implementation details.}) is given by $\tokenspace$, and this dictionary also includes the blank word. The time index is given by $\timeindex = 1, 2, \dots$.
A blackbox LLM can be viewed as an input-output block.

 The control or the system prompt is an input to the LLM, often prepended before the in-context examples and the user prompt, which is used to give instructions to the LLMs on how exactly to respond. This is used to control the behavior of the LLM and ensure that the LLM behaves (outputs) as required. We assume a control of length $\lengthsystem$ and at time $\timeindex$ denote the control by $\systemprompt_\timeindex \in \tokenspace^{\lengthsystem}$. 

 Following the system prompt, the next input is a context to the LLM. This context could contain external information and examples that are time-dependent and may depend on previous interactions, which is dynamic and can not be put as a part of the system prompt. We consider a context of length $\lengthcontext$ and at time $\timeindex$ denote it by $\context_\timeindex \in \tokenspace^{\lengthcontext}$.

Finally, the user prompt, which we also refer to as the private observation, is the text sequence to which the LLM is supposed to give a response conditioned on the control $\systemprompt_\timeindex$ and the context $\context_\timeindex$. The private observation at time index $\timeindex$ is given by $\llmobservation_\timeindex \in \observationspace^{'}$. Where $\observationspace^{'}$ is the text observation space and for a maximum length of $\lengthuser$, $\observationspace^{'} = \tokenspace^{\lengthuser}$.

We consider an LLM, which is pre-trained on trillions of tokens of text to autoregressively generate the next-token\footnote{There are other techniques of generating the token, but they can be accommodated by a suitable augmentation in the mathematical formulation presented here.}. For developing a token of length 1, the output of the LLM is given by a conditional probability distribution,
\[\llmprob(\systemprompt_\timeindex,\context_\timeindex,\llmobservation_\timeindex) = \probabilitymeasure(\cdot|\systemprompt_\timeindex,\context_\timeindex,\llmobservation_\timeindex),\]
where $\llmobservation\in\observationspace'$ is the user prompt, $\systemprompt_\timeindex$ is the system prompt and $\context_\timeindex$ is the context. Therefore the function $\llmprob$,
\[\llmprob:\tokenspace^{\lengthsystem}\times\tokenspace^{\lengthcontext}\times\tokenspace^{\lengthuser}\to \probspace(\tokenspace),\] outputs a probability distribution over the dictionary $\tokenspace$. 
For generating an output of an LLM with length $>1$, we consider a function $\llmoutputfn$ which takes in the function $\llmprob$ and outputs tokens from the space $\tokenspace^{\lengthoutput}$, where $\lengthoutput$ is the maximum length of the output.  The output of the LLM, denoted by $\observation$ is obtained as $\observation=\llmoutputfn(\llmprob,\systemprompt_\timeindex,\context_\timeindex,\llmobservation_\timeindex)$. 
Therefore, we can represent a black box LLM with the following tuple,
\begin{align}\label{eq:llm}
    \llm = (\tokenspace,\lengthsystem,\lengthcontext,\lengthuser,\lengthoutput,\llmprob,\llmoutputfn).
\end{align}
Bayesian inference involves estimating an unknown state $\statevar \in \statespace$ where $\statespace$ is the state space using observations of the state. If we are performing Bayesian inference using text observations, LLM can be a powerful tool, as illustrated below in Example~\ref{ex:customer} and Example~\ref{ex:financial}.  The LLM can be directly used to infer an underlying state. In fact, it is already used to do so, as highlighted before in the motivation section. However, we now remark on the challenges with directly using an LLM using the formalism described above: 
\begin{enumerate}
\item Current Large Language Models (LLMs) in production lack the ability to explicitly express confidence in their generated outputs. This limitation significantly hinders their deployment in critical domains like finance and healthcare, where system safety and reliability are paramount.
\item Furthermore, the absence of explicit confidence scores presents a significant challenge for human-in-the-loop systems. These systems rely on the ability to seamlessly integrate human expertise when an LLM's confidence in its response falls below a certain threshold. Without this crucial feedback mechanism, it becomes difficult to effectively leverage human oversight. Finally, exclusive reliance on a single LLM for critical tasks is inherently risky. LLMs are vulnerable to adversarial prompting, where carefully crafted inputs can induce them to generate incorrect or misleading outputs. This vulnerability underscores the need for robust mechanisms to assess and mitigate the risks associated with LLM-based systems.
    \item  For a task where the LLMs are being used to infer the state given a sequential stream of text observations, the LLMs alone do not explicitly make use of the temporal nature of the observations and can not characterize how many of such observations are enough to be sufficiently confident in the estimate of the underlying state. 
\end{enumerate}

Therefore, we propose using the LLM as part of a mechanism (\bayesianagent) where, in addition to the LLM, there is a Bayesian engine that tackles the challenges enlisted above. This ensures that the \bayesianagent\ acts not just based on the Bayesian estimate of the underlying state, making the decisions of the \bayesianagent\ more interpretable. 

To this end, we propose using the LLM as a map from the high-dimensional space $\observationspace'$ to a low-dimensional space $\observationspace$ which takes value from whose cardinality $\observationdim \ll \tokenspace^{\lengthuser}$.  We construct a Bayesian engine composed of two probability distributions, the prior of the state, $\prior \in \probspace(\statespace)$  and the likelihood of a low-dimensional observation $\observationmatrixllm: \statespace \to \probspace(\observationspace) $ and is a conditional probability distribution.  We use $\observationmatrix$ and $\observationmatrixllm$ to denote the observation matrices in the rest of the paper.

From an implementation standpoint, there are different ways we can take to ensure that the output of the LLM is from a low-dimensional space. The low-dimensional output could be a structured output like JSON or a Python dictionary. One way is provide the LLMs with a few examples of the type of outputs we want, for instruction-tuned LLMs this technique has been shown to be very useful~\cite{min_recent_2023}. Next we can restrict the dictionary space of the output and reject text tokens not in the restricted dictionary, this is refered to as restrictive or constrained decoding and has been shown to be very effective~\cite{constrainedgeneration}. Additionally, a lot of LLMs, including ChatGPT, give explicit access to structured outputs (platform.openai.com/docs/guides/structured-outputs).

The following example illustrates how LLMs can be used to map the text space to a low-dimensional space. We discuss how this low-dimensional space can be constructed by an analyst or engineer so that the posterior computed by the Bayesian engine is interpretable. 

\begin{example}\label{ex:customer}
    Consider the text dataset of interactions between a customer-service agent and a customer (either a transcript of a call or a chat interface). A service quality engineer is interested in analyzing whether or not the problem of the customer was resolved. The engineer could use an LLM directly and design a prompt to assess if the issue was resolved. However, there is no interpretability, and if the engineer asks the LLM to give the reason for its answer, then it becomes more difficult for the engineer to analyze. The engineer could, however, design a system prompt so that the LLM answers a specific set of binary (yes-or-no) questions. For example, the engineer could ask the following set of questions: 
    \begin{enumerate}
        \item What part of the product was the user concerned about?
        \item Did the user understand the solution being provided?
        \item How satisfied did the last three messages of the customer seem? (attaches last three messages)
        \item Is the solution the best possible solution for the problem?
    \end{enumerate}
    Note that even if we assume the dictionary has  $|\tokenspace|=100$ words, and the conversation has length $\lengthuser = 500$, the dimension reduction is substantial from $500^{100}$ to $2^{4}$.
\end{example}
\subsubsection{Construction of Bayesian Engine}
Owing to dimensionality reduction because of an LLM sensor, we discuss next how a Bayesian engine uses the low-dimensional output to provide an interpretable model based on which an optimal action can be taken. 
\begin{figure*}[h!]
    \centering
    \includegraphics[width=0.8\linewidth]{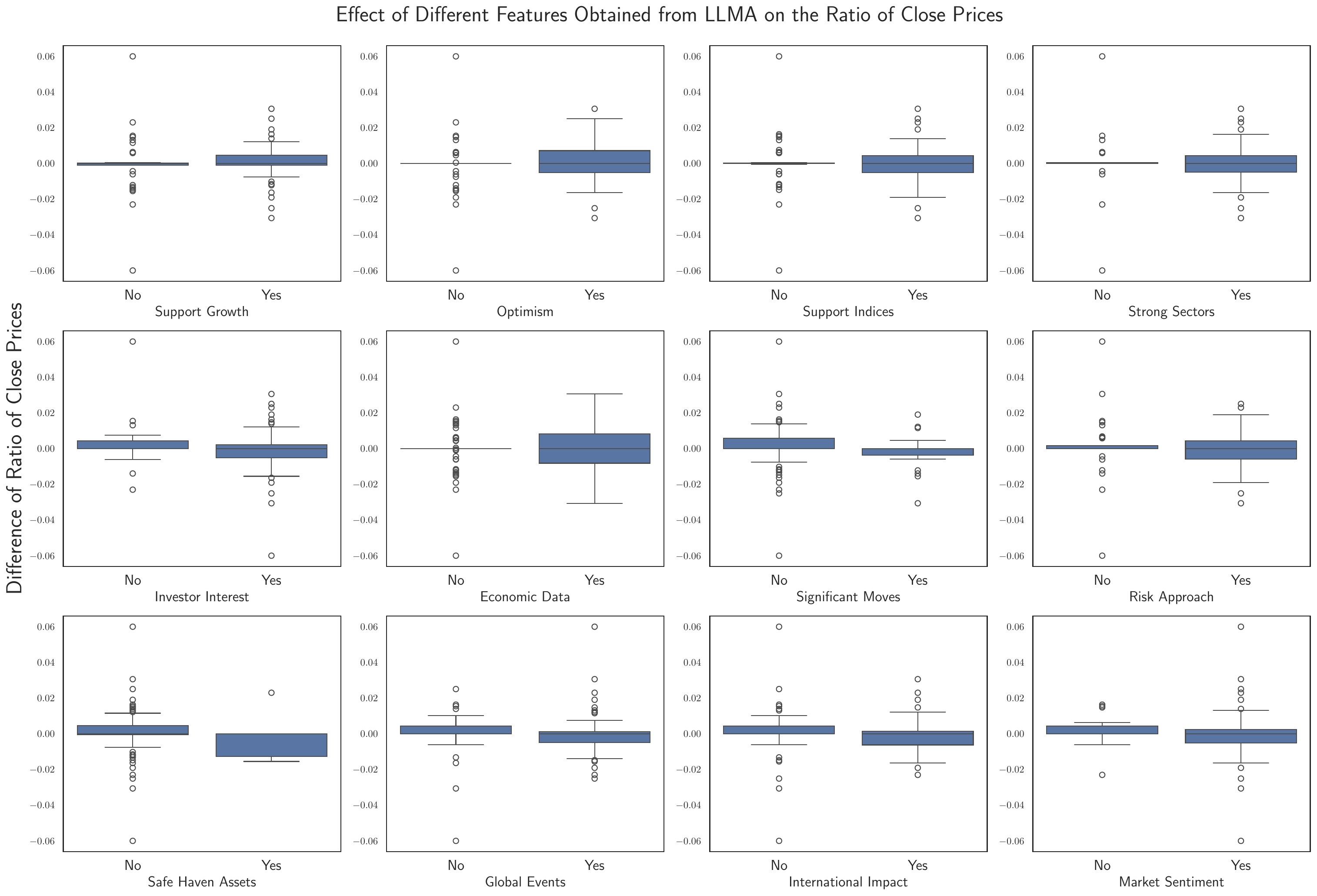}
    \caption{\bayesianagents\ can be used to detect and analyze the change in financial indicators (difference of close prices) by parsing financial news to extract $16$ interpretable features in Example~\ref{ex:financial}. We analyze the news articles from $03/2020$ to $08/2020$ corresponding to the \texttt{AAPL} stock. We query the LLM for $16$ binary with different features, including whether the news article indicates optimism about the market and whether there is investor interest in the stock. The interpretable features can be used to analyze the stock (and subsequently for Bayesian inference), as illustrated by the difference in the ratio of the stock prices across days.  }
    \label{fig:interpretablefigfinance}
\end{figure*}
Note that in general, \bayesianagent\ need not just be used for Bayesian inference but for more general tasks like coding, shopping assistants, research writing, etc., we construct the Bayesian engine to be more general purpose and detail on how a particular cost function leads to Bayesian inference. Also, note that each of the general tasks involves multiple Bayesian inference steps.

For a state $\statevar\in\statespace$, the \bayesianagent\ receives a text observation $\observation^{'}\in\observationspace'$ which the LLM of the \bayesianagent\ parses and provides a low-dimensional output $\observation\in\observationspace$. The \bayesianagent\ has a Bayesian engine which has a prior on the state space given by $\prior$ and an observation likelihood $\observationmatrixllm$. The \bayesianagent\ uses the low-dimensional output from the LLM to compute the posterior using Bayes' rule,
\begin{align*}
  \priorupdate(\prior,\observation) =   \probabilitymeasure_{\observationmatrixllm,\prior}(\statevar|\observation) = \frac{\observationmatrixllm_{\observation,\statevar}\prior(\statevar)}{\sum_{\statevar} \observationmatrixllm_{\observation,\statevar}\prior(\statevar)} .
\end{align*}
Let $\actionspace$ denote the finite action space of the \bayesianagent.
Let $\utility:\statespace\times\actionspace \to \R$ be the utility that the Bayesian agent receives from taking action $\action\in\actionspace$ when the underlying state $\statevar$. Then, the Bayesian agent performs the action, which maximizes the expected utility under the posterior distribution. 
\begin{align*}
    \action = \underset{{\action\in\actionspace}}{\argmax}\sum_{\statevar\in\statespace} \utility(\statevar,\action)  \probabilitymeasure_{\observationmatrixllm,\prior}(\statevar|\observation). 
\end{align*}
Therefore, our \bayesianagent\ can be described as the following tuple, 
\begin{align}\label{eq:bayesianagenttuple}
\text{LLMA}(\llm,\prior,\observationmatrixllm,\utility),
\end{align}
where $\llm$ is an LLM of the form~\eqref{eq:llm}, $\prior$ is the prior over the state, $\observationmatrixllm$ is the observation likelihood and $\utility$ is the utility function. 
We make the following remarks on our interpretable sensor model construction of a \bayesianagent. 
\begin{remark}
   We described the operation of Bayesian inference on a single observation when there is a stream of observations $\observation_1,\dots,\observation_\timeindex$ then the prior is updated after every step with the computed posterior $\prior_{\timeindex+1} =    \priorupdate(\prior_{\timeindex},\observation)$. 
\end{remark}
\begin{remark}
In Section~\ref{sec:sociallearning}, when we describe Bayesian social learning in a sequence of \bayesianagents, we will note that the equations of the social learning protocol are the same as the above equation. However, the prior of all the subsequent agents is updated based on the actions of agents subsequent to them (even when the rest of the agents do not observe a private observation at any given point), which leads to herding. 
\end{remark}
\begin{remark}
    For the case of Bayesian inference of the states, the action space is taken to be $\actionspace=\statespace$, and one of the possible utility functions is the indicator function $\utility(\action, \statevar) = \indicator(\action=\statevar)$.
\end{remark}
\begin{remark}
    The framework presented in this section can be extended to multi-modal models like the vision language models (VLMs~\cite{bordes2024introductionvisionlanguagemodeling}) for more general tasks by accordingly modifying dictionary $\tokenspace$ and output generation mechanism $\llmoutputfn$. 
\end{remark}
\begin{remark}
    The likelihood $\observationmatrixllm$ can be computed by using the LLM on a set of synthetic or public offline data where we simulate the state and use the text observations to obtain the low-dimensional observations from the LLM. 
\end{remark}
\subsubsection{Illustrative Example for Interpretable Feature Extraction using LLMAs}
In example~\ref{ex:customer}, we discussed how the low-dimensional representation can be constructed to reduce the observation space. The example can be extended to illustrate Bayesian inference, for example, by analyzing the performance of a particular customer service agent given its interactions with different customers. We next present an example for a different application, financial news analysis using \bayesianagents. 
\begin{example}\label{ex:financial}
    Consider a financial analyst who receives a stream of financial news and public opinion data from social media. The state in which the financial analyst wishes to estimate if the market is in an upturn or downturn. Similar to example~\ref{ex:customer}, the analyst can design questions using her domain knowledge, which extracts relevant information from the text. Since the stream of data is temporal in nature, the analyst can use our model of a \bayesianagent, to adaptively update the belief of the underlying state, and any point interpet the interminent outputs $\observation$ to identify trading opportunities. 
    We consider FNSPID, a financial news dataset~\cite{10.1145/3637528.3671629}, where we analyze news pertaining to the AAPL stock. We use LLaMA-3, and for each news article, ask $16$ binary questions. The questions are provided in the appendix, and we plot the distribution of the difference of the ratio of close prices (a performance metric used to gauge the performance of a stock across days) in Figure~\ref{fig:interpretablefigfinance}. It is clear how LLM can be used as a sensor to parse textual observations and extract interpretable features. 
    \change{The illustrative example using the financial news dataset focuses on the AAPL stock to demonstrate the applicability of the study to readers who may be less familiar with the subject or come from a different disciplinary background, such as finance. We note that the period that we consider was during COVID-19, and therefore was an exception to the standard stock market behaviour. However we again highlight that the purpose of our methods is to enable analysis by providing an interpretable toolbox and not to analyse any specific event. }
\end{example}

\subsection{Summary}
In order to construct an interpretable model of a \bayesianagent\ performing Bayesian inference, we use the LLM as a sensor attached to a Bayesian engine. 
This section gave a mathematical model for the \bayesianagent, which is modeled as Bayesian sensors. The \bayesianagent\ we consider in this paper is composed of an LLM and a Bayesian engine. 
We assumed that the entity interested in using the \bayesianagent\ has access to the observation matrix described in this section, $\observationmatrixllm$. However, this might not be the case where the \bayesianagent\ is used by a third party, and the Bayesian engine might not be explicit, but still, the entity might be interested in controlling and understanding the actions (decision) of the \bayesianagent. We, therefore, discuss in the next section under what conditions we can reconstruct the utilities of the \bayesianagent\ by probing the \bayesianagent\ given blackbox access. This extends the work done in explainable machine learning, where the deep neural network is modeled as a rationally inattentive Bayesian utility maximizer, and the post-training classifications are explained using the utilities obtained from the Bayesian revealed preferences framework~\cite{JMLR:v24:20-1202,pattanayak}.

\section{LLM Agent as a Rationally Inattentive Bayesian Utility Maximizer}\label{sec:ribum}
There are intriguing parallels between self-attention in LLMs and rational inattention in microeconomics. Self-attention is a special type of rational inattention mechanism. Therefore, we use Bayesian revealed preferences from microeconomics to estimate the utilities of a \bayesianagent, which can then be used to understand and control its behavior. 

Motivated by the inherent self-attention mechanism of an LLM, this section discusses how an interpretable model for a single \bayesianagent\ is to model to them as Rationally Inattentive Bayesian Utility Maximizers. 
First, we state the protocol that a Bayesian agent who is rationally inattentive follows. Then, we consider the problem of the viewpoint of an analyst who only observes the states of nature and the actions of a \bayesianagent\ and wishes to analyze if the \bayesianagent\ is a Rationally Inattentive Bayesian Utility Maximizer. For this, we state the necessary and sufficient conditions for \bayesianagents\ to act as Rationally Inattentive Bayesian Utility Maximizers.  Finally, we discuss algorithms that can be used to get a max-margin estimate and a sparse estimate of the utility function. A few illustrative examples are presented on real-life datasets to explain how the framework can be practically used to systematically obtain utilities of a \bayesianagent\ and also a \textit{standalone} LLM.  

\subsection{Motivation. Self-Attention Mechanism of LLM.}

The LLM of the \bayesianagent\ is driven by a transformer neural network. The key innovation of the transformer neural network is the self-attention mechanism. Self-attention allows a model to focus on different input parts when processing each token. In NLP tasks, for example, it helps a model understand which tokens in a sentence are important in relation to a given token. This relation is then to autoregressively generate texts~\cite{10.5555/3295222.3295349}.

In microeconomics, on the other hand, rational inattention is used to model the behavior of individuals managing cognitive resources by prioritizing certain information while ignoring less relevant details due to the inherent "cost" of processing. This is akin to how self-attention mechanisms in machine learning assign weights to different parts of an input sequence, prioritizing relevant segments to optimize understanding or prediction. Both processes are fundamentally about efficient allocation: rational inattention models decisions based on the economic trade-off of information processing costs, while self-attention models adaptively weigh parts of data to capture context, streamlining processing.

Motivated by the inherent self-attention mechanism central to the LLM transformer architecture~\cite{llmaccesssurvey}, the \bayesianagent\ can be modeled as a rationally inattentive Bayesian utility maximizer using the microeconomics model of Bayesian revealed preferences. Rational inattention is about constrained human decision-making due to limited attention, while self-attention is about LLMs assigning attention weights to different parts of input data to optimize understanding or predictions. 

Another motivation for studying \bayesianagents\ from Bayesian revealed preferences~\cite{kaplinanddeen} is to model the cost-accuracy tradeoff that is inherent in analyzing large amounts of data done by the LLM of an \bayesianagent. Namely, a \bayesianagents\ can output more accurate outputs by increasing the attention effort expended (by a better observation matrix using an LLM with a larger number of parameters or a larger context window). Such a rationally inattentive model for Bayesian agents was first proposed by Nobel laureate Christopher A. Sims. We present the theoretical framework of Bayesian revealed preferences and present experiments on real-life datasets using \bayesianagents. 

\subsection{Rationally Inattentive Bayesian Utility Maximizing Agent}\label{sec:ribumdesc}
In the last section, we discussed a Bayesian Sensor model of a single \bayesianagent. If the \bayesianagent\ is designed by the entity that is deploying them, then the utility function of the \bayesianagent\ can be set manually by the cost of the function $\utility$ of the Bayesian engine. However, if these \bayesianagents\ are used off the shelf with or without an explicit Bayesian mechanism, then the utility function is unknown to the entity using them. Although heuristics like confusion matrices or domain knowledge-based cost functions can be used, we need a more systematic approach to estimate the utility functions of a \bayesianagent. 

We now present the model of an agent who is a rationally inattentive Bayesian utility maximizer (\ribum)~\cite{pattanayak,JMLR:v24:20-1202}.

Consider a state $\statevar$ belonging to a finite state space $\statespace$ which is sampled from a prior denoted by $\prior_0 \in \probspace(\statespace)$. A \ribum\ operates in $\numenvironments$ environments indexed by $\environment$ and performs an action (denoted by $\action$) from the finite set $\actionspace$. The utility functions of the \ribum\ are given by $\utility_{\environment,\action} = [\utility_{\environment,\action}(1,\action),\dots,\utility_{\environment,\action}(\statedim,\action)]^{\prime}$ for each action $\action\in\actionspace$ and each environment $\environment \in \{1,\dots,\numenvironments\}$. The \ribum\ observes the states $\statevar$ through observations $\observation$ from a finite observation space $\observationspace$. For a given observation matrix (which is a stochastic matrix) $\observationmatrix= (\observationmatrix_{\statevar\observation}=\actionposterior(\observation|\statevar),\statevar\in\statespace,\observation\in\observationspace)$, the \ribum\ has an information acquisition cost $\infoacquisitioncost(\observationmatrix,\prior)$. 
Let $\observationmatrix_\observation = \diag(\observationmatrix_{1\observation},\dots,\observationmatrix_{\statedim\observation}),\observation\in\observationspace$ denote the different probabilities for observing a particular $\observation \in \observationspace$. Let $\stochastickernels$ denote the set of all $\statedim\times\observationdim$ stochastic kernels $\observationmatrix$. 

Then the \ribum\ follows the following protocol:
\begin{enumerate}
    \item (Step 1) The \ribum\ first optimizes for the observation matrix by maximizing the expected utility regularized by the information acquisition cost. This optimization is given by, \begin{align}\label{eq:observationoptimization}
    \begin{split}
        \observationmatrix(\environment) &\in \underset{\observationmatrix\in\stochastickernels}{\argmax}\ \expectedreward(\utility_\environment,\observationspace,\prior_0) - \infoacquisitioncost(\observationmatrix,\prior_0)\\
        \expectedreward(\utility_\environment,\observationspace,\prior_0) &\triangleq \expectation\{\max_{\action\in\actionspace} \expectation\{\utility_\environment(\statevar,\action)|\observation\}\} \\&= \sum_{\observation\in\observationspace}\max_{\action\in\actionspace}\utility_{\environment,\action}^{'} \observationmatrix_{\observation}\prior_0.
    \end{split}
\end{align}
    \item (Step 2) A state of nature $\statevar^0 \in \statespace$ is drawn from prior $\prior^0$ and is not known to the \ribum. 
    \item (Step 3) The \ribum\ draws an observation $\observation$ from optimized observation likelihood $\observationmatrix_{\statevar^0\observation}(\decisionspace)$.
    \item (Step 4) Given an observation $\observation$ the \ribum\ computes its posterior using Bayesian update step: \begin{align}\label{eq:ribumpriorupdate}
    \prior = \priorupdate(\prior_0,\observation,\environment) \triangleq \frac{\observationmatrix_\observation(\environment)\prior_0}{\ones^\prime \observationmatrix_\observation(\environment) \prior_0},
\end{align}
where $\ones$ is a row vector $[1,\dots,1]^\prime$.
    \item (Step 5) Finally, the \ribum\ performs the action maximizing the expected utility where the expectation is taken with respect to the posterior computed in Step 4, 
    \begin{align}\label{eq:actionutilitymaxm}
        \action \in \argmax_{\action^\prime} \expectation\left\{\utility_\environment(\statevar,\action^\prime)|\observation\right\} = \argmax_{\action' \in \actionspace} \utility^\prime_{\environment,\action}\observationmatrix_{\observation}(\environment)\prior_0.
    \end{align}
\end{enumerate}
Therefore, given the above protocol a \ribum\ agent can be parameterized by the following tuple~\cite{krishnamurthy_partially_2016},
\begin{align}\label{eq:ribum}
    (\environmentspace,\statespace,\observationspace,\actionspace,\prior_0,\infoacquisitioncost,\{\observationmatrix(\environment),\utility_\environment, \environment\in\environmentspace\}).
\end{align}

We now make several remarks on the above protocol.

\begin{remark}
   The information acquisition cost $\infoacquisitioncost(\observationmatrix,\prior_0)$ of Step~1 can be considered as the sensing cost that \ribum\ incurs in acquiring the information to make the decision on which action $\action$ to perform. The information acquisition cost can also be interpreted as, 
   \begin{align*}
\infoacquisitioncost(\observationmatrix,\prior_0) = \sum_{\observation} \entropyfunction(\priorupdate(\prior_0,\observation,\environment),\prior_0) \indicator^\prime \observationmatrix_\observation(\environment) \prior_0,
   \end{align*}
   where $\entropyfunction$ is an entropic regularizer (e.g., mutual information or Renyi Entropy)~\cite{JMLR:v24:20-1202}. Intuitively, a higher information cost is incurred for a more accurate attention strategy since we obtain a more accurate estimate of the state~\cite{pattanayak}. 
\end{remark}
\begin{remark}
   Using the above interpretation of the information acquisition cost, the optimization of~\eqref{eq:observationoptimization} can be seen as the \ribum\ agent optimally choosing the observation sampling strategy. This strategy is chosen to maximize the expected utility regularized by a rational inattention cost.  
\end{remark}
\begin{remark}
We now remark on the correspondence between the model of the \bayesianagent\ from~\eqref{eq:bayesianagenttuple} and the \ribum\ tuple of \eqref{eq:ribum}. First note that even a single LLM $\llm$ of the form~\ref{eq:ribum} can be considered as a RIBUM where the $\llm$ optimizes its self-attention matrix to optimize for picking the tokens that best predict the next token using the conditional probability distribution $\llmprob$. Further the \bayesianagent\ can be considered as a product of three observation matrices: 1) from the state to the text observations 2) from the $\llm$ (as described above) and 3) from the Bayesian engine, $\observationmatrixllm$. 1) is not in control of the \bayesianagent; however, it is known to the analyst who simulates the text observations for a given state $\statevar$. 2) comes from the pretraining of the LLM, which involves minimizing an entropic loss with respect to the self-attention mechanism. 3) comes from the  \bayesianagent\, which has a pre-trained likelihood function on a suitable dataset. 
\end{remark}
\subsection{Viewpoint of Analyst}
An analyst who observes the actions (\textit{behavior}) of the \bayesianagent\ under different states aims to ascertain if the \bayesianagent\ behaves as a \ribum. In particular, the analyst the following dataset, 
\begin{align}\label{eq:dataset}
    \dataset = \left\{ \prior_0, \actionposterior_\environment(\action|\statevar), \statevar \in \statespace, \action \in \actionspace, \environment \in \environmentspace\right\}.
\end{align}
where $\prior_0$ denotes the prior distribution of the state and $\actionposterior_\environment(\action|\statevar)$ denotes the conditional probability of performing the action $\action$ given the state $\statevar$ and environment $\environment$. The joint probability of the state-action pair ($\statevar$,$\action$) is given by
$\actionposterior_\environment(\action,\statevar) = \prior_0(\statevar)\actionposterior_\environment(\action|\statevar)$ and the probability of state $\statevar$ given action $\action$ by,
\[
\actionposterior_\environment(\statevar|\action) = \frac{\actionposterior_\environment(\action,\statevar) }{\sum_{\bar{\statevar}}\actionposterior_\environment(\action,\bar{\statevar}) }.
\]
\begin{remark}
   In practice, given the state-action pairs, the analyst empirically estimates the action posterior $\actionposterior_\environment(\action|\statevar)$, and by Kolmogorov’s law of large numbers, the empirical estimate converges to the true distribution w.p. 1 as the number of the state-action pairs goes to infinity.
\end{remark}
Given the dataset $\dataset$, the analyst aims to a) check if the \bayesianagent\ is a \ribum\ and b) if \bayesianagent\ is indeed a \ribum\ then obtain the reconstructed utility (reward) function $\hat{\utility}$ which rationalizes the behavior of the \bayesianagent. 

We next state the necessary and sufficient conditions for a) from Bayesian Revealed Preferences~\cite{kaplinanddeen} and then discuss two algorithms describing how b) can be performed. 
\subsubsection{Necessary and Sufficient Conditions for Rationally Inattentive Bayesian Utility Maximizing Behaviour}
As proved in the seminal work of~\cite{kaplinanddeen}, there are two inequalities, the No Improving Action Switches (NIAS) and the No Improving Action Cycles (NIAC), which are necessary and sufficient for an agent (in our case \bayesianagent) to be a \ribum. 

We now state the NIAS and NIAC conditions and provide intuition  for both of them, 

\textbf{No Improving Action Switches (NIAS)}
    \begin{align}\label{eq:nias}
        \sum_{\statevar} \actionposterior_\environment(\statevar|\action)(\reconstructedreward_\environment(\statevar,\bar{\action})-\reconstructedreward_\environment(\statevar,\action)) \leq 0 \ \forall \action,\bar{\action}\in \actionspace, \environment  \in \environmentspace.
    \end{align}
\begin{remark}
    The NIAS condition enforces that for any environment $\environment$, the agent chooses the optimal action with respect to the posterior probability mass function. 
\end{remark}
\textbf{No Improving Action Cycles (NIAC)}
 \begin{align}\label{eq:niac}
 \begin{split}
     \sum_{\action} &\max_{\bar{\action}}\sum_{\statevar}\actionposterior_\environmentalt(\statevar,\action)\reconstructedreward_\environment(\statevar,\bar{\action}) - \reconstructedinfocost_\environmentalt \\&- \left(\sum_{\action} \sum_{\statevar}\actionposterior_\environment(\statevar,\action)\reconstructedreward_\environment(\statevar,\action)-\reconstructedinfocost_\environment\right) \leq  0  \ \forall \environmentalt,\environment  \in \environmentspace.
 \end{split}
\end{align}
\begin{remark}
    The NIAC inequality operates on pairs of environments and ensures that the agent has an attention strategy that is optimal for all $\numenvironments$ environments. The above inequality is a pairwise version from~\cite{JMLR:v24:20-1202} of the original combinatorial inequality of~\cite{kaplinanddeen}. The combinatorial inequality gives a more intuitive explanation for the same, wherein the agent takes actions that are consistent across all possible subsets of the environments. Here, consistency is with respect to the action posterior. Intuitively,  NIAC ensures that every agent chooses the best attention strategy in a given environment.
\end{remark}

We now state the main results, which show that for a \bayesianagent\ to be a \ribum, it is sufficient to check if the dataset $\dataset$ obtained from the \ribum\ satisfies the NIAS and NIAC conditions. We summarize this feasibility check in Algorithm~\ref{alg:brp}, where the input is the dataset $\dataset$ of the form~\eqref{eq:dataset}, obtained from the \bayesianagent\ and ascertains if the \bayesianagent\ is a \ribum\ or not. 
\begin{theorem}\label{th:nscribum}
    \textbf{(Necessary and Sufficient Conditions for \bayesianagent\ to be a \ribum)} Let $\dataset$ be the dataset that the analyst has, as described in~\eqref{eq:dataset} for a \bayesianagent\ performing protocol (Step 1 to 5 of Sec.\ref{sec:ribum}) in $\numenvironments\geq 2$ environments. Then the \bayesianagent\ is a \ribum\ iff there exists a feasible solution $\{\reconstructedreward_\environment(\statevar,\action),\reconstructedinfocost_\environment(\statevar,\action)|\statevar\in\statespace,  \action\in\actionspace\}_{\environment=1}^{\numenvironments}$ to the NIAS inequality of~\eqref{eq:nias} and the NIAC inequality of~\eqref{eq:niac}.
\end{theorem}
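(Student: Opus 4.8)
The plan is to establish the biconditional by proving both directions separately, treating the necessity direction (\ribum\ behavior implies NIAS and NIAC feasibility) and the sufficiency direction (feasibility implies the dataset can be rationalized by a \ribum). For the necessity direction, I would start from the protocol of Steps 1 through 5, assuming the \bayesianagent\ genuinely optimizes the observation matrix via~\eqref{eq:observationoptimization} and selects actions via~\eqref{eq:actionutilitymaxm} with the true utilities $\utility_\environment$ and information cost $\infoacquisitioncost$. The NIAS inequality~\eqref{eq:nias} should follow directly from Step 5: since the agent picks the action maximizing expected utility under the posterior $\actionposterior_\environment(\statevar|\action)$, no alternative action $\bar{\action}$ can yield a higher posterior-averaged utility, which is exactly~\eqref{eq:nias} with $\reconstructedreward_\environment = \utility_\environment$. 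The NIAC inequality~\eqref{eq:niac} should follow from the optimality in Step 1: the observation matrix $\observationmatrix(\environment)$ is chosen to maximize expected utility minus information cost in environment $\environment$, so deploying the attention strategy optimal for a different environment $\environmentalt$ cannot improve the regularized objective, giving the pairwise cycle condition with $\reconstructedinfocost_\environment = \infoacquisitioncost(\observationmatrix(\environment),\prior_0)$.

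For the sufficiency direction, I would take a feasible solution $\{\reconstructedreward_\environment, \reconstructedinfocost_\environment\}$ to the NIAS and NIAC inequalities and explicitly construct a \ribum\ that generates the observed dataset $\dataset$. The key is to define a candidate information acquisition cost $\hat{\infoacquisitioncost}$ consistent with the feasible $\reconstructedinfocost_\environment$ values and verify that, with utilities $\reconstructedreward_\environment$, the observed action posteriors $\actionposterior_\environment(\action|\statevar)$ are precisely what the protocol produces. Here NIAS guarantees that Step 5's argmax is consistent with the observed actions (the reconstructed rewards rationalize each individual action choice), while NIAC guarantees that the implied observation matrices are simultaneously optimal across all environments in the sense of Step 1, so that a single coherent information cost function can be fit. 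This is essentially an appeal to the construction in the seminal result of Caplin and Dean~\cite{kaplinanddeen}, adapted to the pairwise formulation of~\cite{JMLR:v24:20-1202}, so I would cite their framework and verify that the correspondence between the \bayesianagent\ tuple~\eqref{eq:bayesianagenttuple} and the \ribum\ tuple~\eqref{eq:ribum} (as laid out in the remark on the product of three observation matrices) makes the abstract result directly applicable.

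The hard part will be the sufficiency direction, specifically the recovery of a globally consistent information cost function from the finitely many pairwise NIAC inequalities. Rationalizing each action is straightforward from NIAS, but showing that the $\numenvironments$ environment-specific values $\reconstructedinfocost_\environment$ can be extended to a genuine cost functional $\infoacquisitioncost(\observationmatrix,\prior_0)$ that is minimized by exactly the observed attention strategies requires a convexity or duality argument. I expect to invoke the fact that the NIAC condition, in its combinatorial form, is equivalent to a cyclical monotonicity condition that by Rockafellar-type arguments guarantees the existence of a convex potential (the cost function) consistent with the data; the pairwise version from~\cite{JMLR:v24:20-1202} must be shown to be sufficient for this, which is where the assumption $\numenvironments \geq 2$ and the structure of the expected-utility functional~\eqref{eq:observationoptimization} become essential. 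I would also need to confirm that the entropic interpretation of the information cost given in the earlier remark is compatible with the reconstructed $\reconstructedinfocost_\environment$, or else note that the theorem only asserts existence of \emph{some} feasible information cost, not the entropic one specifically.
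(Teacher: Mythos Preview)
Your overall architecture (necessity then sufficiency) matches the paper, but there is a genuine gap in the necessity direction that you gloss over. You write that NIAS follows because ``the agent picks the action maximizing expected utility under the posterior $\actionposterior_\environment(\statevar|\action)$.'' This is not what Step~5 says: the agent maximizes under the posterior $\probabilitymeasure_\environment(\statevar|\observation)$ given the \emph{observation} $\observation$, whereas NIAS is stated in terms of the \emph{revealed} posterior $\actionposterior_\environment(\statevar|\action)$ given the \emph{action}. The analyst never sees $\observation$. The paper bridges this by noting that $\actionposterior_\environment(\statevar|\action) = \sum_{\observation}\probabilitymeasure_\environment(\observation|\action)\probabilitymeasure_\environment(\statevar|\observation)$ is a stochastic garbling of the true posterior, and then averages the per-$\observation$ optimality inequalities with weights $\probabilitymeasure_\environment(\observation|\action)$ (which vanish outside the set $\observationspace_\action$ of observations leading to $\action$). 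The same issue bites harder for NIAC: the inequality~\eqref{eq:niac} is written in terms of the observable action posteriors $\actionposterior_\environment(\statevar,\action)$, i.e.\ the \emph{revealed} attention strategy $\observationmatrix(\environment)'$, not the true $\observationmatrix(\environment)$. The optimality condition~\eqref{eq:observationoptimization} compares true strategies, so you need Blackwell dominance to get $J(\observationmatrix(\environmentalt)',\utility_\environment)\leq J(\observationmatrix(\environmentalt),\utility_\environment)$ and NIAS to get equality when $\environmentalt=\environment$. Without these two ingredients the chain of inequalities does not close; your proposal omits both.

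For sufficiency, your Rockafellar/cyclical-monotonicity plan would work in principle but is more machinery than needed. The paper takes $\observationspace=\actionspace$, uses the observed $\actionposterior_\environment(\action|\statevar)$ directly as the attention strategy, and defines the information cost explicitly as the pointwise maximum
\[
\infoacquisitioncost(\observationmatrix) = \max_{\environment\in\environmentspace}\bigl\{\reconstructedinfocost_\environment + J(\observationmatrix,\reconstructedreward_\environment) - J(\actionposterior_\environment,\reconstructedreward_\environment)\bigr\},
\]
which is convex (a pointwise max of affine-plus-convex functions) and satisfies $\infoacquisitioncost(\actionposterior_\environment)=\reconstructedinfocost_\environment$ by NIAC. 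Verifying~\eqref{eq:actionutilitymaxm} is then immediate from NIAS since observations equal actions, and~\eqref{eq:observationoptimization} falls out of the max construction. This is cleaner than invoking an abstract convex-potential existence result, and your concern about recovering a globally consistent cost from finitely many pairwise inequalities dissolves once you see the explicit formula.
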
 
The result was first derived in~\cite{kaplinanddeen} and has been used extensively to verify if different engineering systems, including RADARs and Deep Learners, are \ribum\ or not~\cite{pattanayak,JMLR:v24:20-1202}. 
\begin{remark}
    The above theorem gives an if and only if condition for a \bayesianagent\ to be a \ribum. If the inequalities have a feasible solution, then there exists a reconstructable set of utilities and information acquisition costs that rationalize $\dataset$. The necessity implies that for a \ribum, the true utilities satisfy the NIAC and NIAS conditions; hence, Theorem~\ref{th:nscribum} yields consistent estimates of the utilities.
\end{remark}
\begin{algorithm}
    \begin{algorithmic}
        \State \textbf{Input: } Dataset $\dataset$ of the form~\eqref{eq:dataset} from \bayesianagent.
        \State \textbf{Ascertain: }If $\exists\ \reconstructedreward_\environment$ and $\reconstructedinfocost_\environment \forall \environment\in\environmentspace$ satisfying the NIAS inequality from~\eqref{eq:nias} and NIAC inequality from~\eqref{eq:niac}
    \end{algorithmic}
    \caption{Bayesian Revealed Preferences Feasibility (BRP)}
    \label{alg:brp}
\end{algorithm}
\begin{remark}
   The feasibility check is summarized in Algorithm~\ref{alg:brp} which is derived from Theorem~\ref{th:nscribum}. Any utility that satisfies the NIAS and NIAC inequalities with respect to the dataset $\dataset$ is a feasible utility. Hence, the Bayesian revealed preference returns a set-valued estimate (rather than point estimates) of the true utility. This set-valued estimate is given by $\reconstructedreward$ and the reconstructed information cost $\reconstructedic$ can be derived as~\cite{krishnamurthy_partially_2016}, 
    \begin{align}\label{eq:reconstructedic}
        \begin{split}
&\reconstructedic(\dataset) = \max_{\environment\in\environmentspace} \biggl( \reconstructedinfocost_\environment +\\
\ &\sum_{\action}\max_{\bar{\action}\in\actionspace}\sum_{\statevar}\actionposterior_\environment(\statevar,\action)\reconstructedreward_\environment(\statevar,\bar{\action}) - \sum_{\action}\sum_{\statevar}\actionposterior_\environment(\statevar,\action)\reconstructedreward_\environment(\statevar,\action) \biggr).
        \end{split}
    \end{align}
   
\end{remark}
\begin{figure*}
    \centering\includegraphics[width=0.6\linewidth]{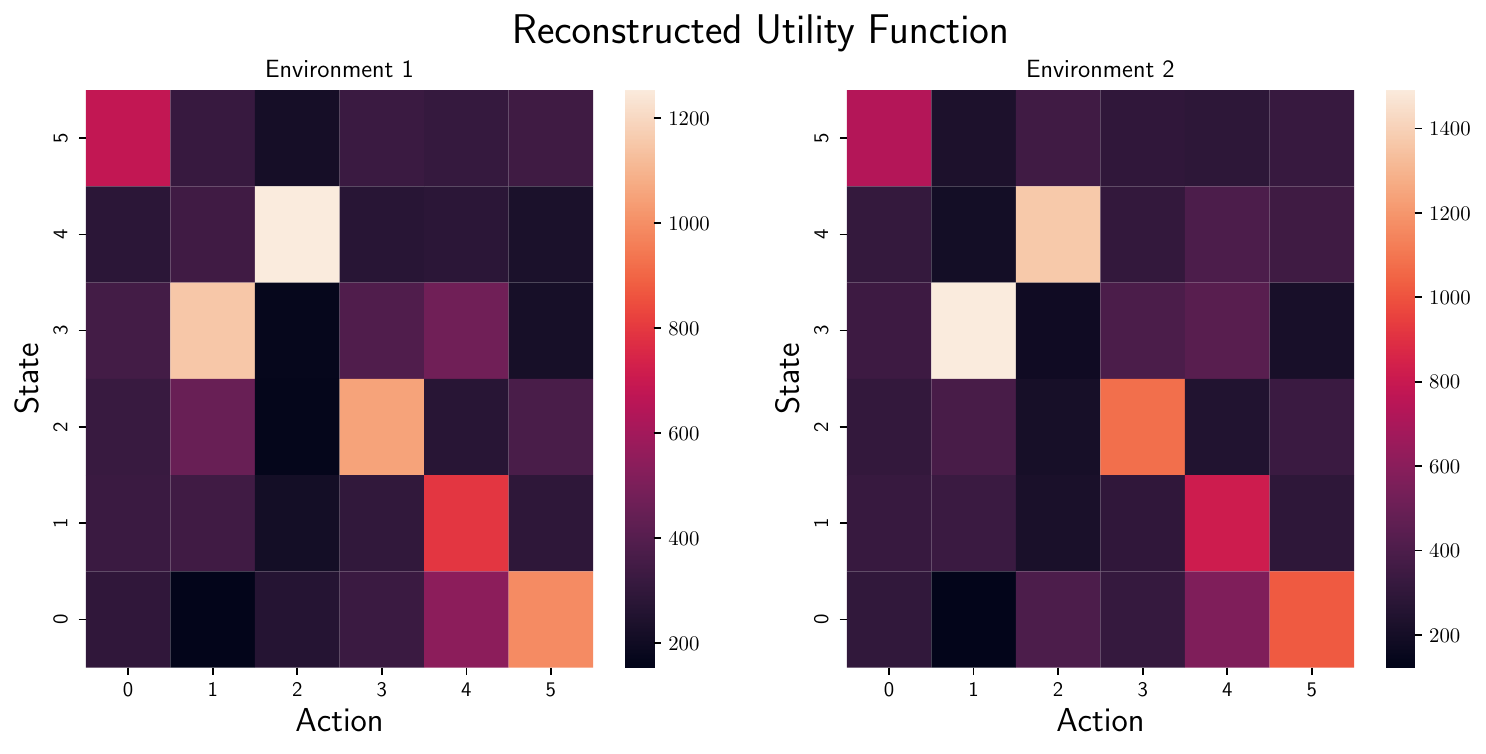}
    \caption{Reconstructed Max-Margin Utility of a \bayesianagent\ for Illustrative Example~\ref{ex:ribum}. The utilities near the diagonal are comparatively higher than off-diagonal entries, showing that \bayesianagent\ correctly classifies the states. Along the diagonal, utility is highest for the state $5$ which is the most toxic, hence the \bayesianagent\ gives the highest utility to classifying the most toxic state.    }
    \label{fig:reconstructedutilityexample}
\end{figure*}

Although Theorem~\ref{th:nscribum} is an extremely powerful result, for practical applications, we often need a single utility estimate rather than a set-valued estimate. We next discuss two different methods to obtain a point estimate for the utility: the max-margin method and the sparsest utility estimation.

\subsection{Estimating Set-Valued and Point Estimates for the Utilities}
Estimating the utility function of the \bayesianagents\ is an inverse optimization or inverse reinforcement learning problem, which is, in general, ill-posed. Hence, instead of reconstructing point-valued utility estimates, we describe how the BRP test can be used to reconstruct set-valued utility estimates. 
Each of the utilities in the set-valued estimate is a feasible utility. We provide two algorithms that return a point-valued estimate from the set, which satisfy certain other structural properties.

Firstly, we describe the max-margin approach to reconstruct the utility function. Since trivial utilities can satisfy the NIAS and NIAC conditions, we maximize the margin with which each condition is satisfied, denoted by decision variables $\epsilon_1$ and $\epsilon_2$, respectively. Specifically, we consider the following convex program summarized in Algorithm~\ref{alg:maxmargin},
\begin{align}
\begin{split}\label{eq:maxmargin}
&\underset{\{\reconstructedreward_\environment(\statevar,\action),\reconstructedinfocost_\environment(\statevar,\action)\}_{\environment=1}^{\numenvironments},\epsilon_1,\epsilon_2}{\argmin} (\epsilon_1 + \epsilon_2)\\&
       \nias(\cdot) \leq -\epsilon_1, \niac(\cdot) \leq - \epsilon_2, \\& \ \epsilon_1,\epsilon_2 > 0. 
\end{split} 
\end{align}
The max-margin formulation is especially useful when the analyst wishes to estimate the utilities of \bayesianagent\, which pass the NIAC and NIAS tests maximally. There is no guarantee that this will be close to the true utility because of the bias in the observed data. Still, this estimate is a useful reconstruction that has shown to work well in practice~\cite{pattanayak,JMLR:v24:20-1202}. 
\begin{algorithm}
    \begin{algorithmic}
        \State \textbf{Input: } Dataset $\dataset$ of the form~\eqref{eq:dataset} from \bayesianagent.
        \State \textbf{Output: } Reconstructed utilities $\reconstructedreward_\environment$ and information acquisition cost $\reconstructedic$ of the \bayesianagent\ in  $\environment\in\environmentspace$ environments; Margins for NIAS  $\epsilon_1$ and NIAC $\epsilon_2$. 
        \If{BRP($\dataset$) is \texttt{True}}
        \State \textbf{solve: } Optimization~\eqref{eq:maxmargin} for $\reconstructedreward_\environment$, $\reconstructedinfocost_\environment \forall \environment\in\environmentspace$, $\epsilon_1$, $\epsilon_2$
        \State Obtain $\reconstructedic$ from equation~\eqref{eq:reconstructedic}
        \Else{}
        \State \textbf{return: }\textit{Feasibility Error}: \bayesianagent\ is not a \ribum. 
        \EndIf
    \end{algorithmic}
    \caption{Max-Margin Utility Reconstruction for \bayesianagent}
    \label{alg:maxmargin}
\end{algorithm}

Next, we describe the utility reconstruction method, which minimizes the $\ell_1$-norm of the reconstructed utility so as to obtain a sparse representation. Here, we manually set tolerances \change{ $\epsilon_1\in\R^+$ and $\epsilon_2\in\R^+$ for the margins with which each condition is satisfied. Ideally, the margin should be as high as possible such that the linear inequalities still have a feasible solution. Therefore, some trial and error is required to select a suitable $\epsilon_1$ and $\epsilon_2$. However, this is possible since the Bayesian revealed preference step is offline.} We minimize the following convex program, and we summarize the procedure in Algorithm~\ref{alg:sparse},
\begin{align}\label{eq:l1norm}
\begin{split}
&\underset{\{\reconstructedreward_\environment(\statevar,\action),\reconstructedinfocost_\environment(\statevar,\action)\}_{\environment=1}^{\numenvironments}}{\arg\min} \sum_{\environment\in\environmentspace,\action \in \actionspace,\statevar\in\statespace} |\reconstructedreward_\environment(\statevar,\action)|\\&
       \nias(\cdot) \leq -\epsilon_1, \niac(\cdot) \leq - \epsilon_2 
\end{split}.
\end{align}
The sparsest utility reconstruction is especially useful when the analyst is interested in understanding the key state-action pairs that the \bayesianagent\ finds especially useful. This sparse utility can be informative in focussingdg the design of the environment and the system prompt that the \bayesianagent\ uses. 

\begin{algorithm}
    \begin{algorithmic}
        \State \textbf{Input: } Dataset $\dataset$ of the form~\eqref{eq:dataset} from \bayesianagent,  Margins for NIAS  $\epsilon_1$ and NIAC $\epsilon_2$. 
        \State \textbf{Output: } Reconstructed utilities $\reconstructedreward_\environment$ and information acquisition costs $\reconstructedic$  of the \bayesianagent\ in  $\environment\in\environmentspace$ environments;
        \If{BRP($\dataset$) is \texttt{True}}
        \State \textbf{solve: } Optimization~\eqref{eq:l1norm} for $\reconstructedreward_\environment$, $\reconstructedinfocost_\environment \forall \environment\in\environmentspace$
        \State Obtain $\reconstructedic$ from equation~\eqref{eq:reconstructedic}
        \Else{}
        \State \textbf{return: } \textit{Feasibility Error: } \bayesianagent\ is not a \ribum. 
        \EndIf
    \end{algorithmic}
    \caption{Sparsest Utility Reconstruction for \bayesianagent }
    \label{alg:sparse}
\end{algorithm}
\begin{remark}
    Note that the above Bayesian revealed preferences framework can be used even if the \bayesianagent\ does not have an explicit Bayesian engine. This is because, as remarked above, even a standalone LLM has an observation matrix from its pertaining, and the Algorithm~\ref{alg:brp} only requires access to the state-action pairs from interacting with the \bayesianagent. 
 \end{remark}
 \begin{remark}
Algorithm~\ref{alg:brp} is a feasibility test with $\numenvironments(|\actionspace||\statespace|+1)$  free variables and $\numenvironments^2 + \numenvironments(|\actionspace|^2 - |\actionspace|-1)$ linear inequalities. The number of free variables and inequalities in the feasibility test of Algorithm~\ref{alg:brp} scale linearly and quadratically, respectively, with the number of environments, $\numenvironments$.
     
 \end{remark}
 \begin{remark}
 \change{
During pre-training, LLMs are trained on massive datasets like the CommonCrawl, which are of the order of hundreds of terabytes, and therefore, they can produce outputs for different contexts. Therefore there is no sample size restriction on the dataset for applying Bayesian revealed preferences to LLMAs. It is important to emphasize the Bayesian revealed preferences provide a necessary and sufficient condition for Bayesian utility maximization, thereby providing a rigorous data analysis tool for real-world data. The primary requirement of both algorithms is that the dataset must be collected in at least two environments, with the action posterior properly defined. There are no additional prerequisites for reconstructing the set-valued utilities from the action posteriors. However, in practice, when the action posterior is estimated empirically, errors are introduced due to finite sample approximations. Characterizing the exact convergence of empirical action posterior probabilities to their true values would be an intriguing direction for future research.}
 \end{remark}
The feasibility condition of Theorem~\ref{th:nscribum} provides a set-valued estimate for the utilities since any utility that satisfies the NIAS and NIAC conditions is a valid utility function. However, for many applications, a single utility function is desired. 
Therefore, we now describe two algorithms that use the Bayesian revealed preference framework along with Theorem~\ref{th:nscribum} to obtain the reconstructed utility of the \bayesianagent\ along with information acquisition cost. 
\subsection{Numerical Experiment with LLaMA LLM-based Agent}

We illustrate the Bayesian revealed preferences for \bayesianagents\ in the toy example for hate-speech classification. 
\begin{example}\label{ex:ribum}
    We consider the example of analyzing a \bayesianagent\ which classifies a text into six levels of hate speech. Therefore for this task the action space $\actionspace$ is same as the state space $\statespace =\{0,1,\dots,5\}$. The levels indicate the intensity of hate speech. The details of the different levels and the exact construction of the \bayesianagent\ are given in Section~\ref{sec:numerical}. We then obtain $200$ pairs of state and actions from the \bayesianagent, which forms our dataset $\dataset$. We run the Algorithm~\ref{alg:maxmargin} and provide the reconstructed max-margin estimates of the utility in Figure~\ref{fig:reconstructedutilityexample}. It can be seen that the utilities quantify the observed behavior of the \bayesianagent. 
\end{example}

\subsection{Summary}
The LLM of a \bayesianagent\ uses a entropic regularization method to provide its output, this motivates looking at the \bayesianagent\ from the lens of rationally inattentive Bayesian utility maximization, which is a form of entropic regularized utility maximization. 
This section discussed the necessary and sufficient conditions for a \bayesianagent\ to be a rationally inattentive Bayesian utility maximizer (\ribum). We proposed two algorithms to reconstruct a point-valued estimate of the utility of the \bayesianagent\ if it is a (\ribum). We illustrate how the reconstructed utility is useful in analyzing the behavior of a blackbox \bayesianagent.

\vspace{0.5cm}
{\color{accessblue}{\hrule}} \vspace{0.5cm}

{\hspace{-3mm}\Large\color{accessblue} \textbf{Part II: Interacting LLM Agents}}
\vspace{0.5cm}

\hspace{-3.5mm}We now move on to Part II of the paper. Having studied a single \bayesianagent\ in isolation in Part I, we next study social learning in an interacting network of \bayesianagents. There is a lot of work done in studying different topologies of \bayesianagents~\cite{sumers2024cognitive}. However, we restrict ourselves to the three topologies described in Figure~\ref{fig:sociallearningtopologies}.  We motivate studying the different topologies to understand, analyze, and explain some of the observed phenomena in LLMs. We study Bayesian social learning in a sequence of \bayesianagents\ to analyze information cascade to an incorrect action. Motivated by model collapse observed while training LLMs from their generated dataset, we briefly study a different protocol for social learning in \bayesianagents: word-of-mouth social learning. Finally, we illustrate how data-incest can arise if \bayesianagents\ perform Bayesian inference in an asynchronous fashion.  Part II comprises Section~\ref{sec:sociallearning} and Section~\ref{sec:wordofmouth}.
\begin{figure*}
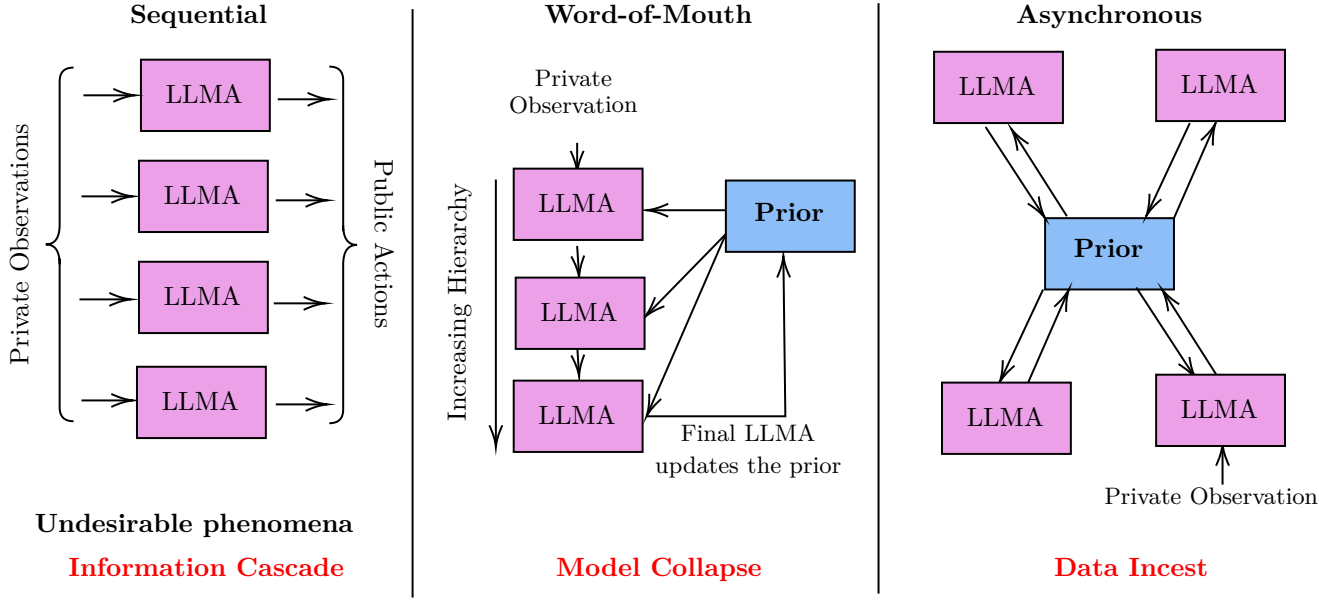

    \centering
    \include{plots/part2}
    \caption{The three topologies considered for a network of interacting \bayesianagents, to interpretably understand and mitigate undesirable phenomena observed in LLMs. We look at sequential Bayesian learning, word-of-mouth learning, and asynchronous Bayesian inference and motivate these topologies using information cascade, model collapse, and potential data incest.  }
    \label{fig:sociallearningtopologies}
\end{figure*}
\section{Bayesian Social Learning in A Sequence of LLM Agents}\label{sec:sociallearning}
LLMs are already trained on synthetically generated data by other models~\cite{shumailov2024ai} and also often use the output of other LLMs to output based on the current context~\cite{llmautomation}. Motivated by studying interacting LLMs, each of which has computational and privacy constraints, this section introduces a second layer of abstraction, wherein we study Bayesian social learning in a sequence of \bayesianagentstext. We first motivate the setting where a sequence of \bayesianagents\ sequentially estimate a state from their private observations and take a public action, which is used to update the public belief. We discuss the Bayesian social learning protocol in a sequence of~\bayesianagents, which aim to detect an underlying state by sequentially analyzing text observations of the text.  The optimal update equation for the public belief is derived. We consider two scenarios, one where no private observations are shared and one where private observations are shared to the next $\numagentsprev$ \bayesianagents. Finally, we show that under both scenarios, an information cascade takes place, and the agents take the same action irrespective of their private observation. To show this, we use the martingale convergence theorem~\cite{krishnamurthypoor}. We illustrate the effect of the number of private observations revealed and the resolution of the probe on the convergence in herding. We also present the mathematical model for incentivized autonomous LLM agents used later in Section~\ref{sec:incentivizedherding}, which is motivated by different entities employing such agents to perform Bayesian state estimation using textual data.

\subsection{Motivation. Interacting LLMs, Finite Context Length and Privacy in LLM agents. }  Even if a single \bayesianagent\ is used in an application, it can be treated as a sequence of different \bayesianagents\ since the context of the previous \bayesianagent\ evaluation might not be available due to privacy of the content and finite-context length~\cite{ijcai2024p890}. We consider two scenarios, one where no private observation is shared between the \bayesianagents\ and the second where each \bayesianagents\ can observe previous $\numagentsprev$ agents. This is motivated by practical constraints from the perspective of privacy, context length, and cost incurred inherent in using \bayesianagents. 

We motivate studying \bayesianagents\ using a Bayesian social learning perspective with the following constraints: 
\subsubsection{Privacy} 
Since the text observations often contain sensitive information, the text observations can be used to train the LLM of the \bayesianagent~\cite{panda2024teach}; hence, to prevent this often, systems involving LLMs often treat the private observation in a one-shot setting where the private observation is not stored. Even the low-dimensional representation of the text observation might contain information that can be used to identify attributes of the person the text observation comes from, and in a social network application, this can lead to unfair decisions by \bayesianagents~\cite{biasfairness}.  Therefore to preserve privacy of users, the \bayesianagents\ we consider either do not share the private observation or only share a limited sequence of private observations. 
\subsubsection{Limited Context Length and Storage Constraints}
Another constraint is the limited context length (length of text that the LLM can process at a time) that is inherent to the LLM used in the \bayesianagent. Note that there are methods which allow for infinite context window or a very large ($1$ million) context window, however these take a lot of time to work which is often not feasible in a real-time Bayesian inference setting. Also, the quality of the responses decreases with increasing context~\cite{kaddour_challenges_2023}.  
There are storage constraints that won't allow storage of an arbitrary amount of private observations, especially if there are methods (like those presented in this paper) that do not require storage of the private observations. 
\subsubsection{Computational Resources and Cost}  Computational resources (involving GPUs) are often limited, especially if the same LLM deployment is used for different applications. Also, the attention mechanism is such that the computational complexity grows quadratically (linear for state space LLMs) in the input length. Therefore, it is often required to limit the size of the context being provided. More importantly, the LLM service providers often bill on a per-token basis. Hence the costs scale linearly with increasing the size of the private observations, however the value of information of including a previous observation is concave. 
\subsubsection{Single \bayesianagent\ can be modeled as a sequence of \bayesianagents}
Finally, we remark that a single \bayesianagent\ can be modeled as a sequence of \bayesianagents, especially given the above three constraints. This is because when a single \bayesianagent\ is used to sequentially perform Bayesian inference, the constraints from above enforce that no more than $\numagentsprev$ observations can be considered at any given time. We consider updating the prior based on the action of the $\numagentsprev+1$-th previous \bayesianagent. This might seem counterintuitive assumption, however in practical applications often the observations are processed in batches. 
\vspace{-3mm}
\subsection{Social learning protocol when no private observation is shared}
A sequence of \bayesianagentstext\ (\bayesianagent) wishes to estimate an underlying state $\statevar \in \statespace$, where $\statespace$ is finite dimension discrete space. 
At time $\timeindex$ agent $\timeindex$ receives a private observation $\llmobservation_\timeindex \in \observationspace^\prime$ from the state $\statevar$, where $\observationspace^\prime$ is a high-dimensional discrete space (text). Agent $\timeindex$ uses a large language model (LLM) as a sensor to obtain a feature vector $\observation \in \observationspace$ where $\observationspace$ is a low-dimensional discrete space. The text observation is sampled according to the probability distribution $\probabilitymeasure(\llmobservation|\statevar)$. For a given text observation $\llmobservation$, the feature vector is sampled according to the probability $\probabilitymeasure(\observation|\llmobservation)$, $\probabilitymeasure$ represents the suitably defined probability measure. Therefore for a given state, the feature vector is sampled with probability $\observationmatrix_{\observation,\statevar}  = \sum_{\llmobservation \in \observationspace^{\prime}} \probabilitymeasure(\llmobservation|\statevar)\probabilitymeasure(\observation|\llmobservation)$, where $\observationmatrix \in \R^{\vert \observationspace \vert \times \vert \statespace \vert}$ denotes the observation matrix. Let $\actionspace$ be a discrete action space and $\cost:\statespace\times\actionspace\to\R^{+}$ be the cost function\footnote{To be consistent with standard Bayesian social learning, we consider cost minimization in Part 2, however, utility is simply the negative of cost for most nonpathological cost functions.}.

In classical Bayesian social learning, the agent $\timeindex$ computes a posterior belief on the state according to Bayes' rule, 
  \begin{align}\label{eq:bayesrule}
\probabilitymeasure(\statevar|\observation_\timeindex) = \frac{\observationmatrix_{\observation_\timeindex,\statevar}\prior_\timeindex(\statevar)}{\sum_{\statevar^\prime \in \statespace} \observationmatrix_{\observation_\timeindex,\statevar^\prime}\prior_\timeindex(\statevar^\prime)},
    \end{align}
    where $\prior_\timeindex(\statevar)$ denotes the public belief belief over the state space $\statespace$. 
    The agent $\timeindex$ takes an action $\action \in \actionspace$ minimizing the expected cost with respect to the posterior\footnote{A tie-breaking rule such as uniform sampling can be used if two actions have the same cost.},  
    \begin{align}\label{eq:action}
        \action_\timeindex = \underset{\action \in \actionspace}{\argmin}{\sum_{\statevar}\cost(\statevar,
        \action)\probabilitymeasure(\statevar|\observation_\timeindex)}.
    \end{align}
We make the following assumption which is standard in classical Bayesian social learning~\cite{chamley_rational_2004},
\begin{enumerate}[start = 1,label={\bfseries (B\arabic*)}]
    \item The observations $\llmobservation_\timeindexalt$ and  $\observation_\timeindexalt$ are private, i.e., are only available to \bayesianagent\ $\timeindexalt$, but the actions are public, i.e., visible to all subsequent \bayesianagent\ ($\timeindex=\timeindexalt+1,\timeindexalt+2,\dots$). 
\end{enumerate}
We relax this assumption in the next subsection by allowing the private observations to be shared with the next $\numagentsprev$ agents, which allows us to model more realistic Bayesian social learning in \bayesianagents. Based on the action $\action_\timeindex$ of the agent $\timeindex$, the public belief belief $\prior_\timeindex$ is updated using the following filtering equation, which follows from the filtering equation of a hidden Markov model derived in Appendix~\ref{app:sociallearningfilter},
    \begin{align}\label{eq:priorupdate}
        \prior_{\timeindex + 1} = \priorupdate(\prior_{\timeindex},\action_\timeindex),
    \end{align}
where $\priorupdate$ is given by the following equation, 
\begin{align}
 \priorupdate(\prior,\action) = \frac{\actionprob(\prior,\action) \prior}{\indicator_{\statedim}\actionprob(\prior,\action)  \prior} ,
\end{align}
where $\actionprob(\prior,\action) = \diag([\probabilitymeasure(\action|\statevar=1,\prior),\dots,\probabilitymeasure(\action|\statevar=\statedim,\prior)])$ is the probability of actions for different states given the prior.  For $\action\in\actionspace$, $\probabilitymeasure(\action|\statevar=\stateidx,\prior)$ is given by, 
\begin{align}\label{eq:probaction}
    \begin{split}
\probabilitymeasure(\action|\statevar=\stateidx,\prior) = \sum_{\observation\in \observationspace} \probabilitymeasure(\action|\observation,\prior)\probabilitymeasure(\observation|\statevar=\stateidx,\prior),\\ \probabilitymeasure(\action|\observation,\prior) 
 = \begin{cases}
     1, & \text{if } \cost^\prime_\action \observationmatrix_\observation  \prior \leq \cost^\prime_{\actionalt} \observationmatrix_\observation  \prior, \actionalt \in \actionspace \\
     0, &\text{otherwise}
 \end{cases},\\
    \end{split}
\end{align}
where $\observationmatrix_\observation = \mathsf{diag}([\probabilitymeasure(\observation|\statevar=1),\dots, \probabilitymeasure(\observation|\statevar=\statedim)])$ and $\cost_\action = [\cost(1,\action),\dots,\cost(\statedim,\action)]^\prime$. 
\begin{algorithm}[h!]
    \begin{algorithmic}[1]
    \State  Agents aim to estimate state $\statevar$
    \For{$\timeindex \in 1,2,\dots$}
    \State Agent $\timeindex$ observes $\llmobservation_\timeindex\sim \probabilitymeasure(\llmobservation_\timeindex|\statevar)$ 
    \State Agent $\timeindex$ uses LLM to obtain $\observation_\timeindex\sim \probabilitymeasure(\observation_\timeindex|\llmobservation_\timeindex)$ 
    \State Agent $\timeindex$ computes posterior using~\eqref{eq:bayesrule} or \eqref{eq:bayesruletwo} depending on availability of previous observations. 
    \State Agent $\timeindex$ takes optimal action according to~\eqref{eq:action}
    \State Agents $\timeindex+\numagentsprev+1,\dots$  update public belief using~\eqref{eq:priorupdate}
    \EndFor{}
    \end{algorithmic}
    \caption{Social Learning Protocol for \bayesianagents}
    \label{alg:sociallearning}
\end{algorithm}
\subsection{Social learning protocol when the last $\numagentsprev$ private observations are shared}
We now consider a modification of the Bayesian social learning described in the previous subsection. We let the agents use the observations from the last $\numagentsprev$ \bayesianagents\ to update their posterior. We weaken assumption (B1) to the following, 
\begin{enumerate}[start = 2,label={\bfseries (B\arabic*)}]
    \item The observations $\llmobservation_\timeindexalt$ and  $\observation_\timeindexalt$ are visible to agent $\timeindexalt$ and the next $\numagentsprev$ agents, i.e., to \bayesianagents\ $\timeindex=\timeindexalt,\timeindexalt+1,\dots,\timeindexalt+\numagentsprev$. 
\end{enumerate}
To ensure the privacy of the text $\llmobservation_\timeindex$, only the feature outputs by the LLMs $\observation_\timeindex$ can also be shared, with the implicit assumption that the likelihood for the different LLMs used by the previous agents is approximately the same. 

Agent $\timeindex$ creates a vectors using the $\numagentsprev+1$ observations $\mathbf{\observation_\timeindex} = [\observation_{\timeindex-\numagentsprev},\dots,\observation_\timeindex]'$. Since each of the observations of $\mathbf{\observation_\timeindex}$ of the state $\statevar$  are sampled independently. The augmented observation space is $\observationspace^\numagentsprev$ The joint likelihood can be computed as $\probabilitymeasure(\mathbf{\observation_\timeindex}|\statevar) = \prod_{\timeindexalt=0}^{\numagentsprev} \observationmatrix_{\observation_{\timeindex-\timeindexalt},\statevar}$. Then the Bayesian update of~\eqref{eq:bayesrule} can be augmented as follows, 
  \begin{align}\label{eq:bayesruletwo}
\probabilitymeasure(\statevar|\mathbf{\observation_\timeindex}) = \frac{\probabilitymeasure(\mathbf{\observation_\timeindex}|\statevar)\prior_\timeindex(\statevar)}{\sum_{\statevar^\prime \in \statespace} \probabilitymeasure(\mathbf{\observation_\timeindex}|\statevar)\prior_\timeindex(\statevar^\prime)}.
    \end{align}

The \bayesianagent\ $\timeindex$ takes the action $\action_\timeindex$ corresponding to the action which maximizes the expected cost using~\eqref{eq:action}. We make the following assumption related to the agents discarding actions of the previous agents if the observation is available,
\begin{enumerate}[start = 3,label={\bfseries (B\arabic*)}]
    \item In lieu of observations $\observation_{\timeindexalt-\numagentsprev},\dots,\observation_{\timeindex-1}$, \bayesianagent\  $\timeindexalt$ disregards observed actions $\action_{\timeindexalt-\numagentsprev},\dots,\action_{\timeindexalt-1}$ of the previous $\numagentsprev$ agents. 
\end{enumerate}
Hence as a consequence of (B3), the action of agent $\timeindex$ is used by agents $\timeindex+\numagentsprev,\timeindex+\numagentsprev+1,\dots$ to update their public belief $\prior_\timeindexalt,\timeindexalt=\timeindex+\numagentsprev,\timeindex+\numagentsprev+1,\dots$ using~\eqref{eq:priorupdate}.

\subsection{Emergence of Herds and Information Cascades}
This section proves that the \bayesianagents\ described in the previous section form an information cascade and herd in their actions when the public belief gets strong. 

We first define an information cascade occurring at time $\timeindex$ for the \bayesianagents\ in the following definition.
\begin{definition}\label{def:infocascade}(\textbf{Information Cascade}): 
    An information cascade is said to occur at time $\herdtime$ if the public belief of all agents after time $\herdtime$ are identical. That is, $\prior_{\timeindex}(\statevar) = \prior_{\herdtime}(\statevar)$ for all states $\forall \ \statevar\in\statespace$ for all time $\ \timeindex\geq\herdtime$. 
\end{definition}
Information cascade implies that the public belief freezes after time $\herdtime$, and since the public belief freezes, the optimal action taken using~\eqref{eq:action} under any the posterior of~\eqref{eq:bayesrule}. Since the information cascade implies the optimal action remains the same, the following definition naturally describes herding at time $\herdtime$ for \bayesianagents\ where the actions remain the same.
\begin{definition}\label{def:herding}(\textbf{Herding}):
    Herding in the \bayesianagents\ agents takes place at time $\herdtime$ if the action of all agents after $\herdtime$ are identical, i.e. $\action_{\timeindex} = \action_{\herdtime}$ for all time $\timeindex\geq\herdtime$. 
\end{definition}
\begin{figure}
    \centering
    \tikzset{every picture/.style={line width=0.75pt}} 

\begin{tikzpicture}[x=0.75pt,y=0.75pt,yscale=-1,xscale=1]

\draw    (176,41) -- (355.33,41) -- (470,41) ;
\draw [shift={(470,41)}, rotate = 180] [color={rgb, 255:red, 0; green, 0; blue, 0 }  ][line width=0.75]    (0,5.59) -- (0,-5.59)   ;
\draw [shift={(265.67,41)}, rotate = 180] [color={rgb, 255:red, 0; green, 0; blue, 0 }  ][line width=0.75]    (0,5.59) -- (0,-5.59)   ;
\draw [shift={(412.67,41)}, rotate = 180] [color={rgb, 255:red, 0; green, 0; blue, 0 }  ][line width=0.75]    (0,5.59) -- (0,-5.59)   ;
\draw [shift={(176,41)}, rotate = 180] [color={rgb, 255:red, 0; green, 0; blue, 0 }  ][line width=0.75]    (0,5.59) -- (0,-5.59)   ;

\draw (196,1) node [anchor=north west][inner sep=0.75pt]   [align=left] {Herding\\($\displaystyle u=0$)};
\draw (414,0.67) node [anchor=north west][inner sep=0.75pt]   [align=left] {Herding\\($\displaystyle u=1$)};
\draw (304.67,2) node [anchor=north west][inner sep=0.75pt]   [align=left] {Learning\\};
\draw (152,31.73) node [anchor=north west][inner sep=0.75pt]    {$\pi_{k}$};
\draw (170,51.73) node [anchor=north west][inner sep=0.75pt]    {$0$};
\draw (465.33,48.4) node [anchor=north west][inner sep=0.75pt]    {$1$};
\draw (258,49.73) node [anchor=north west][inner sep=0.75pt]    {$\overline{\pi }_{1}$};
\draw (404.67,49.73) node [anchor=north west][inner sep=0.75pt]    {$\overline{\pi }_{2}$};

\end{tikzpicture}
    \caption{Herding and Learning Regions for state space with $2$ states. \change{The scalar $\prior_\timeindex \in [0,1]$ denotes the prior of state $1$.} If the prior is in the region  $\prior \in [\bar{\prior}_1,\bar{\prior}_2]$, learning happens; otherwise, the \bayesianagents\ form an information cascade and herd in their actions. }
    \label{fig:herdingregionstwo}
\end{figure}
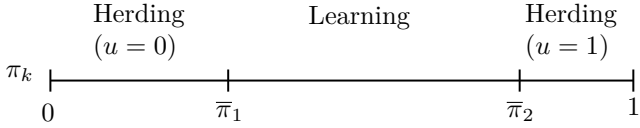
It is straightforward to show that an information cascade (Def.~\ref{def:infocascade}) occurring at time $\timeindex$ implies that herding also takes place at time $\timeindex$ (Def.~\ref{def:herding}). We now state the main result on herding in \bayesianagents, which shows that the protocol of Algorithm~\ref{alg:sociallearning} leads to the agents herding in finite time~\cite{krishnamurthy_partially_2016}. 
\begin{theorem}\label{th:herding}(\textbf{Herding in Bayesian social learning of \bayesianagents})
    The social learning protocol of the \bayesianagents\ described in Algorithm~\ref{alg:sociallearning}, under either assumption (B1) or assumptions (B2,B3) leads to an information cascade (Def.~\ref{def:infocascade}) and agents herd (Def.~\ref{def:herding}) in finite time $\herdtime<\infty$ with probability 1.
\end{theorem}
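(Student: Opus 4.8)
The plan is to combine the martingale convergence theorem with a finiteness argument on the observation alphabet. Throughout, let $\filtration_\timeindex$ denote the filtration generated by the public actions available to agent $\timeindex$ (the actions $\action_1,\dots,\action_{\timeindex-1}$ under (B1), and the appropriately $\numagentsprev$-delayed actions under (B2,B3)). First I would observe that, by construction, the public belief equals the conditional law of the state given public information, $\prior_\timeindex(\statevar)=\probabilitymeasure(\statevar\mid\filtration_\timeindex)$, so that for each fixed state $\statevar\in\statespace$ the scalar process $\prior_\timeindex(\statevar)$ is a bounded martingale: the tower property gives $\expectation[\prior_{\timeindex+1}(\statevar)\mid\filtration_\timeindex]=\prior_\timeindex(\statevar)$, and $\prior_\timeindex(\statevar)\in[0,1]$. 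The martingale convergence theorem then yields $\prior_\timeindex\to\prior_\infty$ almost surely in the simplex $\probspace(\statespace)$, and in particular the increments vanish, $\lVert\prior_{\timeindex+1}-\prior_\timeindex\rVert\to 0$ a.s.

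Next I would make precise the geometry linking convergence to Definitions~\ref{def:infocascade} and~\ref{def:herding}. Define the herding region $\region\subseteq\probspace(\statespace)$ as the set of beliefs $\prior$ for which the cost-minimizing action in~\eqref{eq:probaction} is the same for every observation symbol, i.e.\ $\argmin_{\action}\cost_\action^\prime\observationmatrix_\observation\prior$ is independent of $\observation\in\observationspace$ (and, under (B2,B3), of the augmented symbol in the finite space $\observationspace^{\numagentsprev+1}$). On $\region$ the realized action carries no information about the private observation, so $\actionprob(\prior,\action)$ acts as the identity in~\eqref{eq:priorupdate} and $\priorupdate(\prior,\action)=\prior$; thus $\region$ is absorbing, and any trajectory that enters $\region$ freezes, which is exactly an information cascade and hence herding. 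Its complement $\region^c$ (the learning region) is where at least two observation symbols induce different actions.

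The crux is to upgrade almost-sure convergence into absorption in $\region$ in finite time. Here I would use the finiteness of the observation alphabet: from any prior the set of possible posterior beliefs after one informative action is finite, so there is a uniform $\delta>0$ such that whenever $\prior_\timeindex\in\region^c$ and an informative action is realized, $\lVert\prior_{\timeindex+1}-\prior_\timeindex\rVert\ge\delta$; moreover an informative action is realized with probability at least some $p_0>0$ uniformly on the relevant compact part of $\region^c$. Combined with $\lVert\prior_{\timeindex+1}-\prior_\timeindex\rVert\to 0$ a.s., this forces $\{\prior_\timeindex\in\region^c\}$ to occur only finitely often almost surely, since a jump of size $\ge\delta$ cannot recur infinitely often along a convergent sequence. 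Hence there is a finite random time $\herdtime<\infty$ after which $\prior_\timeindex\in\region$ for all $\timeindex\ge\herdtime$, giving the cascade and herding in finite time w.p.\ 1. The same argument covers (B2,B3) verbatim, because sharing only the last $\numagentsprev$ observations keeps the augmented observation space finite; this is precisely where sharing a \emph{finite} number of observations fails to prevent herding.

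The hard part will be the uniform lower bound $\delta$ on the belief jump in $\region^c$, together with the $p_0$ lower bound on the probability of an informative action: near the boundary of the simplex Bayesian updates shrink, so I must argue that the portion of $\region^c$ that is actually visited is bounded away from the simplex boundary (using that $\observationmatrix$ has full support and the cost structure is non-degenerate, as in Figure~\ref{fig:herdingregionstwo} where $0<\bar{\prior}_1<\bar{\prior}_2<1$), or otherwise localize the estimate to the compact interior set that the trajectory occupies. Verifying this non-degeneracy and making the ``jump bounded below $\Rightarrow$ finitely many visits'' step rigorous is the technical heart of the proof; the martingale convergence and the absorbing-region geometry are comparatively routine.
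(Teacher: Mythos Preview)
Your proposal follows the same high-level strategy as the paper: (i) the public belief $\prior_\timeindex$ is a bounded $\filtration_\timeindex$-martingale and hence converges a.s.; (ii) outside the herding region an informative update produces a jump bounded below by a positive constant; (iii) these two facts contradict each other unless the process enters the herding region in finite time. The one substantive difference is the coordinate system. The paper does not work with $\prior_\timeindex$ directly but with the pairwise log-likelihood ratios $\priorratio_\timeindex(\stateidx,\stateidxalt)=\log\bigl(\prior_\timeindex(\stateidx)/\prior_\timeindex(\stateidxalt)\bigr)$, whose one-step increment is
\[
\actionratio_\timeindex(\stateidx,\stateidxalt)=\log\frac{\probabilitymeasure(\action_\timeindex\mid\statevar=\stateidx,\prior_\timeindex)}{\probabilitymeasure(\action_\timeindex\mid\statevar=\stateidxalt,\prior_\timeindex)}=\log\frac{\sum_{\observation\in S_\timeindex}\observationmatrix_{\observation,\stateidx}}{\sum_{\observation\in S_\timeindex}\observationmatrix_{\observation,\stateidxalt}},
\]
where $S_\timeindex\subseteq\observationspace$ is the set of observations that induce the realized action. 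The point is that $\actionratio_\timeindex$ depends on $\prior_\timeindex$ only through the partition of $\observationspace$ into action cells; since $\observationspace$ (or $\observationspace^{\numagentsprev+1}$ under (B2,B3)) is finite, there are only finitely many such partitions, so $\actionratio_\timeindex$ ranges over a \emph{finite} set and the lower bound $\constant=\min\{|\actionratio|:\actionratio\neq 0\}>0$ is automatic and genuinely uniform. This dissolves precisely the difficulty you flag as the ``hard part'': in log-ratio coordinates Bayesian updates are scale-free, so nothing shrinks near the simplex boundary and no compactness argument on $\region^c$ is needed. Your belief-space route can be completed along the continuity/compactness lines you sketch (localizing near the limit $\prior_\infty$, which lies in the open learning region under the contradiction hypothesis), but the paper's log-ratio change of variables gives the uniform bound for free.
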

\begin{proof}
    Proof in Appendix.
\end{proof}
Theorem~\ref{th:herding} shows that herding happens in finite time, and therefore, the agents take the same action regardless of their private observation. Discarding the private observation, which provides valuable information about the current state, makes their state estimation incorrect and inefficient.    
\begin{remark}
    From a purely statistical perspective, Theorem~\ref{th:herding} can be seen as the following: when the priors are updated without seeing the observation but rather using the correlated actions, the posterior becomes inconsistent and need not necessarily converge to the true value asymptotically. 
\end{remark}
\subsection{Effect of the number of private observations revealed and resolution of the probe on  herding convergence}\label{sec:herdingcvg}
We next discuss the effect of the resolution of the LLM probe and the number of private observations in changing the threshold at which the convergence takes place. For the purpose of this section, assume that the state space is such that $|\statespace| = 2$ and the action space is such that $\vert\actionspace\vert = 2$ and consider the case of the Bayesian agents performing inference.

We first mathematically describe the different regions with respect to the public belief. Then, we derive the relation between the threshold of the public belief and the observation matrix for the different observations. Such a derivation can be used to see the effect of more accurate observations either by considering a higher resolution probe or by considering more number of observations. 

We can derive the following for the different regions with respect to the public belief $\prior$ where herding happens and where it does not, i.e., where learning happens. 
\begin{align*}
    \probabilityregion(\prior) = \begin{cases}
        \text{Region 1  (Herding} \ \action = 1),&\\ \ \ \ \ \ \cap_{\observation \in \observationspace} \left\{\sum_{\statevar\in\statespace}(\cost(\statevar,1) - \cost(\statevar,2))\probabilitymeasure(\statevar|\observation) \leq 0 \right\} \\
        \text{Region 2 (Learning} \ \action = \observation), &\\ \ \ \ \ \
        \cup_{\observation \in \observationspace} \left\{\sum_{\statevar\in\statespace}(\cost(\statevar,1) - \cost(\statevar,2))\probabilitymeasure(\statevar|\observation) > 0 \right\}\\ \ \ \ \bigcap \cup_{\observation \in \observationspace} \left\{\sum_{\statevar\in\statespace}(\cost(\statevar,2) - \cost(\statevar,1))\probabilitymeasure(\statevar|\observation) > 0 \right\} \\
        \text{Region 3 (Herding} \ \action = 2), &\\ \ \ \ \ \ \cap_{\observation \in \observationspace} \left\{\sum_{\statevar\in\statespace}(\cost(\statevar,2) - \cost(\statevar,1))\probabilitymeasure(\statevar|\observation) \leq 0 \right\}  \\
    \end{cases}
    \end{align*}
\begin{remark}
    The above equation can be derived by equations~\eqref{eq:priorupdate} and~\eqref{eq:action} by setting the action to be constant for the herding regions. The regions are also illustrated in Figure~\ref{fig:herdingregionstwo} for $2$ states and are numerically shown for $3$ states of a real-world dataset in Figure~\ref{fig:herdingregions}.
\end{remark}
Note that learning (region 2) only happens when the action taken by the \bayesianagent\ corresponds to the observation. 

Let $\prior = [p,1-p]^\prime$ and $\cost(\statevar,\action) = |\action - \statevar|$. We first derive the expression of the region of herding for a specific observation $\observation$, 
\begin{align*}
    &- \frac{p\observationmatrix_{\observation,1}}{p\observationmatrix_{\observation,1}+(1-p)\observationmatrix_{\observation,2}} + \frac{(1-p)\observationmatrix_{\observation,2}}{p\observationmatrix_{\observation,1}+(1-p)\observationmatrix_{\observation,2}} \leq 0 \\ &\implies
     \frac{(1-p)\observationmatrix_{\observation,2}-p\observationmatrix_{\observation,1}}{p\observationmatrix_{\observation,1}+(1-p)\observationmatrix_{\observation,2}} \leq 0\implies 
     p \geq \frac{\observationmatrix_{\observation,2}}{\observationmatrix_{\observation,2} + \observationmatrix_{\observation,1}},
\end{align*}
and then take the intersection to obtain, 
\begin{align*}
    p \geq \max_{\observation\in\observationspace}\frac{\observationmatrix_{\observation,2}}{\observationmatrix_{\observation,2} + \observationmatrix_{\observation,1}}.
\end{align*}
We can prove a similar argument for state $2$ and obtain the following result,
\begin{align*}
     p \leq \min_{\observation\in\observationspace}\frac{\observationmatrix_{\observation,1}}{\observationmatrix_{\observation,2} + \observationmatrix_{\observation,1}}.
\end{align*}

This discussion shows us that even for a simplistic setup, improving the probe accuracy of the LLM helps reduce the herding threshold. However, more accurate LLMs are often larger and have a higher unit cost, clearly highlighting the tradeoff between herding and the cost incurred.  A similar result can be derived when the number of shared observations $\numagentsprev$ increases, as the observation space grows with $\numagentsprev$. 


\begin{example}\label{eg:hsp}
We show empirically how interacting LLMs can be used to identify a hate speech peddler (HSP) from a large corpus of data (tweets, blogs, pictures, essays, opinions). Since LLMs charge per token and have latency constraints, we consider multiple LLMs that collaborate to process
the large corpus of information. We now experimentally illustrate how information cascades emerge when LLM agents inter-
act to identify an HSP. We used the Mixtral-8x7B-v0.1 LLM. The state $\statevar \in {1 = \text{(not HSP)}, 2 = \text{(HSP)}}$ is the ground truth. The observations $\{\llmobservation_\timeindex\}$ are the content generated by users. When LLM agent $\timeindex$ receives $\llmobservation_\timeindex$, it parses the content to generate a low dimensional observation $\observation_\timeindex\in\{1,2\}$,
designed to detect hate speech. This observation $\observation_\timeindex$ is processed by the Bayesian engine, which computes the posterior of the state. Based on the posterior, the LLM agent selects action $\action \in \{1 = \text{not HSP}, 2 = \text{HSP}\}$ by minimizing the Type-1 error cost $\probabilitymeasure( \statevar = \text{HSP})|\action_1, . . . , \action_{\timeindex-1}, \observation_\timeindex)$.
This action $\action_\timeindex$ is broadcast to subsequent LLM agents that parse the remaining content, but the private observation $\observation_\timeindex$ is kept confidential to preserve privacy. We empirically computed $B_{11} = B_{22} = 0.8$ from training data. Figure~\ref{fig:herdingexample} displays two sample paths of actions generated by the LLM agents for different initial priors. Both sample paths emerge into information cascades.
\end{example}
\begin{figure}
    \centering
\resizebox{\columnwidth}{!}{
    \input{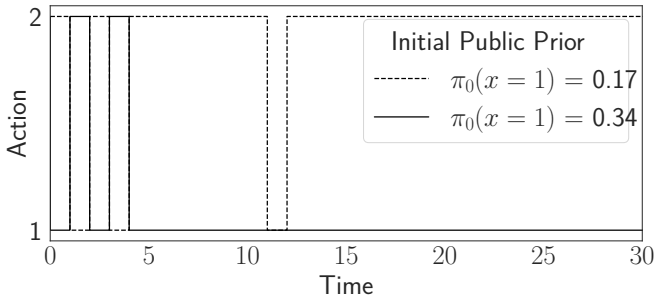}}
    
    \caption{Emergence of information cascade in LLM Agents for Bayesian Inference of Hate Speech Peddler in Example~\ref{eg:hsp}. Even though the underlying state is $2$ (not an HSP),  the cascade is in the wrong direction when $\prior_0(\statevar=1) = 0.34$.  }
    \label{fig:herdingexample}
\end{figure}
\subsection{Summary}
Given the privacy, computational, and cost constraints, multiple \bayesianagents\ need to interact with each other to perform sequential Bayesian inference on online platforms. 
This section studied Bayesian social learning in a sequence of \bayesianagents\ to analyze multiple \bayesianagents\ performing sequential state estimation on online platforms.  We discuss the relaxation of the standard Bayesian social learning protocol, in which the agents are allowed to share their private observations. We illustrate the effect on the herding threshold of the public belief when \bayesianagents\ are allowed to share private observations and use more accurate LLMs as sensors.

\section{Asymmetric Information Structures of Large Language Model Agents}\label{sec:wordofmouth}
Motivated by the observed model collapse while training LLMs in a sequential manner and data incest in asynchronous Bayesian sensors, this section studies word-of-mouth and asynchronous social learning. 
Word-of-mouth social learning is a hierarchical social learning paradigm characterized by asymmetric information flow, where lower-level agents process and communicate observations, and top-down influence, where top-level agents dictate public belief.
We provide protocols for both kinds of social learning and a corollary that shows that information cascades happen in word-of-mouth learning as well. Techniques to perform stochastic control to prevent model collapse and data incest are not in the scope of this paper and can be considered in future work. 
\subsection{Motivation. Model Collapse in LLMs. }
Several recent studies~\cite{shumailov2024ai} have shown that when LLMs are repeatedly trained on data generated by other LLMs, a phenomenon known as "model collapse" can occur.
 In model collapse, the output probability distribution of the model collapses to a degenerate distribution as the model is trained iteratively on data generated by the previously trained model~\cite{kazdan2024collapsethriveperilspromises}. Similar results have been shown on model distillation, where a smaller LLM is trained using data generated from a larger LLM~\cite{cvprmodeldistillation}.
We use the Bayesian social learning model to show how an information cascade in LLMAs is similar to the model collapse observed while training LLMs. 

Model collapse observed while training LLMs can be seen as a special case of sequential Bayesian social learning. In the case of training LLMs using data generated from the previous LLM, it can be considered as estimating the underlying state $\statevar$, which is the true probability distribution of the data. However at each time $\timeindex$, instead of receiving a private observation from the state $\statevar$ \bayesianagent\ $\timeindex$ receives observation $\mathbf{\observation}_\timeindex$ from the previous  \bayesianagent\ $\timeindex-1$. The \bayesianagent\ $\timeindex$ then minimizes a cost function, which is an entropic regularizer (maximum likelihood, KL-divergence, or cross-entropy loss) to obtain an estimate of the state $\statevar$ (represented by $\hat{\statevar}$) from the observations $\mathbf{\observation}_\timeindex$. The \bayesianagent, then uses the estimate $\hat{\statevar}$ to sample observations $\mathbf{\observation}_{\timeindex+1}$ which it provides to agent at $\timeindex+1$. 

If the underlying true probability distribution was a Gaussian, then it can be shown that such a protocol leads to a slowdown of learning~\cite{krishnamurthy2024slowconvergenceinteractingkalman}. However, in the case of discrete distributions in which the LLMs learn such a protocol, it leads to collapsing on one of the support points~\cite{shumailov2024ai}; therefore, model collapse in the training of LLMs can be studied using the framework of Bayesian social learning. 

Asynchronous social learning in \bayesianagents\ is motivated by real-time settings like online platforms where there is a stream of data of the order of a hundred thousand every second  \cite{liu2024a,10.1145/2623330.2623637,national2013frontiers}. Since LLM functionality within an LLMA often requires several milliseconds to a few seconds, especially if these are third-party services, sequential Bayesian learning is often not possible. This is true particularly when the \bayesianagents\ are used for tasks that are more sophisticated than just Bayesian inference\cite{li2024personalllmagentsinsights}. 
\subsection{Word of Mouth Bayesian Social Learning in LLM Agents}
We now describe the word-of-mouth social learning protocols in $\numagentsprev$ \bayesianagents. The protocol is summarized as a pseudo-code in Algorithm~\ref{alg:wordofmouth}. The protocol can be considered to run on two timescales. On the slower time scale, the first \bayesianagent\ receives a new text observation $\llmobservation_{\timeindex\numagentsprev} \sim \probabilitymeasure(\llmobservation|\statevar)$ of the state $\statevar$, \change{where $\probabilitymeasure(\llmobservation|\statevar)$ is the observation likelihood of text $\llmobservation$ given the state $\statevar$. The LLMA does not explicitly have knowledge of $\probabilitymeasure(\llmobservation|\statevar)$ but only receives a text observation $\llmobservation$ from it.} On the \change{faster} time scale, the \bayesianagents\ communicate with each other in a sequential fashion by generating text observations corresponding to the low-dimensional features from the received text observation. That is, agent $\timeindexalt$ first receives a text observation $\llmobservation_{\timeindexalt}$ and uses an LLM to obtain a low dimensional observation $\observation_\timeindexalt$. The agent $\timeindexalt$ then takes an action to update the public belief using \eqref{eq:action}. Then, the \bayesianagent\ uses the LLM to generate a \textit{synthetic} text observation $\llmobservation_{\timeindexalt+1}$ used by the next agent. 
\begin{algorithm}
\begin{algorithmic}[1]
    \For{$\timeindex \in 0,1,2,\dots$} 
    \State \bayesianagent-($\timeindex\numagentsprev$) receives a observation $\llmobservation_{\timeindex\numagentsprev}\sim \probabilitymeasure(\llmobservation|\statevar)$.
    \For{$\timeindexalt \in \timeindex\numagentsprev,\timeindex\numagentsprev+1,\timeindex\numagentsprev+2,\dots $\change{$(\timeindex+1)\numagentsprev-1$}}
        \State \bayesianagent-$\timeindexalt$ obtains low-dimensional features $\observation_{\timeindexalt}\sim \probabilitymeasure(\observation|\llmobservation_{\timeindexalt})$ using LLM
        \State \bayesianagent\ takes optimal action $\action_\timeindexalt$ using \eqref{eq:action}.
        \State Public prior is updated, $\prior_{\timeindex+1} = \priorupdate(\prior_\timeindex,\action_{\timeindexalt})$.
        \State \bayesianagent-$\timeindexalt$ generates synthetic text observation $\llmobservation_{\timeindexalt +1}\sim \probabilitymeasure(\llmobservation|\observation_{\timeindexalt},\action_\timeindexalt)$ using an LLM
    \EndFor{}
    \EndFor{}
\end{algorithmic}
\caption{Word Of Mouth Protocol for \bayesianagents}
\label{alg:wordofmouth}
\end{algorithm}

Hence, the main difference is that each agent does not receive a new private observation. Note that there are different versions of the word-of-mouth, for example, one where the prior update is done at the end of the inner loop once. All of them are interesting to study. However, we focus on the one presented because we can derive the following result as a corollary of Th.~\ref{th:herding} showing herding of \bayesianagents\ in Algorithm~\ref{alg:wordofmouth}. 
\begin{corollary}\label{corr:herdingwom}
 The word-of-mouth social learning protocol of the \bayesianagents\ described in Algorithm~\ref{alg:wordofmouth} leads to an information cascade (Def.~\ref{def:infocascade}) and therefore herd (Def.~\ref{def:herding}) in their actions with probability 1.    
\end{corollary}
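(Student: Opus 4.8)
The plan is to obtain the corollary as a direct reduction to Theorem~\ref{th:herding}, by showing that the public belief process generated by the word-of-mouth protocol of Algorithm~\ref{alg:wordofmouth} has exactly the same structure as the one analyzed for the sequential protocol of Algorithm~\ref{alg:sociallearning}. The central observation is that, although the word-of-mouth \bayesianagents\ exchange \emph{synthetic} text observations rather than fresh private observations of the state, the public prior is still updated \emph{only} through the broadcast actions $\action_\timeindexalt$ via the identical filtering recursion~\eqref{eq:priorupdate} (line~6 of Algorithm~\ref{alg:wordofmouth}), with the same action-likelihood kernel $\actionprob(\prior,\action)$ of~\eqref{eq:probaction}. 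Hence the herding analysis depends not on how the observations are produced, but only on the fixed map $\prior \mapsto \priorupdate(\prior,\action)$ together with the finiteness of $\actionspace$ and $\observationspace$.

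First I would make precise the public filtration $\fusionfiltration_\timeindex$ generated by the sequence of broadcast actions and verify that the belief $\prior_\timeindex(\statevar) = \probabilitymeasure(\statevar \mid \fusionfiltration_\timeindex)$ is a bounded, simplex-valued martingale with respect to $\fusionfiltration_\timeindex$; this follows from the tower property, since each $\prior_{\timeindex+1}$ is itself a conditional expectation of the state indicator given a finer $\sigma$-algebra. Second, by the martingale convergence theorem I would conclude that $\prior_\timeindex \to \prior_\infty$ almost surely. Third, I would exploit the finiteness of the action and observation spaces: each update $\priorupdate(\prior_\timeindex,\action_\timeindexalt)$ either leaves the belief unchanged (when the cost-minimizing action of~\eqref{eq:action} is independent of the observation, so that $\actionprob(\prior,\action)$ equals the identity and $\priorupdate(\prior,\action)=\prior$) or moves it by a jump bounded away from zero. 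Almost sure convergence therefore forces the belief to enter, in finite time $\herdtime < \infty$, a region in which the chosen action is observation-independent; this is precisely an information cascade in the sense of Definition~\ref{def:infocascade}, and by the implication that a cascade entails identical actions it yields herding (Definition~\ref{def:herding}).

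The main obstacle is the synthetic nature of the inter-agent observations: because \bayesianagent~$\timeindexalt$ receives $\llmobservation_{\timeindexalt} \sim \probabilitymeasure(\llmobservation \mid \observation_{\timeindexalt-1},\action_{\timeindexalt-1})$ generated by its predecessor rather than an independent draw from the state through $\observationmatrix$, one must check that this correlated, prior-dependent observation stream does not destroy the martingale property on which Theorem~\ref{th:herding} rests. I expect the resolution to be that the property is preserved because the \emph{public} belief is, by construction, the conditional law of the state given the observed action history under the common generative model shared by all agents; the Doob-martingale property of a conditional expectation is agnostic to the mechanism generating the conditioning variables, so the recursion~\eqref{eq:priorupdate} remains a valid martingale update whether the observations feeding the private posteriors are fresh or synthetic. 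Once this invariance is argued carefully (in particular, checking absolute continuity of the synthetic stream with respect to the assumed model so that the almost-sure statement transfers), the convergence-plus-finite-jumps argument of Theorem~\ref{th:herding} carries over verbatim and no new estimate is required.
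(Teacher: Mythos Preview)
Your proposal is correct and takes essentially the same approach as the paper: both reduce the word-of-mouth protocol to Theorem~\ref{th:herding} by arguing that the public belief remains a bounded martingale whose convergence forces a finite-time cascade. The only difference is in framing---the paper handles the synthetic observations by noting that each agent can be viewed as having a \emph{different} effective observation likelihood (the concatenation of the LLM low-dimensional sensor map and the LLM text-generation step from its predecessor), whereas you argue more abstractly that the Doob-martingale property is agnostic to the observation mechanism; your remark that the kernel $\actionprob(\prior,\action)$ is ``the same'' should accordingly be sharpened to allow for this agent-dependent likelihood, but the martingale-plus-finite-jump argument goes through unchanged.
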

The above result can be proved using Theorem~\ref{th:herding}, where each agent has a different observation likelihood based on the previous agent, each of which is a concatenation of two observation likelihoods: the LLM as a low-dimensional sensor map and the LLM sampling a text observation from the text observation. Corollary~\ref{corr:herdingwom} shows that even in the modified protocol of Algorithm~\ref{alg:wordofmouth}, where synthetic data is used to aid the decision-making of \bayesianagents, cascades are inevitable. 
\begin{algorithm}
\caption{Naive Asynchronous Data Fusion in \bayesianagents}
\begin{algorithmic}[1]
\State Initialize prior \( \prior_0 \)
\While {\bayesianagent \ $\timeindex$  receives new observation \( \llmobservation_\timeindex \) and a broadcasted prior $\prior_{\timeindex-1}$}
    \State \bayesianagent \ $\timeindex$ uses LLM to obtain $\observation_{\timeindex}\sim \probabilitymeasure(\observation|\llmobservation_{\timeindex})$
    \State Broadcast the posterior $\prior_{\timeindex} = \priorupdate(\prior_{\timeindex-1},\observation_{\timeindex})$ 
    \State $\timeindex = \timeindex+1$
\EndWhile
\State \textbf{Return} Estimate using $\prior_{\timeindex}$ and Eq.~\eqref{eq:action} 
\end{algorithmic}
\label{alg:async}
\end{algorithm}

\textbf{Asynchronous Social Learning in LLM Agents: }
We finally consider the asynchronous social learning setting in \bayesianagents. Here, the main difference between the previous two topologies is that the agents do not necessarily act in a predefined sequential manner and do not necessarily coordinate. 

The protocol is summarized as a pseudo-code in Algorithm~\ref{alg:async}. The public belief $\prior_\timeindex$ is updated asynchronously when a \bayesianagent\ receives a new private observation $\llmobservation_\timeindex$, which it parses using the LLM to obtain a low-dimensional observation $\observation_\timeindex$. This observation is used to compute the posterior at time $\prior_\timeindex$ using a previous prior $\prior_{\timeindex-1}$. The posterior is then broadcasted. 

It is immediate to see how this protocol can lead to data incest. For example, consider a case where the \bayesianagent\ $\timeindex$ uses the prior $\prior_{\timeindex-1}$ and updates the prior. The prior $\prior_{\timeindex-1}$ was, in turn, updated based on an \bayesianagent\ $\timeindexalt$, where $\timeindexalt\leq \timeindex-1$. However, \bayesianagent\ $\timeindexalt$  at time $\timeindex+1$ sees a new prior and uses the previous one to compute its estimate without accounting for the fact that its previous observation was already used to compute this updated prior. This leads to double counting the observation and, therefore, to data incest~\cite{krishnamurthy_partially_2016}.

\section{Summary}
We discuss how the phenomenon of model collapse observed in LLMs is a form of Bayesian social learning. We present models of asymmetric information structures of \bayesianagents\ including word-of-mouth social learning and asynchronous social learning protocols. We state a corollary showing how information cascade occurs with probability 1 even in word-of-mouth social learning and motivate the careful design of asynchronous data fusion in \bayesianagents.

\vspace{0.5cm}
{\color{accessblue}{\hrule}} \vspace{0.5cm}

{\hspace{-4mm}\Large\color{accessblue} \textbf{Part III: Stochastic Control for Bayesian Social Learning in LLM Agents}}
\vspace{0.5cm}

In Part II, we studied interpretable Bayesian social learning in \bayesianagents\ and proved that information cascade is inevitable; we would like to at least delay herding. This is especially important when the \bayesianagents\ are prone to cascading to the wrong prior, which is critical in different practical applications, including hate-speech peddler identification. 

Part III, therefore, looks at stochastic control for \bayesianagents. 
For both regimes of LLMAs, when they are collaborative and autonomous, the paper formulates optimal stopping time problems to control herding by balancing the tradeoff between privacy and estimation. Structural assumptions on the optimal policy of the stopping time problems are proved by making structural assumptions on the cost and observation probabilities. The proposed solutions are extensions to our work in quickest change detection and quickest time herding~\cite{quickestchangedetectionitit,quickestchangedetection,jain2024identifying}. A policy gradient algorithm is proposed to estimate the optimal policy without the knowledge of the transition probabilities. 

\section{Optimal Stopping Time Control in Centrally Controlled LLMAs}\label{sec:quickesttimeherding}

When \bayesianagents\ are deployed in real-life settings, they often exhibit bias in their actions, especially when there are multiple such agents~\cite{borah2024implicitbiasdetectionmitigation}. The previous section showed that such bias could be explained by the herding behavior of \bayesianagents. This section formulates an optimal stopping time problem for quickest time herding in a sequence of~\bayesianagents, which ensures that the herding is optimally delayed by letting the~\bayesianagents\ share private observations. The delay in herding helps improve the state estimate. We discuss a stochastic control approach to solve the optimal stopping time problem and state assumptions that ensure that the optimal policy for the stopping time problem has a switching threshold curve with respect to the public belief. This structural result is exploited in Section~\ref{sec:stochasticapproximation} to efficiently approximate the optimal policy.  Finally, we discuss extensions of the problem framework to the optimal switching between models of different sizes in sequential state estimation tasks is discussed. The schematic of the system model considered is illustrated in Figure~\ref{fig:centralcontrolschematic}. 

\subsection{Motivation. Controlling for bias in decisions of interacting LLMAs.} 

Since LLMAs exhibit herding behavior, there is a need to control them to ensure that the estimation is more accurate. In order to do this, we propose optimally switching between sharing the private observation and herding. Our setup is further motivated by the fact that the publicly available LLMs are available in different sizes, and with increasing size, the accuracy of the LLM improves, but so does the unit cost of using the LLM. As we explained in Section~\ref{sec:herdingcvg}, the herding can be delayed by using a more accurate LLM. Therefore, the optimal switching can be considered an optimally stopping problem for using a larger LLM and switching to a smaller one so that the cost is minimized and the herding is optimal.  However, solving for the optimal policy of the stopping time problem is a computationally intensive task. Since solving the optimal policy for the optimal switching can be computationally challenging, we look at structural assumptions on the system parameters such that the optimal policy has a threshold structure that can be more efficiently searched for. Note that the POMDP considered in this and the next section is non-standard and, therefore, allows for structural results that can be exploited for efficient policy estimation techniques.

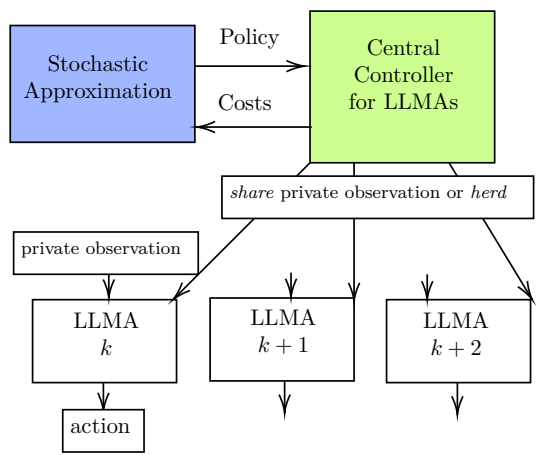
\begin{figure}[h]
    \centering
    \tikzset{every picture/.style={line width=0.75pt}} 
\resizebox{0.8\columnwidth}{!}{
\begin{tikzpicture}[x=0.75pt,y=0.75pt,yscale=-1,xscale=1]

\draw  [fill={rgb, 255:red, 161; green, 183; blue, 255 }  ,fill opacity=1 ] (125,23) -- (234,23) -- (234,92) -- (125,92) -- cycle ;
\draw  [fill={rgb, 255:red, 204; green, 255; blue, 144 }  ,fill opacity=1 ] (302,14.49) -- (411,14.49) -- (411,104.1) -- (302,104.1) -- cycle ;
\draw    (234,47) -- (297,47) ;
\draw [shift={(299,47)}, rotate = 180] [color={rgb, 255:red, 0; green, 0; blue, 0 }  ][line width=0.75]    (10.93,-3.29) .. controls (6.95,-1.4) and (3.31,-0.3) .. (0,0) .. controls (3.31,0.3) and (6.95,1.4) .. (10.93,3.29)   ;
\draw    (303,83) -- (237,83) ;
\draw [shift={(235,83)}, rotate = 360] [color={rgb, 255:red, 0; green, 0; blue, 0 }  ][line width=0.75]    (10.93,-3.29) .. controls (6.95,-1.4) and (3.31,-0.3) .. (0,0) .. controls (3.31,0.3) and (6.95,1.4) .. (10.93,3.29)   ;
\draw    (302,104.1) -- (222.51,183.59) ;
\draw [shift={(221.1,185)}, rotate = 315] [color={rgb, 255:red, 0; green, 0; blue, 0 }  ][line width=0.75]    (10.93,-3.29) .. controls (6.95,-1.4) and (3.31,-0.3) .. (0,0) .. controls (3.31,0.3) and (6.95,1.4) .. (10.93,3.29)   ;
\draw    (328,104) -- (328,184) ;
\draw [shift={(328,186)}, rotate = 270] [color={rgb, 255:red, 0; green, 0; blue, 0 }  ][line width=0.75]    (10.93,-3.29) .. controls (6.95,-1.4) and (3.31,-0.3) .. (0,0) .. controls (3.31,0.3) and (6.95,1.4) .. (10.93,3.29)   ;
\draw   (138.19,185) -- (223.19,185) -- (223.19,234) -- (138.19,234) -- cycle ;
\draw   (243,184) -- (328,184) -- (328,233) -- (243,233) -- cycle ;
\draw   (347.19,185) -- (432.19,185) -- (432.19,234) -- (347.19,234) -- cycle ;
\draw    (384,104) -- (432.15,183.29) ;
\draw [shift={(433.19,185)}, rotate = 238.73] [color={rgb, 255:red, 0; green, 0; blue, 0 }  ][line width=0.75]    (10.93,-3.29) .. controls (6.95,-1.4) and (3.31,-0.3) .. (0,0) .. controls (3.31,0.3) and (6.95,1.4) .. (10.93,3.29)   ;
\draw    (183.19,166) -- (183.19,182) ;
\draw [shift={(183.19,184)}, rotate = 270] [color={rgb, 255:red, 0; green, 0; blue, 0 }  ][line width=0.75]    (10.93,-3.29) .. controls (6.95,-1.4) and (3.31,-0.3) .. (0,0) .. controls (3.31,0.3) and (6.95,1.4) .. (10.93,3.29)   ;
\draw    (290.79,169) -- (290.79,181.55) ;
\draw [shift={(290.79,183.55)}, rotate = 270] [color={rgb, 255:red, 0; green, 0; blue, 0 }  ][line width=0.75]    (10.93,-3.29) .. controls (6.95,-1.4) and (3.31,-0.3) .. (0,0) .. controls (3.31,0.3) and (6.95,1.4) .. (10.93,3.29)   ;
\draw    (371.19,170) -- (371.19,183) ;
\draw [shift={(371.19,185)}, rotate = 270] [color={rgb, 255:red, 0; green, 0; blue, 0 }  ][line width=0.75]    (10.93,-3.29) .. controls (6.95,-1.4) and (3.31,-0.3) .. (0,0) .. controls (3.31,0.3) and (6.95,1.4) .. (10.93,3.29)   ;
\draw    (180,234) -- (180,249) ;
\draw [shift={(180,251)}, rotate = 270] [color={rgb, 255:red, 0; green, 0; blue, 0 }  ][line width=0.75]    (10.93,-3.29) .. controls (6.95,-1.4) and (3.31,-0.3) .. (0,0) .. controls (3.31,0.3) and (6.95,1.4) .. (10.93,3.29)   ;
\draw    (287,233) -- (287,250) ;
\draw [shift={(287,252)}, rotate = 270] [color={rgb, 255:red, 0; green, 0; blue, 0 }  ][line width=0.75]    (10.93,-3.29) .. controls (6.95,-1.4) and (3.31,-0.3) .. (0,0) .. controls (3.31,0.3) and (6.95,1.4) .. (10.93,3.29)   ;
\draw    (389,234) -- (389,251) ;
\draw [shift={(389,253)}, rotate = 270] [color={rgb, 255:red, 0; green, 0; blue, 0 }  ][line width=0.75]    (10.93,-3.29) .. controls (6.95,-1.4) and (3.31,-0.3) .. (0,0) .. controls (3.31,0.3) and (6.95,1.4) .. (10.93,3.29)   ;

\draw (323,29.8) node [anchor=north west][inner sep=0.75pt]   [align=left] {\begin{minipage}[lt]{49.2pt}\setlength\topsep{0pt}
\begin{center}
Central \\Controller\\for LLMAs
\end{center}

\end{minipage}};
\draw (247,23) node [anchor=north west][inner sep=0.75pt]   [align=left] {Policy};
\draw (246,62) node [anchor=north west][inner sep=0.75pt]   [align=left] {Costs};
\draw  [fill={rgb, 255:red, 255; green, 255; blue, 255 }  ,fill opacity=1 ]  (250,112) -- (434,112) -- (434,137) -- (250,137) -- cycle  ;
\draw (253,116) node [anchor=north west][inner sep=0.75pt]   [align=left] {{\footnotesize \textit{share} private observation or \textit{herd}}};
\draw  [fill={rgb, 255:red, 255; green, 255; blue, 255 }  ,fill opacity=1 ]  (127,145) -- (236,145) -- (236,170) -- (127,170) -- cycle  ;
\draw (130,149) node [anchor=north west][inner sep=0.75pt]   [align=left] {{\footnotesize private observation}};
\draw  [fill={rgb, 255:red, 255; green, 255; blue, 255 }  ,fill opacity=1 ]  (156,250) -- (204,250) -- (204,275) -- (156,275) -- cycle  ;
\draw (159,254) node [anchor=north west][inner sep=0.75pt]   [align=left] {action};
\draw (265.19,189) node [anchor=north west][inner sep=0.75pt]   [align=left] {\begin{minipage}[lt]{29.37pt}\setlength\topsep{0pt}
\begin{center}
LLMA\\$\displaystyle k+1$
\end{center}

\end{minipage}};
\draw (159.19,191) node [anchor=north west][inner sep=0.75pt]   [align=left] {\begin{minipage}[lt]{31.64pt}\setlength\topsep{0pt}
\begin{center}
LLMA \\$\displaystyle k$
\end{center}

\end{minipage}};
\draw (367.19,191) node [anchor=north west][inner sep=0.75pt]   [align=left] {\begin{minipage}[lt]{29.37pt}\setlength\topsep{0pt}
\begin{center}
LLMA\\$\displaystyle k+2$
\end{center}

\end{minipage}};
\draw (130,39) node [anchor=north west][inner sep=0.75pt]   [align=left] {\begin{minipage}[lt]{67.92pt}\setlength\topsep{0pt}
\begin{center}
Stochastic\\Approximation
\end{center}

\end{minipage}};

\end{tikzpicture}}
    \caption{Schematic of a stochastic control approach for optimally delaying herding in a sequence of centrally controlled \bayesianagentstext. Such a setting is motivated by a central entity hosting the \bayesianagents.}
    \label{fig:centralcontrolschematic}
    \vspace{-6mm}
\end{figure}

\subsection{Social Welfare Objective For Optimal Stopping}
This subsection formulates an optimal stopping time problem to delay herding by making the agents optimally switch between two modes, acting benevolently by sharing their private observations \textit{or} herding by performing the action from~\eqref{eq:action}. Let $\decision_\timeindex \in \{0 = \texttt{share private observation}, 1 = \texttt{herd}\}$ denote the decision at $\timeindex$ for the chosen mode. Let $\policy: \probspace(\statespace) \to \{0 = \texttt{share private observation}, 1 = \texttt{herd}\}$ denotes the stationary policy which maps the public belief to the decision rule. $\policy$ is a sufficient statistic for optimally delaying herding since the information cascade depends on only the public belief~\cite{krishnamurthy_partially_2016}.For this section, we assume that $\statespace = \actionspace = \observationspace$, that is, in the sequential detection task, the observation $\observation$ is the noisy observation of the underlying state $\statevar$ and $\action$ is the maximum a-priori estimate of the state. 

Further, we formulate the stopping time problem such that one of the states is of special interest (denoted by $\indicatorstate_1$), and a decision to herd is taken once this state is estimated. To formulate the objective for the optimal stopping time problem, we consider the following natural filtration, $\filtration_\timeindex$ of the actions and decisions till time $\timeindex$, $\filtration_\timeindex = \sigma(\{\action_1,\dots,\action_{\timeindex-1},\decision_1,\dots,\decision_\timeindex\})$. 

The optimal stopping time problem is to decide when to stop sharing private observations ($\decision_\timeindex=0$) and announce state $\indicatorstate_1$ ($\decision_\timeindex=1$). Let $\stoppingtime$ denote the stopping time with respect to the filtrations $\filtration_\timeindex, \timeindex=1,\dots$. 

We now state the social welfare objective that each \bayesianagent\ optimizes to solve the optimal stopping time problem and achieve quickest time herding, 
\begin{align}
\label{eq:socialwelfarecost}
    \avgcost(\policy) = \expectation_{\policy}&\left\{\sum_{\timeindex = 1}^{\stoppingtime-1} \discountfactor^{\timeindex-1} \expectation\left\{\cost(\statevar,\action_\timeindex)|\filtration_{\timeindex-1}\right\} \right.\\\nonumber&\left.+ \sum_{\timeindex=1}^{\stoppingtime-1} \discountfactor^{\timeindex-1} \delaycost  \expectation_{\policy} \left\{\indicator(\statevar=\indicatorstate_1)|\filtration_{\timeindex-1}
    \right\} \right.\\\nonumber&\left. +
    \ \discountfactor^{\stoppingtime-1} \errorcost\expectation_\policy\left\{ \indicator(\statevar\neq \indicatorstate_1 )\right\}  \right.\\&\nonumber\left.+\ \frac{\discountfactor^{\stoppingtime-1}}{1-\discountfactor} \min_{\action \in \actionspace} \expectation\left\{ \cost(\statevar,\action)|\filtration_{\stoppingtime-1}\right\}
    \right\}.
\end{align}
\change{Here $\discountfactor\in(0,1)$ is the economic discount factor and can be set to a lower value if the central controller only wants the first few actions to determine the policy.}

\textit{Justification for social welfare cost: }In~\eqref{eq:socialwelfarecost}, the first part corresponds to the discounted cost incurred if the first $\stoppingtime$ agents perform sensing, and the last term corresponds to the cost incurred for the agents after the stopping time $\stoppingtime$ (which herd and take the same opportunistic action). The second term is the delay cost with delay parameter $\delaycost$ in announcing if the underlying state is $\indicatorstate_1$ and the third term is the error cost for misclassification with $\errorcost$ as the parameter. The social welfare cost incentivizes revealing private observations to delay herding so that enough private observations are available for estimation of the state. The reason we consider estimation with respect to a single state is that many practical applications focus on the identification of a critical state ( hateful user, bad product, etc.). The same social welfare cost function of~\eqref{eq:socialwelfarecost} is optimized to obtain a policy with respect to the belief which decides when to stop herding.  
\begin{remark}

\change{In a centrally controlled setting, the central controller is the human operator who manages the platform the LLMAs are deployed on. The action space of the agents consists of either performing the optimal action or revealing their private observation. The decision to `stop' corresponds to taking the optimal action, as this would result in herding. Consequently, the optimal stopping formulation inherently leads to agents sharing their private observations until the decision to stop is made.}
\end{remark}

We consider the decision rule such that depending on the prior, the learners either share the private observation $\observation_\timeindex$ or they output the action which minimizes the expected cost of~\eqref{eq:action} under the public belief, i.e., the action $\action_\timeindex$ is of the form,
\begin{align}\label{eq:constraineddecisionrule}
\action_\timeindex(\prior_{\timeindex-1},\observation_\timeindex,\policy) = \begin{cases}
        \observation_\timeindex, & \text{if } \policy(\prior_{\timeindex-1}) = 2\\
        \argmin_\action \cost_\action \prior_{\timeindex-1}& \text{if } \policy(\prior_{\timeindex-1}) = 1 
    \end{cases}.
\end{align}
The motivation for this decision rule is two-fold: a) the herding can be delayed by ensuring enough private observations are shared, and b) the agents can reduce cost by outputting the opportunistic action once the public belief is strong enough. 

Next, we show that under certain assumptions, the optimal policy $\policy^{*}$ minimizing the expected cost~\eqref{eq:socialwelfarecost},
\begin{align}\label{eq:optpolicy}
    \policy^{*} = \inf_{\policy} \avgcost(\policy),
\end{align}
indeed has a nice threshold structure that optimally delays herding and improves the detection of state $\indicatorstate_1$.
\subsection{Structural Results on Optimal Policy}
We make the following assumptions on the cost function $\cost$ and the observation matrix. 
\begin{enumerate}[start=1,label={(\bfseries S\arabic*):}]
    \item $\cost(\indicatorstate_\stateidx,\action) - \cost(\indicatorstate_{\stateidx+1},\action) \geq 0$ $\forall \stateidx = 1,\dots,\statedim-1$ \ $\forall \action$.
    \item $\cost(\indicatorstate_\statedim,\action) - \cost(\indicatorstate_\stateidx,\action) \geq (1-\discountfactor)\sum_{\observation}(\cost(\indicatorstate_\statedim,\action)\observationmatrix_{\statedim,\observation} - \cost(\indicatorstate_\stateidx,\action)\observationmatrix_{\stateidx,\observation})$ $\forall \stateidx = 1,\dots,\statedim$.
    \item $(1-\discountfactor)\sum_{\observation}(\cost(\indicatorstate_1\action)\observationmatrix_{1,\observation} - \cost(\indicatorstate_\stateidx,\action)\observationmatrix_{\stateidx,\observation}) \geq \cost(\indicatorstate_1,\action) - \cost(\indicatorstate_\stateidx,\action)$ $\forall \stateidx = 1,\dots,\statedim$.
    \item $\observationmatrix$ is totally positive of order 2 (TP2)\footnote{A matrix $A$ is TP2 if all second order minors of the matrix $A$ are positive.}.
\end{enumerate}
\textit{Discussion of Assumptions: } (S1) ensures that the states can be ordered such that taking action in some states is costlier. (S2) and (S3) ensure that the cost function is submodular in the belief. This makes the cost differential between continuing and stopping (herding) the highest for state $\indicatorstate_1$ and gives incentive to the agents to herd when approaching the state $\indicatorstate_1$. 

The TP-2 condition on the observation matrix in (S4) ensures consistency of the observations~\cite{bhattcontrolled2020}, i.e., there is some order to the observations. For example, for a 2x2 observation matrix, TP-2 implies that $B_{11}>B_{21}$ and $B_{22}>B_{12}$. If one of the observations is more likely for a particular state, then the other observation must be more likely for the other state. In our experimental results we consider a standard type-1 error based misclassification cost, however more general costs can be considered which satisfy submodularity. 

We now state the main structural result on the threshold structure of the optimal policy~\eqref{eq:optpolicy}.
\begin{theorem}\label{th:threshold}
    Consider the sequential decision problem of \bayesianagents\ for detecting state $\indicatorstate_1$ with the social welfare cost comprising \eqref{eq:socialwelfarecost} and the constrained decision rule of \eqref{eq:constraineddecisionrule}. Then, under Assumption (S1-S4), the constrained decision rule of \eqref{eq:constraineddecisionrule} is a threshold in the belief space with respect to a threshold switching curve that partitions the belief space $\probspace(\statedim)$. The optimal policy can be given by, 
    \begin{align}\label{eq:optthresholdpolicy}
        \policy^{*}(\prior) = \begin{cases}
           2 \ \text{(continue) if} \ \prior \in \region_2 \\
           1 \ \text{(stop) if} \ \prior \in \region_1 \\
        \end{cases},
    \end{align}
    where $\region_1$ and $\region_2$ are individual connected regions of $\probspace(\statedim)$.
\end{theorem}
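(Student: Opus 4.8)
The plan is to recast the optimal stopping problem~\eqref{eq:optpolicy} as an infinite-horizon discounted belief-state Markov decision process on the simplex $\probspace(\statespace)$ and to prove that its stopping set is a connected region delimited by a monotone switching curve under the monotone likelihood ratio (MLR) stochastic order. First I would write Bellman's equation $V(\prior)=\min\{\bigcost_1(\prior),\bigcost_2(\prior)\}$. Collecting the terminal terms of~\eqref{eq:socialwelfarecost}, the cost of stopping (herding, decision $1$) is
\begin{align*}
    \bigcost_1(\prior) = \errorcost\,(1-\prior(\indicatorstate_1)) + \tfrac{1}{1-\discountfactor}\min_{\action\in\actionspace}\cost_\action^\prime\prior,
\end{align*}
and the cost of continuing (sharing, decision $2$) is
\begin{align*}
    \bigcost_2(\prior) = \sum_{\observation\in\observationspace}\sum_{\statevar\in\statespace}\cost(\statevar,\observation)\observationmatrix_{\statevar,\observation}\prior(\statevar) + \delaycost\,\prior(\indicatorstate_1) + \discountfactor\sum_{\observation\in\observationspace}V(\priorupdate(\prior,\observation))\,\indicator^\prime\observationmatrix_\observation\prior,
\end{align*}
where $\priorupdate(\prior,\observation)=\observationmatrix_\observation\prior/(\indicator^\prime\observationmatrix_\observation\prior)$ and $\observationmatrix_\observation=\diag(\observationmatrix_{1,\observation},\dots,\observationmatrix_{\statedim,\observation})$. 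Stopping is optimal exactly on $\region_1=\{\prior:\bigcost_1(\prior)\le\bigcost_2(\prior)\}$, so the theorem reduces to showing that $\region_1$ is a single MLR order interval bounded by a monotone switching curve.

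The argument rests on two structural ingredients. The first is that the Bayesian update is MLR order preserving: if $\prior_1\ge_{\mathrm{MLR}}\prior_2$ then $\priorupdate(\prior_1,\observation)\ge_{\mathrm{MLR}}\priorupdate(\prior_2,\observation)$ for every $\observation$. This is the classical consequence of the total positivity hypothesis (S4) on $\observationmatrix$, and it is what lets monotonicity pass through the filtering recursion. The second is that, under the state ordering fixed by (S1), the maps $\prior\mapsto 1-\prior(\indicatorstate_1)$ and $\prior\mapsto\min_\action\cost_\action^\prime\prior$ are MLR monotone, so $\bigcost_1$ is MLR monotone. I would then run value iteration $V^{(n+1)}=\min\{\bigcost_1,\mathcal{B}V^{(n)}\}$, where $\mathcal{B}$ denotes the continuation operator producing $\bigcost_2$, and show by induction that $\mathcal{B}$ maps the cone of MLR-monotone functions into itself: the continuation term stays monotone because the predictive weights $\indicator^\prime\observationmatrix_\observation\prior$ combine with the MLR preservation from (S4), and the pointwise minimum of monotone functions is monotone. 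Hence the fixed point $V$ is MLR monotone.

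The delicate part is the \emph{single-crossing} property of $\bigcost_2(\prior)-\bigcost_1(\prior)$ along MLR-monotone line segments, which is what converts monotonicity of $V$ into a threshold. Here assumptions (S2) and (S3) are used: they bound the one-step sensing-plus-delay cost against the terminal herding cost $\tfrac{1}{1-\discountfactor}\min_\action\cost_\action^\prime\prior$ so that the continue-minus-stop differential is submodular and most favorable to stopping near $\indicatorstate_1$, forcing the sign of $\bigcost_2-\bigcost_1$ to change at most once along any MLR-monotone line. Combining this single-crossing behavior with the MLR monotonicity of $V$ shows that the optimal decision is monotone in the MLR order, so $\region_1$ is an order interval: whenever $\prior\in\region_1$, every belief MLR-ordered toward $\indicatorstate_1$ relative to $\prior$ also lies in $\region_1$. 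Since the MLR-monotone lines foliate $\probspace(\statedim)$ and each meets $\region_1$ in a single connected sub-segment, this yields one monotone switching curve that separates $\probspace(\statedim)$ into the connected regions $\region_1$ and $\region_2$, which is the claimed form~\eqref{eq:optthresholdpolicy}.

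I expect the main obstacle to be the simultaneous inductive preservation of both MLR monotonicity and the submodular single-crossing structure through the $\min$ in the Bellman recursion. Monotonicity alone is routine once (S4) supplies MLR preservation, but propagating submodularity---and in particular guaranteeing for $\statedim>2$ that the per-line crossing points assemble into a genuinely connected switching curve rather than a fragmented set---requires the full strength of (S2)--(S3) together with a careful check that the continuation operator does not destroy the submodular order structure. This is also where the non-standard coupling between the stopping decision and the announcement of the critical state $\indicatorstate_1$ must be handled carefully, since it is this coupling that orients the switching curve toward $\indicatorstate_1$.
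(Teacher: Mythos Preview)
Your proposal is correct and follows the same conceptual route as the paper: cast the problem as a belief-state optimal stopping problem, use the TP2 assumption (S4) to obtain MLR preservation of the filter, use (S1) for monotonicity of the instantaneous costs, and use (S2)--(S3) to obtain submodularity of the continue-minus-stop cost along MLR lines, which yields a monotone threshold curve. The paper, however, does not work through the value-iteration induction directly; it instead invokes Theorem~12.3.4 of \cite{krishnamurthy_partially_2016} and simply verifies that theorem's four hypotheses (labelled (C), (F1), (F2), (S)) hold under (S1)--(S4). One substantive difference in the decomposition is worth noting: rather than keep the misclassification penalty $\errorcost(1-\prior(\indicatorstate_1))$ in the stopping cost as you do, the paper first transforms the value function so that the error cost is absorbed into the \emph{continuation} cost, yielding the pair $\bigcost(\prior,1)=\frac{1}{1-\discountfactor}\min_{\action}\cost_\action'\prior$ and $\bigcost(\prior,2)=\sum_{\observation}\cost_\observation'\observationmatrix_\observation\prior+(\delaycost+(1-\discountfactor)\errorcost)\indicatorstate_1'\prior-(1-\discountfactor)\errorcost$. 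This makes both one-step costs affine in $\prior$ (up to the $\min$), so that the submodularity condition (S) reduces exactly to the inequalities (S2)--(S3) as stated. Your direct route works but requires carrying the extra $\errorcost$ term through the single-crossing argument; the paper's transformation is the cleaner bookkeeping that explains why (S2)--(S3) take the specific form they do.
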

The above theorem proves that under certain conditions on the cost function and observation matrix (S1-4), the optimal policy for solving the discounted social welfare cost optimal stopping time problem has a switching threshold curve. Such a switching threshold can be approximated by \textit{{set of lines}} which can be searched for efficiently using a policy gradient based approach~\cite{krishnamurthy_partially_2016}. \change{ $\region_1$ and $\region_2$ are not known to the LLMA and to the central controller, and the structural result only shows the existence of such disjoint regions. These regions are unknown, and we show in Section X that for $|\statespace|=2$, one can efficiently search for these regions using a policy gradient algorithm. Although standard reinforcement learning algorithms can be used, we study the extreme case to derive an efficient algorithm. This is computationally tractable, unlike optimizing for a non-structured policy where a finite approximation is used for a general infinite-dimensional policy.} This is computationally tractable, in contrast to optimizing for a non-structured policy since it's a finite approximation of a general infinite-dimensional policy.

\begin{remark}
\change{
   The LLM agents share their current private observations, and the stopping-time policy determines when the sequence of agents ceases sharing private information. This approach can delay herding while ensuring that not all private information is disclosed. The privacy loss can be quantified using an information-theoretic metric that captures the exact loss of private information. Investigating this in practical scenarios would be an intriguing direction for future research.}
\end{remark}

\subsection{Optimal Switching Between Different LLMs}
The formulation that we proposed in this section can also be used by the \bayesianagents\ to switch between LLMs of different sizes. LLMs with a higher number of parameters can process more tokens and give more accurate responses; however, they are more expensive and use more computational resources.  Therefore we can use an action space $\actionspace = \{ 1 = \llm_{\text{small}}, \llm_{\text{large}} \}$, where $\llm_{\text{large}}$ is an LLM with significantly ($10$\change{$\times$}) more parameters than $\llm_{\text{small}}$. 

Since the LLM with more number of parameters ($\llm_{\text{large}}$) can give a more accurate observation, by the arguments of Section~\ref{sec:herdingcvg}, we can use it to delay herding. However, the cost of using $\llm_{\text{large}}$ is higher, and accounts for the privacy cost $\cost$ considered in this section. And then, once we have sufficiently many good responses, the \bayesianagents\ can switch to a smaller LLM, which will lead to a quicker information cascade but can still provide low-dimensional readings useful for analytical purposes at a lower cost. Such a problem is often referred to as a quickest change detection problem~\cite{krishnamurthy_partially_2016}.

\subsection{Summary}
When a central entity is designing and deploying \bayesianagents\ for sequential Bayesian inference on online platforms, the entity needs to ensure such \bayesianagents\ do not have a bias due to herding. 
In this section, we solved the problem of \bayesianagents\ optimally herding in Bayesian social learning framework to announce a particular state, such that their opportunistic cost and cost of obtaining and sharing the observation is balanced. The Bayesian agents considered in this section were cooperative and shared the same socialistically optimal policy. We also briefly discussed how such a scheme can be used to optimally switch between different sizes of LLMs to achieve an optimal tradeoff between cost and accuracy of estimation. In the next section we consider the problem in a different setting, where a central controller can incentivize autonomous \bayesianagents, and needs to control the incentives for improved state estimation. 
\section{Optimal Stopping Time Control for Autonomous LLMAs}\label{sec:incentivizedherding}

The \bayesianagents\ used for Bayesian inference can often be from different third-party services, each of which requires an incentive to perform the task. Motivated by controlling bias in such \bayesianagents, this section considers the problem of the central learner optimally incentivizing a sequence of autonomous \bayesianagents\ to delay their herding and obtain more accurate estimates. We first formulate the optimization problem of the central controller as a discounted cost POMDP and then show that under structural assumptions on the cost and observation matrix, the optimal policy has a threshold structure with respect to the public belief. This structure is exploited in Algorithm~\ref{alg:stochasticapprox} to approximate the optimal incentivization policy of the central controller.  The schematic of the setup considered in this section is illustrated in Figure~\ref{fig:autonomousschematic}.
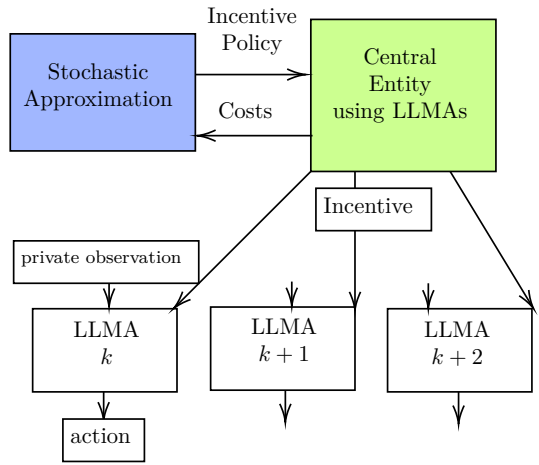
\begin{figure}[h]
    \centering
    {\tikzset{every picture/.style={line width=0.75pt}} 
\resizebox{0.8\columnwidth}{!}{
\begin{tikzpicture}[x=0.75pt,y=0.75pt,yscale=-1,xscale=1]

\draw  [fill={rgb, 255:red, 161; green, 183; blue, 255 }  ,fill opacity=1 ] (125,23) -- (234,23) -- (234,92) -- (125,92) -- cycle ;
\draw  [fill={rgb, 255:red, 204; green, 255; blue, 144 }  ,fill opacity=1 ] (302,14.49) -- (411,14.49) -- (411,104.1) -- (302,104.1) -- cycle ;
\draw    (234,47) -- (297,47) ;
\draw [shift={(299,47)}, rotate = 180] [color={rgb, 255:red, 0; green, 0; blue, 0 }  ][line width=0.75]    (10.93,-3.29) .. controls (6.95,-1.4) and (3.31,-0.3) .. (0,0) .. controls (3.31,0.3) and (6.95,1.4) .. (10.93,3.29)   ;
\draw    (303,83) -- (237,83) ;
\draw [shift={(235,83)}, rotate = 360] [color={rgb, 255:red, 0; green, 0; blue, 0 }  ][line width=0.75]    (10.93,-3.29) .. controls (6.95,-1.4) and (3.31,-0.3) .. (0,0) .. controls (3.31,0.3) and (6.95,1.4) .. (10.93,3.29)   ;
\draw    (302,104.1) -- (222.51,183.59) ;
\draw [shift={(221.1,185)}, rotate = 315] [color={rgb, 255:red, 0; green, 0; blue, 0 }  ][line width=0.75]    (10.93,-3.29) .. controls (6.95,-1.4) and (3.31,-0.3) .. (0,0) .. controls (3.31,0.3) and (6.95,1.4) .. (10.93,3.29)   ;
\draw    (328,104) -- (328,184) ;
\draw [shift={(328,186)}, rotate = 270] [color={rgb, 255:red, 0; green, 0; blue, 0 }  ][line width=0.75]    (10.93,-3.29) .. controls (6.95,-1.4) and (3.31,-0.3) .. (0,0) .. controls (3.31,0.3) and (6.95,1.4) .. (10.93,3.29)   ;
\draw   (138.19,185) -- (223.19,185) -- (223.19,234) -- (138.19,234) -- cycle ;
\draw   (243,184) -- (328,184) -- (328,233) -- (243,233) -- cycle ;
\draw   (347.19,185) -- (432.19,185) -- (432.19,234) -- (347.19,234) -- cycle ;
\draw    (384,104) -- (432.15,183.29) ;
\draw [shift={(433.19,185)}, rotate = 238.73] [color={rgb, 255:red, 0; green, 0; blue, 0 }  ][line width=0.75]    (10.93,-3.29) .. controls (6.95,-1.4) and (3.31,-0.3) .. (0,0) .. controls (3.31,0.3) and (6.95,1.4) .. (10.93,3.29)   ;
\draw    (183.19,166) -- (183.19,182) ;
\draw [shift={(183.19,184)}, rotate = 270] [color={rgb, 255:red, 0; green, 0; blue, 0 }  ][line width=0.75]    (10.93,-3.29) .. controls (6.95,-1.4) and (3.31,-0.3) .. (0,0) .. controls (3.31,0.3) and (6.95,1.4) .. (10.93,3.29)   ;
\draw    (290.79,169) -- (290.79,181.55) ;
\draw [shift={(290.79,183.55)}, rotate = 270] [color={rgb, 255:red, 0; green, 0; blue, 0 }  ][line width=0.75]    (10.93,-3.29) .. controls (6.95,-1.4) and (3.31,-0.3) .. (0,0) .. controls (3.31,0.3) and (6.95,1.4) .. (10.93,3.29)   ;
\draw    (371.19,170) -- (371.19,183) ;
\draw [shift={(371.19,185)}, rotate = 270] [color={rgb, 255:red, 0; green, 0; blue, 0 }  ][line width=0.75]    (10.93,-3.29) .. controls (6.95,-1.4) and (3.31,-0.3) .. (0,0) .. controls (3.31,0.3) and (6.95,1.4) .. (10.93,3.29)   ;
\draw    (180,234) -- (180,249) ;
\draw [shift={(180,251)}, rotate = 270] [color={rgb, 255:red, 0; green, 0; blue, 0 }  ][line width=0.75]    (10.93,-3.29) .. controls (6.95,-1.4) and (3.31,-0.3) .. (0,0) .. controls (3.31,0.3) and (6.95,1.4) .. (10.93,3.29)   ;
\draw    (287,233) -- (287,250) ;
\draw [shift={(287,252)}, rotate = 270] [color={rgb, 255:red, 0; green, 0; blue, 0 }  ][line width=0.75]    (10.93,-3.29) .. controls (6.95,-1.4) and (3.31,-0.3) .. (0,0) .. controls (3.31,0.3) and (6.95,1.4) .. (10.93,3.29)   ;
\draw    (389,234) -- (389,251) ;
\draw [shift={(389,253)}, rotate = 270] [color={rgb, 255:red, 0; green, 0; blue, 0 }  ][line width=0.75]    (10.93,-3.29) .. controls (6.95,-1.4) and (3.31,-0.3) .. (0,0) .. controls (3.31,0.3) and (6.95,1.4) .. (10.93,3.29)   ;

\draw (312,29.8) node [anchor=north west][inner sep=0.75pt]   [align=left] {\begin{minipage}[lt]{61.69pt}\setlength\topsep{0pt}
\begin{center}
Central \\Entity\\using LLMAs
\end{center}

\end{minipage}};
\draw (237,6) node [anchor=north west][inner sep=0.75pt]   [align=left] {\begin{minipage}[lt]{43.55pt}\setlength\topsep{0pt}
\begin{center}
Incentive\\Policy
\end{center}

\end{minipage}};
\draw (246,62) node [anchor=north west][inner sep=0.75pt]   [align=left] {Costs};
\draw  [fill={rgb, 255:red, 255; green, 255; blue, 255 }  ,fill opacity=1 ]  (305,114) -- (373,114) -- (373,139) -- (305,139) -- cycle  ;
\draw (308,118) node [anchor=north west][inner sep=0.75pt]   [align=left] {Incentive};
\draw  [fill={rgb, 255:red, 255; green, 255; blue, 255 }  ,fill opacity=1 ]  (127,145) -- (236,145) -- (236,170) -- (127,170) -- cycle  ;
\draw (130,149) node [anchor=north west][inner sep=0.75pt]   [align=left] {{\footnotesize private observation}};
\draw  [fill={rgb, 255:red, 255; green, 255; blue, 255 }  ,fill opacity=1 ]  (156,250) -- (204,250) -- (204,275) -- (156,275) -- cycle  ;
\draw (159,254) node [anchor=north west][inner sep=0.75pt]   [align=left] {action};
\draw (265.19,189) node [anchor=north west][inner sep=0.75pt]   [align=left] {\begin{minipage}[lt]{29.37pt}\setlength\topsep{0pt}
\begin{center}
LLMA\\$\displaystyle k+1$
\end{center}

\end{minipage}};
\draw (159.19,191) node [anchor=north west][inner sep=0.75pt]   [align=left] {\begin{minipage}[lt]{31.64pt}\setlength\topsep{0pt}
\begin{center}
LLMA \\$\displaystyle k$
\end{center}

\end{minipage}};
\draw (367.19,191) node [anchor=north west][inner sep=0.75pt]   [align=left] {\begin{minipage}[lt]{29.37pt}\setlength\topsep{0pt}
\begin{center}
LLMA\\$\displaystyle k+2$
\end{center}

\end{minipage}};
\draw (130,39) node [anchor=north west][inner sep=0.75pt]   [align=left] {\begin{minipage}[lt]{67.92pt}\setlength\topsep{0pt}
\begin{center}
Stochastic\\Approximation
\end{center}

\end{minipage}};

\end{tikzpicture}}}
    \caption{Schematic of stochastic control approach for optimally delaying herding in a sequence of incentivized autonomous LLMAs. This is motivated by a central entity deploying third-party \bayesianagents\ for Bayesian inference. }
    \label{fig:autonomousschematic}
\end{figure}
\subsection{Motivation. Optimal Incentivization of Third Party LLMAs.}
There are already several third-party services that offer LLM agents as a service~\cite{agentforce}, and such agents can be used for the task of performing Bayesian inference on an online platform. Each of these agents has a cost of processing a query associated with it, and each agent can be asked to give more accurate responses (by performing more computations or by using a larger model). Therefore, we can incentivize the \bayesianagents\ to share more accurate private information (low-dimensional outputs). However as shown in Section~\ref{sec:sociallearning}, \bayesianagents\ are still prone to herding, therefore we propose an optimal stopping time formulation for optimally herding and at the same time minimize the cost incurred by the central entity.

\subsection{Optimization problem of the central controller with incentivized \bayesianagents}

We now consider a case where the \bayesianagent\ are incentivized by a central controller, and the cost function is now dependent on the incentive $\incentive$ as well. Specifically, we consider a cost function $\cost$ as, 
\begin{align}\label{eq:incentivizedcost}
    \cost(\statevar,\action,\incentive) = \costconstantone_\action \indicator(\statevar\neq \action) + \actioncost_\action + \costconstanttwo_\action\incentive,
\end{align}
here, $\costconstantone_\action>0$, $\actioncost_\action>0$ and $\costconstanttwo_\action<0$, $\action \in \actionspace$ are the coefficient for the misclassification cost, the cost incurred in performing the action and the coefficient for the incentive, respectively. The cost accounts for the cumulative cost that the autonomous \bayesianagent\ incurs while classifying the textual input and the incentive received for the same. 

The cost incurred by the central controller for performing information fusion and incentivization is assumed to be linear in the incentive and is taken to be, 
\begin{align}\label{eq:fusioncost}
    \fusioncost(\incentive_\timeindex,\timeindex) = \incentive_\timeindex - \fusionweight(\timeindex)\indicator(\action_\timeindex =\observation_\timeindex|\prior_{\timeindex-1}),
\end{align}
where $\fusionweight(\timeindex)$ is the function that determines the coefficient of the reward associated with the \bayesianagent\ revealing the observation (and not herding). Generally, $\fusionweight$ is decreasing in $\timeindex$ since the benefit of a new observation has diminishing returns as more observations are made available. 

We now discuss our stochastic control approach to optimally incentivize the \bayesianagents, such that the fusion cost of the central controller is minimized. We consider the following natural filtration, which contains all information known to the central controller at time $\timeindex$, namely, the initial prior, actions of the \bayesianagents\ and the incentives by the central controller,
\begin{align*}
    \fusionfiltration_\timeindex = \sigma(\{\prior_0,\action_0,\dots,\action_\timeindex,\incentive_1,\dots,\incentive_{\timeindex-1}\}).
\end{align*}
With the estimates of the cost of the individual \bayesianagents\ and observation matrices, the central controller can use $\fusionfiltration_\timeindex$ to compute the public belief using~\eqref{eq:priorupdate}. If the cost and observation matrices are not available, the central control can learn the optimal policy using policy gradient algorithm, as discussed in the next section. Similar to the previous subsection, $\prior_\timeindex$ can be shown to be a sufficient statistic for the filtration $\fusionfiltration_\timeindex$ and therefore the incentivization policy $\incentivepolicy:\probspace(\statespace)\to\R^{+}\cup\{0\}$ of the central controller determines the incentives as, 
\begin{align*}
    \incentive_{\timeindex+1}  = \incentivepolicy(\prior_\timeindex).
\end{align*}
Therefore, the discounted cumulative cost of the central controller with a discount factor of $\discountfactor$ can be written as,
\begin{align}\label{eq:avgcostfusion}
    \avgcost_{\incentivepolicy}(\prior) = \expectation\{\sum_{\timeindex=0}^{\infty} \discountfactorfusion^\timeindex \fusioncost_{\incentivepolicy}(\incentive_\timeindex,\timeindex) \}.
\end{align}
The expectation is with respect to the observations and the randomized incentive policy. The optimal incentive policy $\incentivepolicy^*$ is then the policy which achieves the minimum cost,  
\begin{align}\label{eq:optincentivepolicy}
    \avgcost_{\incentivepolicy^*}(\prior) = \inf_{\incentivepolicy\in\policyspace}\avgcost_{\incentivepolicy}(\prior).
\end{align}
Although classical methods like value iteration can be used to solve the continuous-valued optimization problem of~\eqref{eq:optincentivepolicy}, for the simple case of $|\actionspace| = |\statespace| = 2$, we show in the next subsection that the optimal policy has a threshold structure which can be searched much more efficiently using a policy gradient algorithm. Note that this reduction is still practical for failure state detection e.g. bad product identification using product reviews on online platforms.   
\subsection{Structural Results}
To show structural results, in this section, we consider the simplification that $|\actionspace| = |\statespace| = 2$. To show the structural results, we make the following assumption on the augmented cost function of the \bayesianagent\ given in~\eqref{eq:incentivizedcost}.
\begin{enumerate}[start=5,label={(\bfseries S\arabic*):}]
    \item The cost function is submodular in $(\statevar,\action)$ for all incentives $\incentive$, i.e., for $|\actionspace| = 2$, $|\statespace| = 2$,  $\cost(1,1,\incentive) + \cost(2,2,\incentive) \leq \cost(1,2,\incentive) + \cost(2,1,\incentive)$.
\end{enumerate}

We consider the following incentive function,
\begin{align}\label{eq:incentivefunction}
    \incentivefunction(\observation,\prior) = \frac{\costconstantone_2 - \costconstantone_1 }{\costconstanttwo_2 - \costconstanttwo_1}\frac{\observationmatrix_\observation\prior}{\indicator \observationmatrix_\observation\prior} + \frac{\actioncost_2 - \actioncost_1}{\costconstanttwo_2 - \costconstanttwo_1}.
\end{align}
The above incentive function comes naturally when the cost function of the individual \bayesianagents\ (\eqref{eq:incentivizedcost}). The derivation is also there in the appendix. We next state results on the structure of the optimal incentivization policy of~\eqref{eq:optincentivepolicy}.
\begin{theorem}\label{th:fusioncenter}
    Under (S4) and (S5), the optimal incentive policy $\incentivepolicy^*:\probspace(\statespace)\to\R^{+}\cup\{0\}$ of~\eqref{eq:optincentivepolicy} is a threshold with respect to the public belief and can be computed as, 
    \begin{align}\label{eq:thresholdincentivizedpolicy}
        \incentivepolicy^*(\policy) = \begin{cases}
            0, & \text{if} \ \prior(2) \in [0,\thresholdprior] \\
            \incentivefunction(\observation,\prior), &  \text{if} \ \prior(2) \in [\thresholdprior,1]
        \end{cases},
    \end{align}
    where $\incentivefunction$ is the incentive function of the central controller from~\eqref{eq:incentivefunction} and $\thresholdprior$ is the threshold value. 
\end{theorem}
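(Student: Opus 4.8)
The plan is to recognize \eqref{eq:optincentivepolicy} as a discounted infinite-horizon POMDP with an effectively binary action and to invoke the standard monotone-policy machinery for partially observed problems~\cite{krishnamurthy_partially_2016}. First I would argue that the continuous incentive collapses to a binary decision $\decision\in\{0,1\}$: since the autonomous \bayesianagent's cost \eqref{eq:incentivizedcost} is affine in $\incentive$ with $\costconstanttwo_\action<0$, there is a unique critical incentive at which the agent is indifferent between herding and revealing its observation, and (S5) guarantees this indifference level is exactly $\incentivefunction(\observation,\prior)$ of \eqref{eq:incentivefunction} (the appendix computation referenced after \eqref{eq:incentivefunction}). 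Paying more than $\incentivefunction$ is wasteful and paying less induces herding, so the controller's only non-dominated choices are $\incentive=0$ (agent herds) or $\incentive=\incentivefunction(\observation,\prior)$ (agent reveals). Writing $V$ for the value function of \eqref{eq:avgcostfusion}, the Bellman equation becomes $V(\prior)=\min_{\decision\in\{0,1\}}Q(\prior,\decision)$, and existence, uniqueness, and continuity of $V$ follow from the Banach fixed-point theorem applied to the dynamic-programming operator, which is a contraction on bounded functions over $\probspace(\statespace)$ since $\discountfactorfusion<1$.

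The second step is to establish monotonicity of $V$ in the monotone likelihood ratio (MLR) order. For $|\statespace|=2$ the belief is parameterized by $\prior(2)\in[0,1]$ and the MLR order coincides with the usual order on $\prior(2)$. Assumption (S4), that $\observationmatrix$ is TP2, is precisely the condition under which the filter $\priorupdate$ of \eqref{eq:priorupdate} is MLR-monotone, i.e.\ $\prior\mapsto\priorupdate(\prior,\cdot)$ preserves the ordering of beliefs for each fixed observation or action outcome~\cite{bhattcontrolled2020}. Propagating this through value iteration, together with monotonicity in $\prior(2)$ of the per-stage fusion cost \eqref{eq:fusioncost} and the induced costs, yields by induction that every value-iteration iterate, and hence the fixed point $V$, is monotone in $\prior(2)$.

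The core of the argument is to show that the difference $Q(\prior,1)-Q(\prior,0)$ is nonincreasing in $\prior(2)$, so that the set $\{\prior:\decision=1\text{ is optimal}\}$ is an up-set $\{\prior(2)\geq\thresholdprior\}$. I would prove that $Q$ is submodular in $(\prior(2),\decision)$: the one-stage contribution inherits submodularity directly from (S5), while the discounted future term $\discountfactorfusion\,\expectation\{V(\priorupdate(\prior,\cdot))\}$ is handled by combining MLR-monotonicity of $V$ from the previous step with the TP2-induced ordering of the two transition kernels (the observation-driven kernel when revealing versus the action-driven kernel when herding). Submodularity of $Q$ in $(\prior(2),\decision)$ then gives, via Topkis' theorem for minimization, that the minimizing decision $\decision^*(\prior)$ is monotone nondecreasing in $\prior(2)$; as $\decision^*\in\{0,1\}$, this is equivalent to a threshold $\thresholdprior$ with $\decision^*=0$ on $[0,\thresholdprior]$ and $\decision^*=1$ on $[\thresholdprior,1]$. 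Substituting the two outcomes ($\incentive=0$ versus $\incentive=\incentivefunction(\observation,\prior)$) yields exactly \eqref{eq:thresholdincentivizedpolicy}.

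I expect the main obstacle to be the preservation of submodularity through the dynamic-programming recursion in the third step. Unlike the per-stage cost, where (S5) is checked directly, the future-cost term couples the decision to the belief transition, since the incentive decision changes \emph{which} kernel governs the update. One must therefore verify carefully that the TP2 structure of $\observationmatrix$ forces the revealing kernel to MLR-dominate the herding kernel uniformly, and that this domination, evaluated against an MLR-monotone $V$, delivers the single-crossing/submodularity property rather than mere monotonicity. This is where (S4) and (S5) must act in concert, and it is the step most sensitive to the $|\statespace|=|\actionspace|=2$ reduction.
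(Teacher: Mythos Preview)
Your first two steps---collapsing the continuous incentive to a binary decision via the indifference level $\incentivefunction(\observation,\prior)$, and establishing MLR-monotonicity of $V$ from the TP2 assumption (S4)---match the paper's route. Where you diverge is the third step: you propose to verify submodularity of $Q(\prior(2),\decision)$ and invoke Topkis, which is the same machinery the paper uses for Theorem~\ref{th:threshold}. The paper does \emph{not} do this here. Instead it exploits the optimal-stopping form of the Bellman equation, $V(\prior)=\min\{0,\,\incentivefunction(\observation)-\fusionweight+\discountfactor\,\expectation V(\prior)\}$ (from Lemma~1 of~\cite{bhattcontrolled2020}): the stop branch is the constant $0$, and the continue branch is shown to be monotone in $\prior(2)$ (using FOSD and the monotonicity of $\incentivefunction$). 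An endpoint check---continue is positive at $\indicatorstate_1$ and negative at $\indicatorstate_2$---then gives a single crossing, hence a single threshold $\thresholdprior$. This is strictly lighter than your approach: because the stop cost is constant, one only needs \emph{monotonicity} of the continue branch, not submodularity of the pair. Your proposal is correct, but the obstacle you flag (preserving submodularity through the recursion when the decision switches the transition kernel) is one the paper simply never meets; by recognizing the stopping structure, the paper trades that difficulty for an intermediate-value-type argument. What you gain with Topkis is a template that would extend beyond the constant-stop-cost case; what the paper gains is a shorter, self-contained proof that leans only on monotonicity and endpoint signs.
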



Similar to the result of Theorem~\ref{th:threshold}, this theorem shows that the optimal incentivization policy of the central controller is threshold in the public belief. Since the state space has cardinality $2$, the threshold switching curve becomes a single threshold point $\thresholdprior$, and can be efficiently searched.

Theorem~\ref{th:fusioncenter} informs that to minimize the discounted cost of~\eqref{eq:avgcostfusion}, the central controller should incentivize (using the scheme of~\eqref{eq:incentivefunction}) only when the public belief is not \textit{too strong} in favor of state 1. This supports the intuition that more incentive would be required for a stronger public belief (\eqref{eq:incentivefunction}). 

Further,  the incentivization function of~\eqref{eq:incentivefunction} and an optimal policy with a threshold structure of~\eqref{eq:thresholdincentivizedpolicy} implies that the incentive sequence by the central controller is a sub-martingale. This result and a concentration inequality type bound on the cumulative incentive spent are formalized in the next result. 

\begin{theorem}\label{th:submartingale}
Considered the controlled incentivized fusion of information from \bayesianagents\ where the cost function is~\eqref{eq:avgcostfusion} and the optimal incentive policy, $\optincentivepolicy$ satisfies~\eqref{eq:optincentivepolicy} with the incentive function of~\eqref{eq:incentivefunction}. 
    Under (A1) the optimal incentive sequence $\incentive_\timeindex = \optincentivepolicy(\prior_{\timeindex-1})$ is a  sub-martingale, i.e., $\incentive_\timeindex \geq \expectation\{\incentive_{\timeindex-1}\}$.  Further, the cumulative incentive spent is such that in a sample path is such that, 
    $\probabilitymeasure(\sum_{1\leq\timeindex\leq \timehorizon}\incentive_\timeindex \geq  \budget)  \leq \frac{\timehorizon}{\budget}$, where $\budget$ can be considered as a budget constraint. 
\end{theorem}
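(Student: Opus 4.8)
The plan is to prove the two assertions separately. For the sub-martingale claim I would exploit the fact that the Bayesian public belief $\{\prior_\timeindex\}$ is itself a martingale with respect to $\{\fusionfiltration_\timeindex\}$ and that, once the threshold structure of Theorem~\ref{th:fusioncenter} is imposed, the observation-averaged incentive is a \emph{convex} function of the belief; a conditional Jensen inequality then upgrades the martingale to a sub-martingale. For the budget bound I would use only that each optimal incentive lies in $[0,1]$ (since $\incentivepolicy:\probspace(\statespace)\to[0,1]$) together with Markov's inequality applied to the nonnegative sum $\sum_{\timeindex}\incentive_\timeindex$.

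First I would record the belief-martingale identity. Given $\fusionfiltration_{\timeindex-1}$, the next private observation is drawn with predictive law $\probabilitymeasure(\observation\mid\prior_{\timeindex-1}) = \indicator\observationmatrix_\observation\prior_{\timeindex-1}$, while the update \eqref{eq:priorupdate} sends $\prior_{\timeindex-1}$ to $\observationmatrix_\observation\prior_{\timeindex-1}/(\indicator\observationmatrix_\observation\prior_{\timeindex-1})$. Averaging the posterior against this predictive law cancels the normalizers and telescopes, since $\sum_{\observation}\observationmatrix_\observation = I$ gives $\sum_{\observation}\observationmatrix_\observation\prior_{\timeindex-1}=\prior_{\timeindex-1}$, so $\expectation\{\prior_\timeindex\mid\fusionfiltration_{\timeindex-1}\}=\prior_{\timeindex-1}$. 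The \emph{same} cancellation shows that the observation-averaged payment $\bar{\incentivepolicy}(\prior) := \expectation_{\observation}\{\incentivefunction(\observation,\prior)\mid\prior\}$ collapses to an affine function of $\prior$, because the ratio $\observationmatrix_\observation\prior/(\indicator\observationmatrix_\observation\prior)$ in \eqref{eq:incentivefunction} is integrated against its own denominator. Combined with the threshold rule \eqref{eq:thresholdincentivizedpolicy}, the map $\prior\mapsto\bar{\incentivepolicy}(\prior)$ is identically zero on $\{\prior(2)\le\thresholdprior\}$ and affine-increasing on $\{\prior(2)\ge\thresholdprior\}$; this hinge-type map is convex provided it is continuous at $\thresholdprior$ and has nonnegative slope there.

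With convexity in hand, the conditional Jensen inequality gives $\expectation\{\bar{\incentivepolicy}(\prior_\timeindex)\mid\fusionfiltration_{\timeindex-1}\}\ge\bar{\incentivepolicy}(\expectation\{\prior_\timeindex\mid\fusionfiltration_{\timeindex-1}\})=\bar{\incentivepolicy}(\prior_{\timeindex-1})$, which is precisely the sub-martingale inequality for the incentive sequence $\incentive_\timeindex=\optincentivepolicy(\prior_{\timeindex-1})$ once the fresh observation randomizing each payment is integrated out. For the budget bound I would observe that $0\le\incentive_\timeindex\le1$, so by linearity $\expectation\{\sum_{1\le\timeindex\le\timehorizon}\incentive_\timeindex\}\le\timehorizon$; applying Markov's inequality to the nonnegative random variable $\sum_{\timeindex}\incentive_\timeindex$ then yields $\probabilitymeasure(\sum_{1\le\timeindex\le\timehorizon}\incentive_\timeindex\ge\budget)\le\timehorizon/\budget$, as claimed.

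The main obstacle is the convexity verification: I must pin down that the affine incentive branch meets the zero branch \emph{continuously} at $\thresholdprior$ (an upward jump there would break convexity) and that its slope in $\prior(2)$ is nonnegative. This is exactly where the structural assumptions (S4)--(S5) and the explicit form of $\thresholdprior$ supplied by Theorem~\ref{th:fusioncenter}, together with the sign conventions $\costconstantone_\action>0$, $\actioncost_\action>0$, $\costconstanttwo_\action<0$, must be invoked to control $(\costconstantone_2-\costconstantone_1)/(\costconstanttwo_2-\costconstanttwo_1)$ and to identify the threshold as the belief at which $\bar{\incentivepolicy}$ first becomes positive. A secondary, bookkeeping-level difficulty is measurability and indexing: because each payment is randomized through the private observation, the conditioning must be arranged so that $\prior_{\timeindex-1}$ is $\fusionfiltration_{\timeindex-1}$-measurable while the fresh observation is the only quantity integrated out. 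Once boundedness is established, the budget bound itself is routine.
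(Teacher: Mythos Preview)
Your overall architecture matches the paper's: both parts rest on the fact that the public belief $\{\prior_\timeindex\}$ is a martingale, that the incentive map is convex, and that conditional Jensen upgrades this to a sub-martingale. The details diverge in two places.

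For the sub-martingale, the paper does not average over the observation to manufacture a hinge. Instead it cites directly that $\incentivefunction(\prior)$ is convex (Lemma~3 of \cite{bhattcontrolled2020}), so Jensen applied to the martingale $\prior_\timeindex$ shows $W_\timeindex=\incentivefunction(\prior_{\timeindex-1})$ is a sub-martingale; the threshold is then handled by multiplying against the predictable indicator $H_\timeindex=\indicator(\prior_{\timeindex-1}(2)>\thresholdprior)$ and invoking the martingale-transform property that $(HW)_\timeindex$ remains a sub-martingale. This cleanly bypasses the continuity-at-$\thresholdprior$ obstacle you flagged: the zero branch is absorbed into $H_\timeindex$ rather than into a piecewise function whose convexity must be checked. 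Your route is viable, but the continuity verification you identify is genuine extra work that the transform trick avoids.

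For the budget bound your argument is actually \emph{more} elementary than the paper's. The paper applies Doob's maximal inequality to the sub-martingale $\incentive_\timeindex$ to bound $\probabilitymeasure(\max_{\timeindex\le\timehorizon}\incentive_\timeindex\ge C)$, then uses $\sum_\timeindex\incentive_\timeindex\le\timehorizon\max_\timeindex\incentive_\timeindex$ to pass to the sum. Your direct Markov bound $\probabilitymeasure(\sum\incentive_\timeindex\ge\budget)\le\expectation[\sum\incentive_\timeindex]/\budget\le\timehorizon/\budget$ reaches the same conclusion using only $\incentive_\timeindex\in[0,1]$ and does not even require the sub-martingale property, so it is both shorter and logically independent of the first part.
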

The above theorem characterizes the nature of a sample path of incentivization and secondly provides a bound on the probability that the cumulative incentive exceeds the budget $\budget$.  This helps analyze the deviation of the total expenditure from the budget $\budget$, which is a constraint of the central controller. 
\subsection{Summary}
 Autonomous \bayesianagents\ are already offered by third-party entities as a service~\cite{agentforce,openaiagents,whitepaperagents}. They have a unit monetary cost associated with using them for any application. Motivated by such \bayesianagents, we studied stochastic control of autonomous \bayesianagents\ who are incentivized by a central controller to perform Bayesian inference. We showed structural results on the optimal incentive policy and a concentration inequality, which characterized the probability that the central controller would exceed their budget. The next section proposes a policy gradient algorithm that exploits the structural results of Theorem~\ref{th:threshold} and Theorem~\ref{th:fusioncenter} to search for the corresponding threshold policy of the social welfare \bayesianagent\ and the central controller, respectively.
\section{Policy Gradient for estimating the optimal policy to control herding in LLMAs}\label{sec:stochasticapproximation}
We propose a policy gradient algorithm that searches for the optimal policy for the stochastic control of the social welfare \bayesianagents\ \eqref{eq:optpolicy} and the central controller \eqref{eq:optincentivepolicy}, which have the threshold structure of~\eqref{eq:optthresholdpolicy} and~\eqref{eq:thresholdincentivizedpolicy}, respectively.
\subsection{Motivation. Efficiently Estimating the optimal stopping time policy.}
When 
\bayesianagents\ are deployed on online platforms to perform Bayesian inference for various purposes, one needs to control for the herding the \bayesianagents\ exhibit. For this purpose, we proposed two optimal stopping formulations for centrally controlled and autonomous \bayesianagents. The efficiency of estimation of the optimal policy parameters is especially important in real-life applications where the time to update the policy parameters is limited. Another constraint is that in real-life systems, access to the observation matrices is limited; therefore, the estimation has to be done without the knowledge of the system parameters. Therefore, we use structural results on the optimal policies for the stopping time formulations and use policy gradients to estimate the optimal policy. 
\subsection{Policy Gradient Algorithm}
Searching for a hard threshold of the form 
~\eqref{eq:optthresholdpolicy} and~\eqref{eq:thresholdincentivizedpolicy} can be formulated as a discrete stochastic optimization problem. However, in this section, we relax the problem to a continuous stochastic optimization by approximating the hard threshold policy by a sigmoidal of the form, 
\begin{align}\label{eq:approximatepolicy}
   \hat{\policy}(\prior;\policyparameter) =  \frac{1}{1+\exp({\frac{\prior-\policyparameter}{\approximationparameter})}},
\end{align}
where $\policyparameter$ is the policy parameter representing the threshold, and \change{$\approximationparameter\in(0,1]$} is the approximation parameter, and the policy converges to a hard threshold policy at $\policyparameter$ as $\approximationparameter\to0$. To restrict the policy parameter to be in the range $[0,1]$, we can reparameterize it as $\sin^2(\theta)$. The approximate parameterized policy of~\eqref{eq:approximatepolicy} can be used to obtain an approximate value of the cumulative value function (of either~\eqref{eq:avgcostfusion} or~\eqref{eq:socialwelfarecost}) by interacting with the system using the policy for $\episodelength$ interactions.

For the social welfare cost of~\eqref{eq:socialwelfarecost}, we can compute the approximate cost with respect to a policy by 
\begin{align}\label{eq:approxsocialwelfarecost}
\begin{split}
  &\approxcost(\hat{\policy}(\cdot,\policyparameter_\timeindex))  \\= &\sum_{\timeindex = 1}^{\episodelength} \hat{\policy}(\prior_\timeindex,\policyparameter_\timeindex)\left( \discountfactor^{\timeindex-1} \cost(\statevar,\action_\timeindex) + \discountfactor^{\timeindex-1} \delaycost  \indicator(\statevar=\indicatorstate_1)\right) +
\\&
    \ \discountfactor^{\episodelength-\sum_{\timeindex = 1}^{\episodelength} \lfloor 0.5+ \hat{\policy}(\prior_\timeindex,\policyparameter_\timeindex) \rfloor } \left[ \errorcost\indicator(\statevar\neq \indicatorstate_1 ) +\ \frac{\min_{\action \in \actionspace}  \cost(\statevar,\action)}{1-\discountfactor}  \right]
 ,
\end{split}
\end{align}
where $\sum_{\timeindex = 1}^{\episodelength} \lfloor 0.5+ \hat{\policy}(\prior_\timeindex,\policyparameter_\timeindex) \rfloor$ just computes the empirical stopping time in place of the stopping time $\stoppingtime$ in~\eqref{eq:socialwelfarecost}. 
Note that in the above equation, the filtration and the expectations have been replaced with the realized cost, and hence, this is a noisy estimate of the true expected cost of~\eqref{eq:approximatecost}.

For fusion cost of~\eqref{eq:avgcostfusion}, the approximate cost is given by, 
\begin{align}\label{eq:approximatecost}
    \approxcost(\hat{\policy}(\policyparameter_\timeindex)) = \sum_{\timeindex=1}^{\episodelength} \fusioncost(\incentive_\timeindex,\timeindex).
\end{align}

We now describe our simultaneous pertubation based policy gradient algorithm, which is summarized in Algorithm~\ref{alg:stochasticapprox}. We perform $\timehorizonspsa$ learning episodes, each of which is initialized with the same conditions, and we adaptively update the policy parameter $\policyparameter_\timeindexalt$ across these episodes. For each episode $\timeindexalt$, we perturb the policy parameter by $\pm \delta$ and obtain two policy parameters $\policyparameter^{+}_\timeindexalt$ and $\policyparameter^{-}_\timeindexalt$. We then approximate the cost function for both the parameters using~\eqref{eq:approximatecost} by performing $\episodelength$ interactions each. We approximate the gradient by finite-difference method (step 6 of Algorithm~\ref{alg:stochasticapprox}). Finally, the policy parameter uses a gradient descent step with a step size $\policyparameter_\timeindexalt$. If the parameters of the system are known to be more or less constant, the step size is decreasing to ensure asymptotic convergence~\cite{kushner2003stochastic}, else a constant step-size can be used to track changes in the true threshold parameter.
\begin{algorithm}
    \begin{algorithmic}[1]
        \State \textbf{Input: } Initial Parameter $\policyparameter_0$, Perturbation $\perturbationsize$, Time Horizon $\timehorizonspsa$, Step Sizes $(\stepsize_\timeindexalt)$, Episode Length $\episodelength$
        \State \textbf{Output: } Terminal Parameter $\policyparameter_\timehorizonspsa$
        \For{$\timeindexalt$ in $1,\dots,\timehorizonspsa$}
        \State Perturb parameters $\policyparameter_\timeindexalt^{\pm} \gets \policyparameter_{\timeindexalt} \pm \perturbationsize$.
        \State Approximate cost with~\eqref{eq:approximatecost} using $\episodelength$ interactions with $\policyparameter_\timeindex^{+}$ and $\policyparameter_\timeindex^{-}$, $\approxcost(\hat{\policy}(\policyparameter_\timeindex^{+}))$ and $\approxcost(\hat{\policy}(\policyparameter_\timeindex^{-}))$.
        \State Approximate gradient $\approxgradient \avgcost \gets \frac{\approxcost(\hat{\policy}(\policyparameter_\timeindex^{+}))- \approxcost(\hat{\policy}(\policyparameter_\timeindex^{-}))}{2\perturbationsize}$.
        \State Update parameter using $\policyparameter_{\timeindexalt+1} = \policyparameter_{\timeindexalt} - \stepsize_\timeindexalt (\approxgradient \avgcost)_\timeindexalt$.
        \EndFor{}
    \end{algorithmic}
    \caption{Stochastic gradient algorithm for estimating optimal policy}
    \label{alg:stochasticapprox}
\end{algorithm}
\begin{remark}
    Apart from the parameters of the algorithm, the algorithm only requires the approximate reward with a particular policy. If the cost function is not known to the controller (which is the case when they are incentivized), then the framework from Section~\ref{sec:ribum} can be used to estimate the utility of the individual \bayesianagents\ using Algorithm~\ref{alg:maxmargin} or Algorithm~\ref{alg:sparse}. The negative of the utility can be used as the reward function.  This can account for the misclassification cost. Different incentive regimes can be considered as different environments to exactly obtain the cost function for an incentivized autonomous \bayesianagent.
\end{remark}
\begin{remark}
    The above algorithm does not need access to the probability distributions $\observationmatrix$, which is expensive to obtain, especially in the incentivized case. Also, if the underlying parameters of the setup are evolving on a slower timescale, the above algorithm can be run with a constant step size to make the policy parameters track the changes in the system. These are the key advantages of using Algorithm~\ref{alg:stochasticapprox} compared to the value iteration or other sub-optimal methods for solving our proposed non-standard POMDPs. The computational complexity of Algorithm~\ref{alg:stochasticapprox} for each iteration is $\orderof(\episodelength)$. 
\end{remark}
\begin{remark}
\change{
   For the centrally controlled case, only a single parameter needs to be communicated across the LLMAs to learn the optimal stopping time policy. Further, since the LLMAs act in a sequential fashion, such a parameter can be communicated either by the central controller or by the previous LLMA, making this a scalable approach. For the incentivization policy too, a single parameter needs to be estimated by the central entity employing the autonomous LLMAs, which makes the policy estimation protocol independent of the number of agents, and therefore scalable.}
\end{remark}

\subsection{Summary} 
This section proposed a computationally efficient policy gradient algorithm to estimate the optimal policy of the stopping time problems for \bayesianagents\ of Section~\ref{sec:quickesttimeherding} and Section~\ref{sec:incentivizedherding}. This algorithm does not need the underlying parameters of~\bayesianagents\ and is adaptive to changes to the underlying parameters. The next section presents numerical studies that illustrate how these models can be used to control LLMAs. 

\section{Numerical Results: Sequential Bayesian Sentiment Analysis using LLMAs}\label{sec:numerical}
 Our numerical studies demonstrate how the Bayesian social learning framework and the stochastic control approach can be used to perform more accurate and efficient state estimation in the described applications. We present numerical studies on applications related to two real-life datasets: a hate-speech dataset and a product review dataset. These numerical experiments build on our past work to build robust hate speech classification using covert federated learning~\cite{jain2024controlling,jaincontrol}. The reproducible code, appendix with proofs, and the dataset link are on github.com/aditj/sociallearningllm.
\subsection{Motivation} 
LLMs are already used in different real-world applications, including on e-commerce platforms, to provide an overview of the reviews and on social media platforms to flag malicious content; therefore, motivated by the real world, we present numerical experiments where LLMAs are used to perform Bayesian inference on different real-world datasets.  

We first describe the two main real-world tasks and datasets on which the numerical results are presented. Then, we show how our construction of a single \bayesianagent\ leads to interpretability. We extend the exemplary study from Section~\ref{sec:ribum} and conduct more extensive experiments. Next, we show numerical results on herding phenomena in a sequence of \bayesianagents. Finally, we show how the optimal policy for optimal stopping has a threshold structure. We also show the efficacy of policy gradient algorithm in a simulated setting. 
\subsection{Task Description}
For both tasks, the \bayesianagent\ is an online detection mechanism equipped with a large language model (LLM) sensor that analyses comments and flags the state sequentially. The LLM is used to parse the text and obtain a list of appropriate features from the finite-dimensional feature space $\observationspace$, which the platform could design. The features contain information about the text and comparisons of the text with a given context. In our setup, the Bayesian engine of the \bayesianagent\ consists of a likelihood parameterized by a neural network. For a discrete distribution, the likelihood neural network uses restricted Boltzmann machine (RBM) to generate samples from the likelihood with $1000$ samples of annotated comments (paragraphs). The experiment results are averaged over $N_{MC}$ independent runs mentioned along with each experiment. The posterior can be updated using~\eqref{eq:bayesrule} using the likelihood and the closed form prior of~\eqref{eq:priorupdate} from the previous step. The \bayesianagent\ takes an action according to~\eqref{eq:action}. The action is classifying whether the user has hateful intent, and the cost accounts for the misclassification of the state.  We use Mixtral 7B, an open-source mixture of experts LLM with 7 billion parameters~\cite{jiang2023mistral7b}, LLaMA-3~\cite{touvron2023llamaopenefficientfoundation} with 70b parameters, and ChatGPT-4o which is a closed source LLM. The details of which LLM was used for this experiment is in the Appendix. We query using the TogetherAI API for the open-source LLMs (Mixtral and LLaMA) and OpenAI API for ChatGPT.  

\subsubsection{Hate Speech Classification}
\textit{Motivation: }Flagging users who spread toxic content online is a significant challenge. The state $\statevar_\timeindex$ represents the category of peddlers classified based on the intensity and type of content they are propagating. For example, the state could be 3-dimensional, indicating the hate-intent of the user (hateful or not), the hate speech intensity scale~\cite{bahador_classifying_nodate}, and the particular group the hate speech is directed towards. The noisy observations are the text comments from the user that inform about the state and are from a high-dimensional observation space. 

In a social network, there may be multiple  \bayesianagents\ deployed to flag malicious users and decrease the propagation of hate speech. The flags by the previous \bayesianagent\ are visible, but the private observations are not due to computation and privacy restrictions (so that the LLM can not be fine-trained on the text or the feature mappings). Since the observations are generated sequentially, the \bayesianagent\ may use the same LLM but with a different context and for a different text observation. Therefore, a single \bayesianagent\ can be viewed as a sequence of \bayesianagents\ learning from their private observations and the past actions of previous \bayesianagents. 

The state space models the state of the user, $\statespace = \{0= \text{non-hateful}, 1 = \text{hateful}\} \times \{1,2,3,4,5\}$, where the first dimension corresponds to whether a user is a hate speech peddler (HSP) or not, and the second dimension to the intensity of the toxicity of the speech (evaluated by crowdsourcing~\cite{sachdeva_measuring_2022}). For the numerical result, we consider an augmented state space with $6$-states, $\statespace=\{0=\text{non-hateful},1,2,3,4,5\}$, where the last $5$ entries denote an HSP of different intensities as described later. 

The high-dimensional observations of the state are in the form of text comments posted on online platforms. An LLM is used to parse the text observations by prompting the LLM with the text and a system prompt to return an output belonging to an observation space $\observationspace$, which contains the following binary variables: a) targetted towards someone and b) contains explicit words c) indicate violence d) has bias e) is dehumanizing f) is genocidal. We augment the output of the LLM to an observation space of cardinality $|\observationspace| = 6$. The details of this augmentation, along with additional experimental details, are in the appendix.  

The action space $\actionspace$ is considered the same as the state space, and the cost function which accounts for the misclassification of an HSP is given by,
\begin{align}\label{eq:misclassificationcost}
    \cost(\statevar,\action) &= \indicator(\statevar\neq 0)[\indicator(\action=0)+ |\statevar-\action|],
\end{align}
The first time accounts for the misclassification of a hateful user, and the second term accounts for the difference in intensity.  
We use the measuring hate speech dataset from~\cite{sachdeva_measuring_2022}, which contains \change{$40,000$} annotated comments from Twitter, Reddit, and Gab. The annotations are performed by crowdsourcing and indicate if the comments contain hate speech and the intensity of the toxicity exhibited on a scale of 1 to 5, measured using a Rasch measurement scale~\cite{sachdeva_measuring_2022}. 

Since the data is anonymized, we consider a synthetic user construction. In a span of $\timehorizon$ textual comments, a hate speech peddler (HSP) contains hate speech text from one of the intensity levels. Hence, there are $6$ types of users: non-HSP and HSP with intensity from 1 to 5, each with $\timehorizon=100$ comments of the corresponding intensity.

\subsubsection{Product Quality Identification}
\textit{Motivation: } Identifying products that are of poor quality early on is important in many e-commerce applications. While the current state-of-the-art relies on a complaint-driven interface that involves a human service agent to identify a bad product, there has been some preliminary work that suggests using existing reviews from the customers to determine the quality of the product~\cite{marketingreviews}. Product reviews are already used to provide a summary of the product on major e-retailers like Amazon and even influence customer decisions~\cite{marketingreviews}. Therefore, we propose using a sequence of \bayesianagents\ and our stopping time formulation to identify bad products.  

In the case of an e-commerce platform, the restriction on the sharing of the private observation is motivated by the computational constraints that the \bayesianagents\ will have when performing Bayesian inference on millions of products and billions of reviews. The public belief can be efficiently computed based on the output (action) of each of the \bayesianagent.

We use the Amazon Review dataset, which has $233.1$ million~\cite{ni-etal-2019-justifying} text reviews for products from $29$ categories. Each review also has a rating from $1$ to $5$. We discuss more details about the dataset in Appendix~\ref{app:amazon}.
We consider the state as the quality of the product, with the state space given by $\statespace = \{0 = \text{``bad-quality''},1 = \text{``high-quality''}\}$. We consider the Beauty and Electronics categories and sample $5000$ reviews from each. Each review has a text and a rating associated with it. We consider the quality of the product as described above. 

We now detail on what we mean by \textit{quality} and the definition of the states. Since we do not have access to any additional dataset other than ratings, we consider products with at least $1000$ ratings and compute the quality state ex-ante using these ratings. Therefore, the quality here is more representative of the perceived value for money~\cite{marketingvalue}. We consider the following ranges $[0,3.3),[3.3,5]$ of the average review of \textit{all} the ratings to assign a product quality as bad, medium, and good, respectively. To clarify, we just use all $1000$ ratings to compute the state; the observations are sampled one at a time and analyzed by the \bayesianagent. 

The low-dimensional observation obtained using the LLM of the \bayesianagent\ belongs to an observation space of cardinality $|\observationspace| = 16$. We obtain the low-dimensional observation by asking the following questions of the text review:
\begin{enumerate}
    \item Does the review mention any specific problems or defects with the product? (\textit{defects})
    \item Does the review mention any positive attributes regarding the product's durability or reliability? (\textit{durability})
    \item Does the review indicate that the product meets or exceeds the user's expectations? (\textit{expectations})
    \item Would the reviewer recommend this product to others? (\textit{recommend})
\end{enumerate}

These questions were designed by us by qualitatively analyzing what could predict the perception of quality of a product~\cite{marketingreviews,marketingvalue}. These features for the different states are plotted in Figure~\ref{fig:interpretabilityproduct}. It can be seen that for the good product, the overall decrease in \textit{defects} is more rapid, and even when in reviews with overall ratings $2$ and $3$, there are substantially more reviews that mention \textit{durability} positively, than the bad product. However, an analyst could consider an alternate design of the observations depending on the need.

 We consider the misclassification cost as follows,
\begin{align}\label{eq:misclassificationcostproduct}
    \cost(\statevar,\action) = \indicator(\statevar=0)\indicator(\action\neq 0)+\indicator(\statevar\neq0)|\statevar-\action| .
\end{align}

\begin{figure}
    \centering
    \resizebox{\columnwidth}{!}{\input{plots/analysis_plot_product.pgf}}
    \caption{Interpretability of the outputs of an LLM Sensor for Product Quality Identification. There are $4$ different features that the LLM of the \bayesianagent\ extracts. The left subplot corresponds to reviews from a good product, and the right subplot corresponds to reviews from a bad product (defined in the text). We analyze text reviews with ratings from $1$ to $5$ for $4$ different attributes and plot the proportion of samples where the attribute is true. It can be seen that the features, indeed, are consistent with the overall rating provided by the reviewer for the two different kinds of products and can be used for a more fine-grained interpretable analysis. }
    \label{fig:interpretabilityproduct}
\end{figure}

\subsection{Numerical Evidence showing \bayesianagents\ are \ribum}
\begin{figure}
    \centering
    \includegraphics[width=0.8\columnwidth]{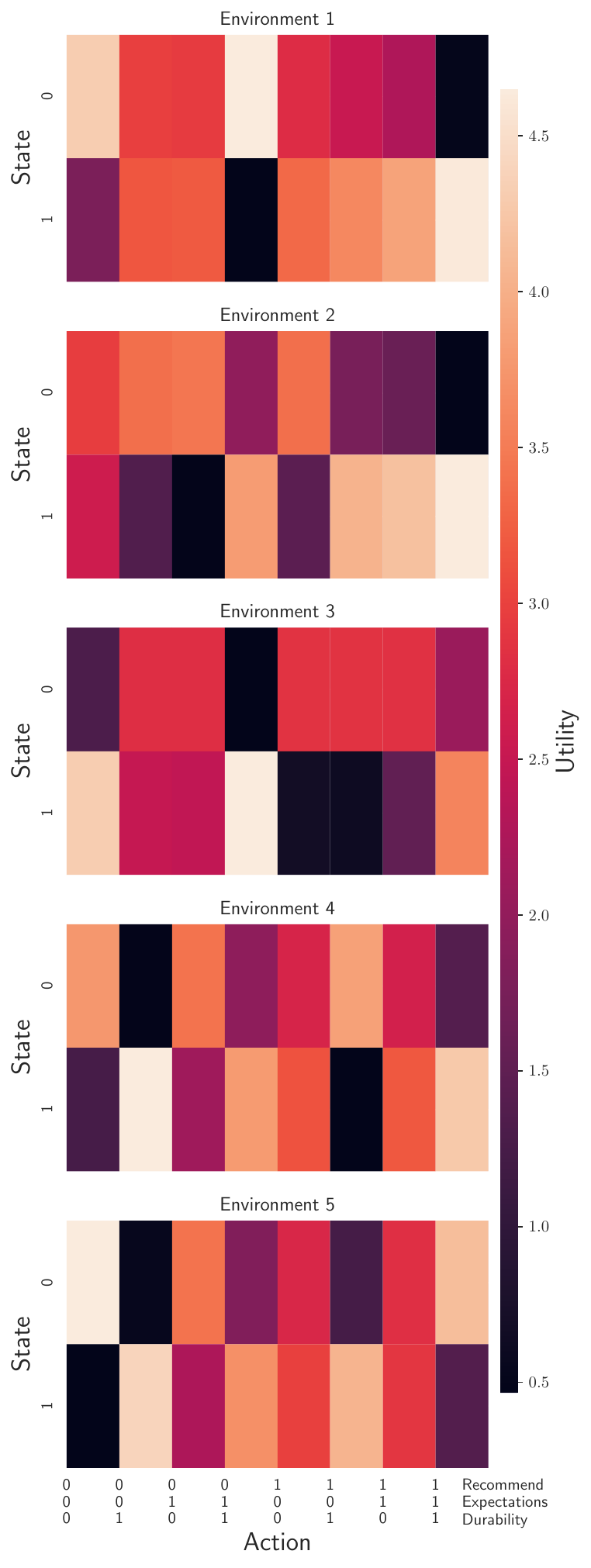}
    \caption{Reconstructed max-margin utility of the LLM of an \bayesianagent\ performing Bayesian inference for product quality identification using text reviews. 
    The reconstructed utilities offer an interpretable way to analyze the behavior of the LLM. 
    Here, the states represent the true quality of the product (0 = bad, 1 = good). The actions are the low-dimensional output of the LLM corresponding to different features in the input text reviews.  We observe that for environment $1$, when reviews with a rating $1$ are considered, the difference between utilities for both states is negligible. However, the contrast increases as the ratings considered in the environment are increased.} 
    \label{fig:utilityproduct}
\end{figure}
We perform the max-margin-based utility reconstruction using Algorithm~\ref{alg:maxmargin} when the agent optimizes a utility in order to achieve product quality identification and illustrate the reconstructed utilities in Figure~\ref{fig:utilityproduct}.  

For illustration purposes, we consider \textit{recommend, durability} and \textit{expectations} as the actions, hence making the action space $|\actionspace| = 8$. To show our methods can be used with blackbox LLMs, we use \texttt{ChatGPT-4o-mini} for this experiment. The state space is $\statespace=\{0 = \textit{bad \ product}, 1 =\textit{good \ product}\}$. We consider the different environments to have different ratings; i.e., the people who rate products at $1$ are considered as part of a single environment. The utilities can be used to interpret the behavior of the LLM and can be used in lieu of explicit utilities of the form~\eqref{eq:misclassificationcostproduct}.

\subsection{Herding In \bayesianagents}
\begin{figure}[t!]
    \centering
    \includegraphics[width=\linewidth]{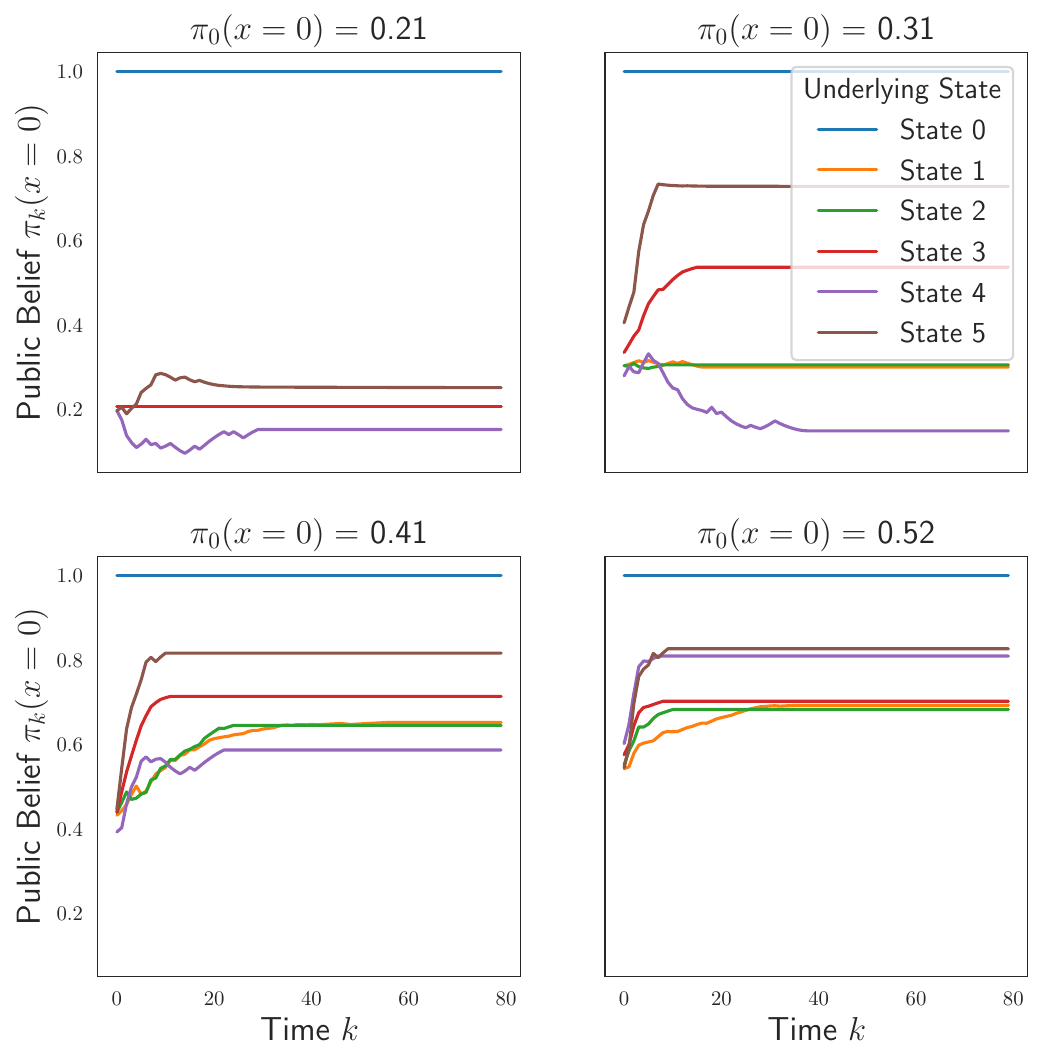}
    \caption{Public prior freezes in finite time when \bayesianagents\ learn using social learning protocol of Algorithm~\ref{alg:sociallearning} for detecting hate-speech peddlers. The time taken (average over $10$ runs) to form an information cascade is different for different values of the initial public belief $\prior_0(\statevar=0)$ and different true underlying states. Note that for a public prior $\prior_0(\statevar=0) \geq 0.41$ only a few observations are enough for the LLMAs to herd to a strong belief on the wrong state (state $0$) regardless of the true underlying state. }
    \label{fig:priorcascade}
\end{figure}
\begin{figure}[t!]
    \centering
    \includegraphics[width=\linewidth]{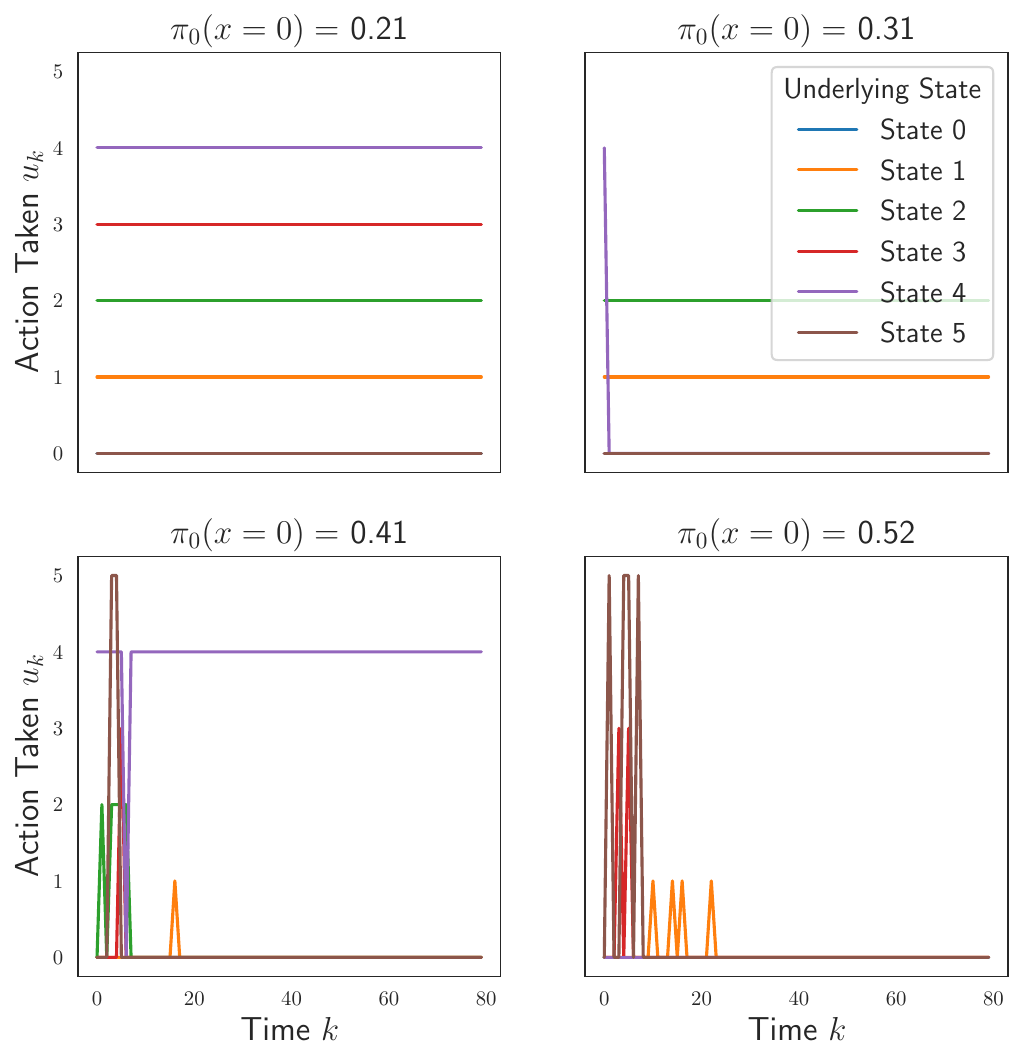}
    \caption{Sample path for the actions taken by \bayesianagents\ under different initial priors $\prior_0(\statevar=0)$ and true underlying state $\statevar$. Observation $4$ has a stronger observation likelihood and hence is more robust under an increasing prior on state $0$. }
    \label{fig:samplepathherding}
\end{figure}
We report our results with the initial public belief (specifically the prior probability of state 0). Due to Theorem~\ref{th:herding}, the initial public belief is sufficient to identify intermediate public belief regions where the information cascades are observed.

We first study the freezing of public belief when an information cascade (Definition~\ref{def:infocascade}) happens in a sequence of \bayesianagents\ performing Bayesian inference for hate-speech classification. We consider $80$ timepoints, and the results average over $10$ runs. The result is illustrated in Figure~\ref{fig:priorcascade} for different values of initial public belief. In each subplot, the different lines correspond to the different true underlying states. It can be seen that the public belief for state 0 converges to higher values as the initial prior for state 0 is increased. Hence, the initial prior decides what the prior will freeze.

We show, for a sample path realization, how  \bayesianagents\ herd in the actions that they take in Figure~\ref{fig:samplepathherding}. Each of the subplots corresponds to a different initial public belief, and the lines within each subplot are for a different underlying true state.  The reason why it is more resilient to herding when in state 4 is that there is a stronger observation likelihood. All the other states, for a strong enough prior ($0.52$), herd to an incorrect state ($0$). Hence classifying an HSP as a non-HSP.  

 Figure~\ref{fig:herdingregions} illustrates the regions of herding a case with $3$ states.  The true underlying state in this simulation is $0$, and therefore, it can be seen that most of the initial public belief corresponds to predicting state $0$. We assume an observation matrix with $3$ observations, with an observation matrix 
$B = [0.7,0.2,0.1;0.1,0.7,0.2;0.2,0.1,0.7]$ and assume an identity utility function for the \bayesianagents. However, the top-left and bottom-left corners of the triangle show that herding to the wrong state can happen if the public belief is too strong.

\begin{figure}[h!]
    \centering
\includegraphics[width=\linewidth]{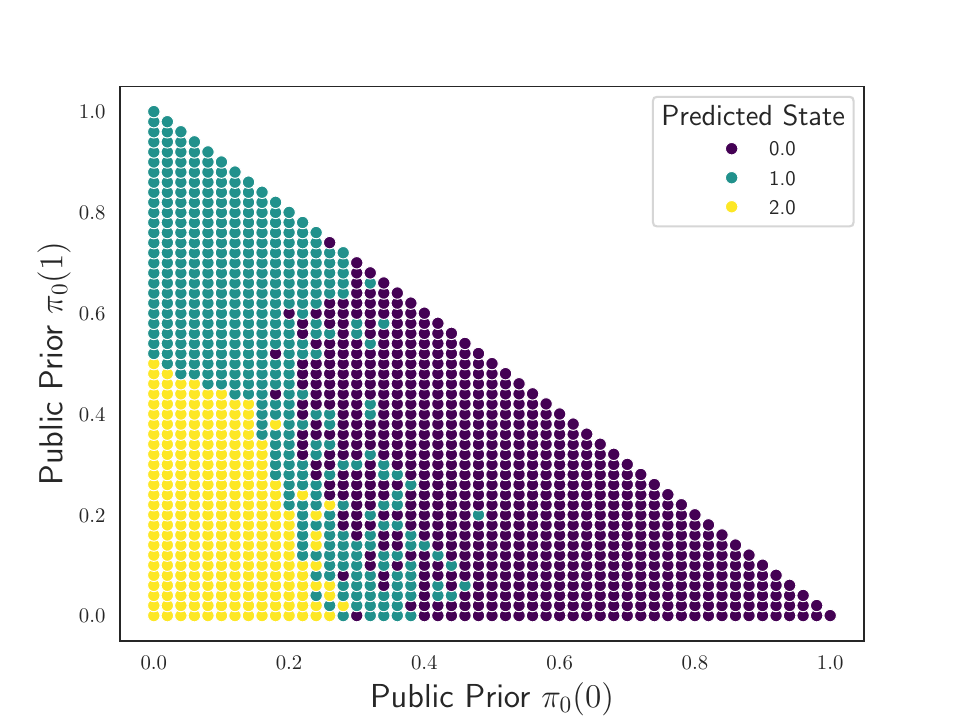}
    \caption{Regions in belief space where LLMAs herd during Bayesian social learning with $|\statespace| = 3$ states. Even though the underlying state is $\statevar=0$, in the bottom-left and top-left regions, the actions are $2$ and $1$, respectively, because of a strong public belief.  }
    \label{fig:herdingregions}
\end{figure}
\subsection{Optimal Stopping for delaying herding in \bayesianagents}
\begin{figure}[h!]
    \centering
    \resizebox{\columnwidth}{!}{\input{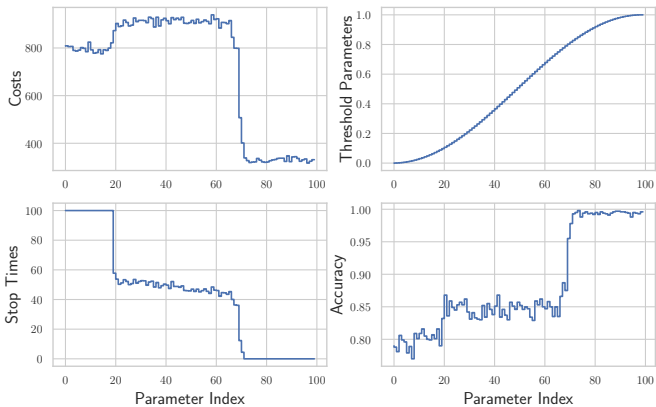}}
    \caption{The cost function (top-left) corresponds to different values of the threshold parameter (top-right) of the policy of~\eqref{eq:thresholdpolicy}. Each policy leads to a different stopping time (bottom-left) and corresponding accuracy (bottom-right). For this experiment, it can be seen that increasing the stopping time decreases the cost.  The sudden jumps in the cost (and stopping time) are due to the transition in the resulting policy parameter from a region of learning to herding. }
    \label{fig:parametersweep}
\end{figure}

\begin{figure}[h!]
    \centering
    \resizebox{\columnwidth}{!}{\includegraphics{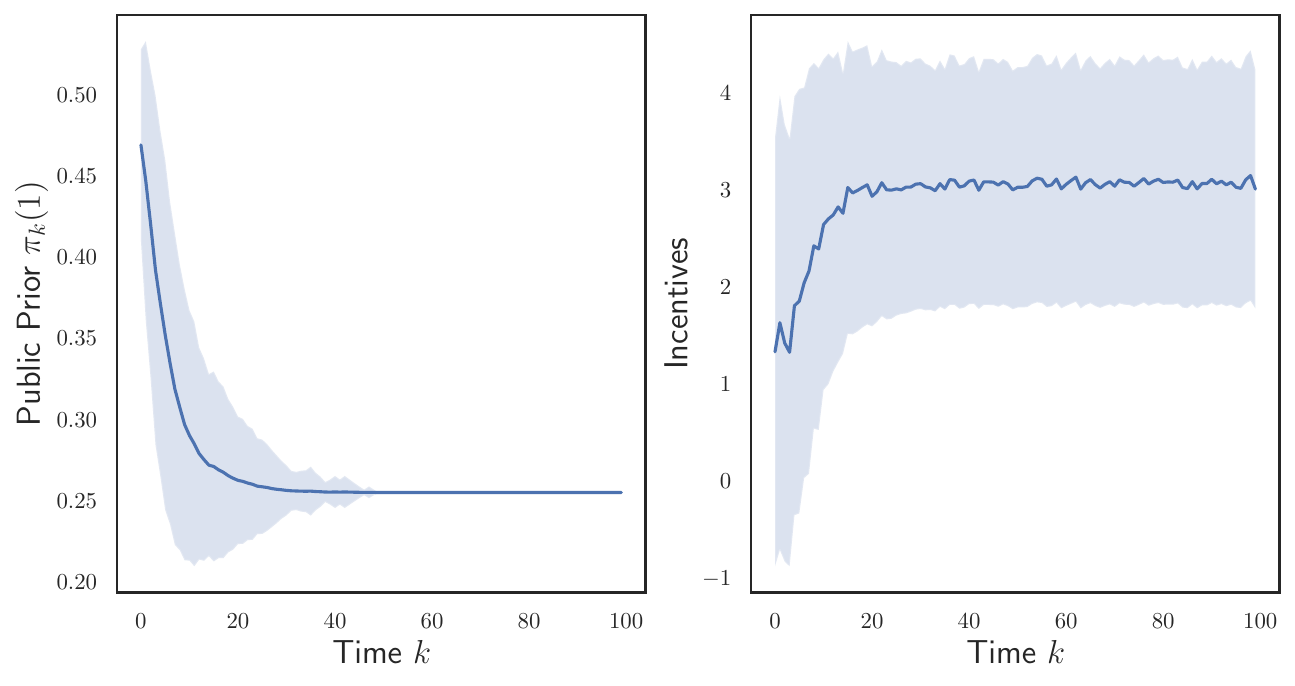}}
    \caption{Incentivized stochastic control of autonomous \bayesianagents.  The public belief (left) converges slower as a result of incentivization.  Also note that the incentive (right) is a supermartingale (as proved in Theorem~\ref{th:submartingale}). This is for a policy of the form~\eqref{eq:thresholdpolicyincentivized} threshold parameter of $\threshold=0.4$.}
    \label{fig:incentivized}
\end{figure}
Next, we study the optimal stopping of \bayesianagents\ first when these agents are centrally controlled and next when these agents are autonomous. 

For both experiments, we consider the product quality identification task. To make sure the actions and the observations are of the same dimensions, we assume that the observation space is also two-dimensional $|\observationspace|=2$. The observation matrix was taken to be $B = [[0.7,0.3],[0.3,0.7]]$. Note this is necessary since here we ask the \bayesianagents\ to either reveal the private observation or to take action using Eq.~\ref{eq:action}, and update in Eq.~\ref{eq:priorupdate} needs to be consistent. 

For the case when the \bayesianagents\ are centrally controlled, and the simplified system model, policies that have a switching threshold curve (Eq.~\ref{eq:optpolicy}) with respect to the belief space can be represented as 
\vspace{-2mm}
 \begin{align}\label{eq:thresholdpolicy}
        \policy(\prior) = \begin{cases}
           2 \ \text{(continue) if} \ \prior(0) \leq \threshold \\
           1 \ \text{(stop) if} \ \prior(0) > \threshold  \\
        \end{cases},
    \end{align}
where $\threshold$ is the threshold parameter.
The true state is uniformly sampled from $\{0,1\}$. The delay cost is taken as $\delaycost=10$, and the misclassification cost is taken as $\errorcost=50$. The discount factor is taken as $\discountfactor=0.99$, and the infinite horizon is approximated by a horizon of length $100$. The cost for each indiviual \bayesianagents, $\cost(\statevar,\action) = \indicator(\statevar\neq \action)$. 

In Figure~\ref{fig:parametersweep}, we perform a parameter sweep over $\threshold\in[0,1]$ with a grid cardinality of $100$, and evaluate three different aspects. The results are averaged over $100$ independent runs. In the top-left corner, we evaluate the cost function as a function of the threshold parameter. We observe that the cost is minimum for a higher parameter value. And a higher parameter corresponds to a smaller stop time. The accuracy is computed as the proportion of times the correct state is predicted. The sudden jumps in the cost are due to the fact that a small change in the policy parameter is enough to shift the public belief shifts from a region of learning to herding. 

Next, we look at \bayesianagents\, which are incentivized and autonomous. Here, the  incentivization policy is of the form,
\begin{align}\label{eq:thresholdpolicyincentivized}
        \policy(\prior) = \begin{cases}
          0 \ \text{(stop incentivizing) if}  \ \prior(0) \leq \threshold \\
           \incentivefunction(\prior,\observation) \ \text{(incentivize) if} \ \prior(0) > \threshold  \\
        \end{cases},
    \end{align}
    where $\threshold$ and $\incentivefunction$ is of the form~\eqref{eq:incentivefunction}. We consider the parameters $\costconstantone_2 = 1.3$, $\costconstantone_1 = 0.8$, $\actioncost_2=0.5$, $\actioncost_1 = 0.1$, $\costconstanttwo_2 =0.5$ and $\costconstanttwo_2 =0.2$. The cost function $\cost$ is the same as the previous part, and the composite cost function is given by~\eqref{eq:incentivizedcost}. 
    
We first illustrate our result of Theorem~\ref{th:submartingale} in Figure~\ref{fig:incentivized}, where it can be seen that the sequence of incentives is a submartingale sequence. We fix the threshold parameter of~\eqref{eq:thresholdpolicyincentivized} to $\theta=0.4$. The public belief converges much slower due to the modified cost function (the additional incentive term in~\ref{eq:incentivizedcost}); hence, using our stopping time formulation, we can extract more private observations, which can be later used to get more accurate estimates.  
\begin{figure}[ht!]
    \centering
    \includegraphics[width=0.8\linewidth]{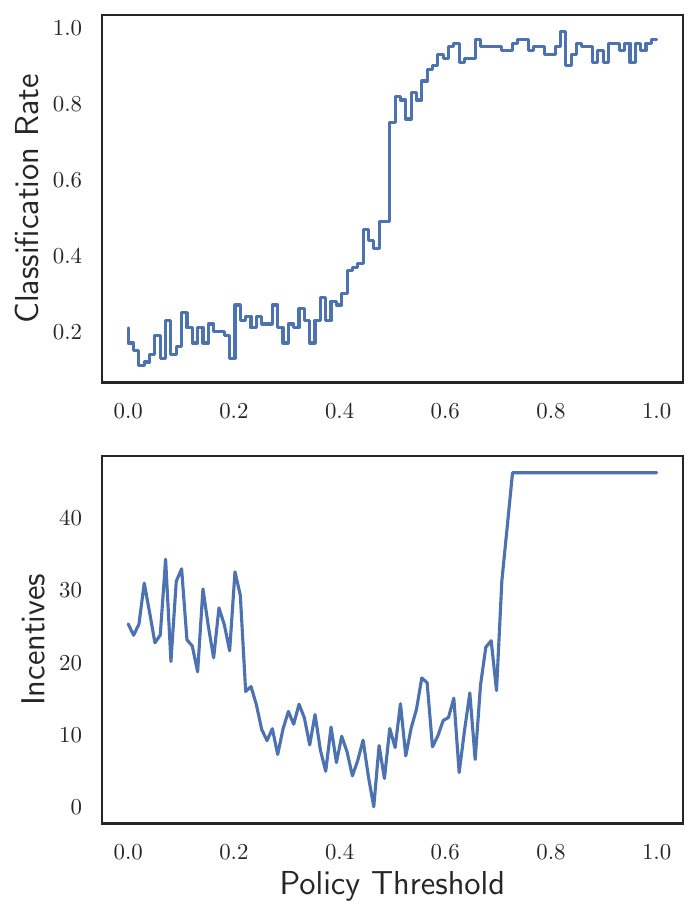}
    \caption{The cumulative incentive spent (bottom) and the achieved classification rate (top) for different values of threshold parameter $\threshold$.  The policy threshold of around $0.7$ achieves a tradeoff between the classification rate and the cumulative incentive expenditure. The incentives are constant after a particular parameter because of herding.   }
    \label{fig:parametersweepincentive}
\end{figure}

Figure~\ref{fig:parametersweepincentive} shows the value of the cost for the different values of the policy threshold $\threshold$. We maintain the same system parameters as the previous two experiments. It can be seen that both the classification rate and the cumulative incentive go up as the policy threshold increases in this example. The sudden jump in the incentive is again due to the sudden switch between the prior region from learning to herding. However, also note that the threshold around 0.6 has an incentive in the range [5,10], and the classification rate is still decent (0.9-1), therefore showing how there is an optimal threshold that optimally achieves a tradeoff between the cumulative incentive expenditure and classification performance. 
\subsection{Illustration of Stochastic Gradient Algorithm}
Finally, we show how the proposed policy gradient algorithm (summarized in Algorithm~\ref{alg:stochasticapprox}) can be used to optimally estimate the policy parameters for optimal stopping in incentivized \bayesianagents\ with system parameters from the previous section. We run $100$ iterations, each of which uses a cost function averaged over $100$ independent runs. We set a linearly decreasing step size from $\stepsize=0.05$ to $0.005$ and set the parameter perturbation as $\perturbationsize=1$ and the approximation factor for the sigmoidal policy of~\eqref{eq:approximatepolicy} as $\approximationparameter=0.3$. The discount factor is $\discountfactor=0.99$.
\begin{figure}[ht!]
    \centering
    \includegraphics[width=0.8\linewidth]{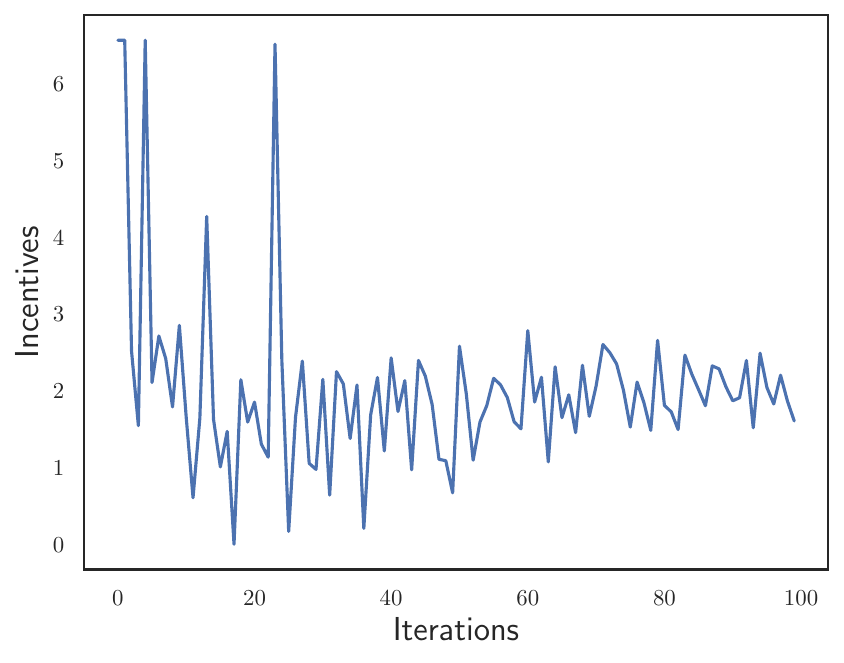}
    \caption{The approximate cumulative incentives corresponding to parameters of a policy gradient for estimating the optimal threshold parameters of~\eqref{eq:optincentivepolicy}. The incentive of the final iterate is much smaller than the incentive expenditure of a too-small or too-large parameter from Figure~\ref{fig:parametersweepincentive}. }
    \label{fig:stochasticapprox}
\end{figure}

Figure~\ref{fig:stochasticapprox} presents how the cumulative incentive expenditure changes with each policy parameter update. It can be seen that after a few iterations, the iterates converge to parameter value such that the incentive expenditure is in the range [5,15], which is close to the optimal incentive from Figure~\ref{fig:parametersweepincentive}.  

 \section{Discussion and Future Work}\label{sec:conclusion}
With the rapid adoption of LLMs across science and engineering and the emergence of LLM-based agents for automating diverse workflows\cite{openaiagents}, it is crucial to systematically study their behavior and interactions. This work addresses this need by proposing interpretable models and stochastic control algorithms to engineer systems of \bayesianagents\ that perform Bayesian inference across a range of applications. Our study highlights the versatility of LLMAs across various fields, aiming to inspire practitioners from diverse disciplines to explore and integrate these tools into their workflows.

In this conclusion section, we summarize how the two layers of abstraction explored in this paper using Bayesian revealed preferences and Bayesian social learning provide a sound basis for driving research in understanding and controlling the behavior of LLM agents. We discuss the implications of our methods and their potential applications in studying interacting \bayesianagents. We close with promising research directions at the intersection of signal processing, network science, and machine learning to enhance the capabilities of \bayesianagents.

\subsection{Summary}
We first presented a Bayesian sensor model for constructing a \bayesianagenttext\ (\bayesianagent). This model was a composition of a) an LLM, which was used as an interpretable and configurable sensor for high-dimensional data, and b) a Bayesian engine to represent and update the belief of the underlying state. Second, we put forth the necessary and sufficient conditions for a \bayesianagent\ to be a rationally inattentive Bayesian utility maximizer (\ribum). We present algorithms for estimating the utility function of a \bayesianagent\, which is \ribum. The reconstructed utility naturally leads to the interpretability of the actions of the \bayesianagent. These methods are applicable to both the Bayesian sensor model of \bayesianagent\ we propose and off-the-shelf \bayesianagents\ and LLMs. Thirdly we look at Bayesian social learning \change{in} a sequence of \bayesianagents. We show that this sequence of agents herd in their actions even if they share their private observations. Then, we formulate optimal stopping problems for failure state detection, which optimally delays herding and improves estimation by allowing the private observations to be shared. The optimal stopping problem is formulated both for a case when the \bayesianagents\ are centrally controlled and when they are autonomous. We show that under relatively mild conditions on the observation matrix and the cost function of the \bayesianagent\, the optimal policy for the optimal stopping problem has a threshold structure. A policy gradient algorithm is proposed to estimate the optimal policy efficiently without the knowledge of system parameters. Finally, we show numerical experiments demonstrating the various claims and frameworks proposed in the study. 

\subsection{Insights}
We conclude the paper with the following key takeaways:
\begin{enumerate}
    \item The Bayesian sensor construction of a \change{\bayesianagent} lends to a lot more interpretability when performing sequential estimation using a large language model, more than if Bayesian inference was performed with just embeddings (which might be more accurate but not interpretable). 
    \item The Bayesian revealed preferences framework is a systematic way to obtain utility (or cost) functions for blackbox \bayesianagents. It's especially useful when the state and action space are small, but the \bayesianagent\ operates in tens of different environments.
    \item For a practitioner, Bayesian social learning might seem simplistic, however the important point is that even a simplistic sequence of \bayesianagents, herding is an undesirable and inevitable phenomena. Therefore, care needs to be taken when creating networks of \bayesianagents\ that use each other's knowledge to avoid data incest. 
    \item The stopping time formulation is especially useful when deploying \bayesianagents\ to detect a failure state. The quickest change detection discussed briefly can be practically used to switch optimally between a large (more accurate but more expensive) and a small (less accurate and less expensive) LLM. 
    \item Finally, our numerical experiments are conducted on real-world datasets but are limited to public textual data. Using vision-language models, the framework proposed in this paper can be used for more sophisticated bio-medical data (composed of images, audio, etc.).  
\end{enumerate}
\subsection{Limitations}
The Bayesian sensor model developed for a \bayesianagent\ requires domain knowledge to design the prompts (features) that are input into the LLM, producing a low-dimensional output; this knowledge can be obtained from a larger LLM. However, the Bayesian revealed preference framework becomes computationally infeasible in scenarios with large state and action spaces. Our formulation of stopping time and the structural results presented are focused solely on detecting a single state, specifically for identifying a failure state. While our framework is Bayesian inference centric, \bayesianagents\ in practice can undertake more complex tasks, indicating a need to extend this work to more general settings. \change{In the Bayesian revealed preferences framework considered in this paper, access to the true action posterior is assumed; however, in practice, only empirical estimates may be available. Additionally, the applications discussed in this work—finance, online content moderation, and product analysis—may limit the generalizability of the framework to other domains.}

 \subsection{Research Directions and Future Work}
 With technological innovation in LLMs, many players in the market have launched their own LLM agents. Such agents often have more functionality than those modeled in this paper, and hence, controlling the behavior of these agents would be more challenging. Further, these agents will inevitably interact with each other through the content they generate or on online platforms. Such interaction would lead to a Bayesian update in their beliefs either explicitly by design as modeled in the paper or implicitly as these agents improve. It is therefore important to consider the models presented in this study in future and related research. Given the rise of \bayesianagents, we find the following four research directions particularly exciting.

\subsubsection{Applicability in other paradigms}
Exploring the methods presented in this work within different LLM paradigms—such as retrieval-augmented generation (RAG), planning, and fine-tuned agents—would be a compelling area of study. RAG involves retrieving relevant information from a database to provide context for an LLM, enhancing the accuracy and relevance of its responses. The models of \bayesianagents\ discussed in this paper could be effectively applied in scenarios where multiple agents operate over a knowledge graph to collaboratively retrieve pertinent information~\cite{inforetrieval}, particularly when there is a prior defined over the knowledge graph. Additionally, in the context of fine-tuned agents, where agents continuously learn from a private dataset, the framework outlined in this paper can be adapted to accommodate evolving \bayesianagents\ and heterogeneous agents.

\subsubsection{\change{Application in Education and Healthcare}}
\change{
The framework proposed in this paper has broader applicability than content moderation, finance, and opinion mining. We briefly discuss the applications of social learning of LLMAs in education and healthcare. In education, there are many potential use cases of LLMs, including customized evaluation, interactive tutorials, and LLM-assisted group discussions 
}~\cite{chatgpteducation}. \change{Therefore, a sequence of LLMAs can be used to analyze different contents, and studying the control of LLMAs is important to reduce the bias, for e.g., while grading students. In healthcare, LLMAs are already used for patient interaction to provide instructions for self-care, and to route the patient to the appropriate healthcare provider. However, multiple LLMAs can be used to analyze a patient's medical history and provide useful feedback to the doctor. Recently, authors in}~\cite{qiu_llm-based_2024}\change{ have further discussed various opportunities for LLMAs in the healthcare domain.}

\subsubsection{Search for a unified interpretable models}
The interpretable models presented in this paper serve as blackbox models for the \bayesianagents. However, significant research has focused on glassbox (or transparent) models that leverage mechanistic interpretability or utilize explainable features for transformers, the neural architecture underlying LLMs. Integrating these two approaches into a unified model would be advantageous, allowing for explanations of both the \bayesianagent\ interactions with their external environment and the reasoning behind their behavior. Such a framework could be a valuable tool for analyzing various challenges related to the real-world deployment of \bayesianagents, including reliability and safety.

\subsubsection{Network of \bayesianagents\ and Data Fusion}
The \bayesianagents\ in this study are static, homogeneous, and arranged in a line graph. \change{We have considered a homogenous society of LLM agents for notational convenience and for ease of analysis. Our framework
allows for different LLM agents can have different observation matrices, this can model LLM agents who have different LLMs, prompts or pre-training data.  Additionally, LLMAs could possess asymmetric private observations and be fine-tuned on individualized datasets, creating a network akin to a distributed mixture of millions of experts}~\cite{he2024mixturemillionexperts}. \change{Another direct extension of our framework accommodates LLM agents with different observation and action spaces, provided that each agent is aware of the observation and action spaces of the preceding agent. A generalized network of LLM agents could also be utilized as Bayesian sensors for distributed state estimation. Furthermore, societies of heterogeneous LLM agents communicating with one another warrant further investigation.} These \bayesianagent\ sensors could then perform data fusion using standard techniques, with careful measures to prevent data incest. \change{This paper has looked at stochastic control methods for controlling herding in a sequence of interacting LLMAs. We have formulated two non-standard POMDPs for the centrally controlled and incentivized case. Further to derive structural results, we have looked at the extreme case where there are only two possible actions. These structural results naturally lead to an efficient distributed policy gradient algorithm for searching the optimal policy. It would be interesting to see if newer results from sociology can be used to analyze and control more sophisticated behavior of LLMAs when interacting with each other in heterogeneous societies. }

\subsubsection{Human In The Loop with \bayesianagents}
In applications such as finance, healthcare, and content moderation, \bayesianagents\ frequently interact with humans to receive feedback that can greatly enhance task performance~\cite{humanaicollab}. This necessitates advanced models that account for human-agent interactions to better align cost functions, communication protocols, and sensing mechanisms. As these \bayesianagents\ become more widely deployed across various applications, they will form natural networks with humans. Investigating how these \bayesianagents\ can share private information both reliably and securely to enhance usability and efficiency presents a compelling and open research challenge.

\bibliographystyle{ieeetr}
\bibliography{main}

\begin{thebibliography}{10}

\bibitem{kaplinanddeen}
A.~Caplin and M.~Dean, ``{Revealed Preference, Rational Inattention, and Costly Information Acquisition},'' {\em American Economic Review}, vol.~105, p.~2183–2203, July 2015.

\bibitem{bhattcontrolled2020}
S.~Bhatt and V.~Krishnamurthy, ``Controlled sequential information fusion with social sensors,'' {\em IEEE Transactions on Automatic Control}, vol.~66, no.~12, pp.~5893--5908, 2021.

\bibitem{chamley_rational_2004}
C.~Chamley, {\em Rational {Herds}: {Economic} {Models} {Of} {Social} {Learning}}.
\newblock Cambridge University Press, 2004.

\bibitem{quickestchangedetectionitit}
V.~Krishnamurthy, ``Quickest {D}etection {POMDPs} with {S}ocial {L}earning: {I}nteraction of {L}ocal and {G}lobal {D}ecision {M}akers,'' {\em IEEE Transactions on Information Theory}, vol.~58, no.~8, pp.~5563--5587, 2012.

\bibitem{krishnamurthy_partially_2016}
V.~Krishnamurthy, {\em Partially {Observed} {Markov} {Decision} {Processes}}.
\newblock Cambridge University Press, Mar. 2016.

\bibitem{gelman2013bayesian}
A.~Gelman, J.~B. Carlin, H.~S. Stern, D.~B. Dunson, A.~Vehtari, and D.~B. Rubin, {\em {B}ayesian {D}ata {A}nalysis}.
\newblock Chapman and Hall/CRC, 3~ed., 2013.

\bibitem{whitepaperagents}
J.~Wiesinger, P.~Marlow, and V.~Vuskovic, ``Agents,'' white paper, Google/Kaggle, Sept. 2024.
\newblock [Online]. Available: https://www.kaggle.com/whitepaper-agents.

\bibitem{samaltman}
S.~Altman, ``Reflections - blog,'' Jan 2025.
\newblock [Online]. Available: https://blog.samaltman.com/reflections.

\bibitem{10.5555/3295222.3295349}
A.~Vaswani, N.~Shazeer, N.~Parmar, J.~Uszkoreit, L.~Jones, A.~N. Gomez, L.~Kaiser, and I.~Polosukhin, ``Attention is all you need,'' in {\em Proceedings of the 31st International Conference on Neural Information Processing Systems}, NIPS'17, (Red Hook, NY, USA), p.~6000–6010, Curran Associates Inc., 2017.

\bibitem{min_recent_2023}
B.~Min, H.~Ross, E.~Sulem, A.~P.~B. Veyseh, T.~H. Nguyen, O.~Sainz, E.~Agirre, I.~Heintz, and D.~Roth, ``Recent {Advances} in {Natural} {Language} {Processing} via {Large} {Pre}-trained {Language} {Models}: {A} {Survey},'' {\em ACM Computing Surveys}, vol.~56, pp.~30:1--30:40, Sept. 2023.

\bibitem{li_pre-trained_2022}
S.~Li, X.~Puig, C.~Paxton, Y.~Du, C.~Wang, L.~Fan, T.~Chen, D.-A. Huang, E.~Akyürek, A.~Anandkumar, J.~Andreas, I.~Mordatch, A.~Torralba, and Y.~Zhu, ``Pre-{Trained} {Language} {Models} for {Interactive} {Decision}-{Making},'' {\em Advances in Neural Information Processing Systems}, vol.~35, pp.~31199--31212, Dec. 2022.

\bibitem{minaee_deep_2021}
S.~Minaee, N.~Kalchbrenner, E.~Cambria, N.~Nikzad, M.~Chenaghlu, and J.~Gao, ``Deep {Learning}--based {Text} {Classification}: {A} {Comprehensive} {Review},'' {\em ACM Computing Surveys}, vol.~54, pp.~62:1--62:40, Apr. 2021.

\bibitem{yahooOnlineContent}
Y.~Finance, ``90\% of online content could be generated by {A}{I} by 2025, expert says,''
\newblock [Online]. Available: https://finance.yahoo.com/news/90-of-online-content-could-be-generated-by-ai-by-2025-expert-says-201023872.html.

\bibitem{zhou2023agents}
W.~Zhou, Y.~E. Jiang, L.~Li, J.~Wu, T.~Wang, S.~Qiu, J.~Zhang, J.~Chen, R.~Wu, S.~Wang, S.~Zhu, J.~Chen, W.~Zhang, X.~Tang, N.~Zhang, H.~Chen, P.~Cui, and M.~Sachan, ``Agents: An {O}pen-source {F}ramework for {Autonomous} {Language} {Agents},'' {\em arXiv:2309.07870}, 2023.
\newblock [Online]. Available: https://arxiv.org/abs/2309.07870.

\bibitem{shumailov2024ai}
I.~Shumailov, Z.~Shumaylov, Y.~Zhao, N.~Papernot, R.~Anderson, and Y.~Gal, ``{AI} models collapse when trained on recursively generated data,'' {\em Nature}, vol.~631, no.~8022, pp.~755--759, 2024.

\bibitem{jain2024identifying}
A.~Jain and V.~Krishnamurthy, ``Identifying hate speech peddlers in online platforms: a {B}ayesian social learning approach for large language model driven decision-makers,'' {\em arXiv:2405.07417}, 2024.
\newblock [Online]. Available: https://arxiv.org/abs/2405.07417.

\bibitem{agentforce}
Salesforce, ``{Salesforce Unveils Agentforce–What {AI} Was Meant to Be},'' September 2024.
\newblock [Online]. Available: https://investor.salesforce.com/press-releases/press-release-details/2024/Salesforce-Unveils-AgentforceWhat-AI-Was-Meant-to-Be/.

\bibitem{continuallearning}
L.~Wang, X.~Zhang, H.~Su, and J.~Zhu, ``A comprehensive survey of continual learning: theory, method and application,'' {\em IEEE Transactions on Pattern Analysis and Machine Intelligence}, vol.~46, pp.~5362--5383, February 2024.

\bibitem{kaddour_challenges_2023}
J.~Kaddour, J.~Harris, M.~Mozes, H.~Bradley, R.~Raileanu, and R.~McHardy, ``Challenges and applications of large language models,'' {\em arXiv:2307.10169}, 2023.
\newblock [Online]. Available: http://arxiv.org/abs/2307.10169.

\bibitem{llmaccesssurvey}
M.~A.~K. Raiaan, M.~S.~H. Mukta, K.~Fatema, N.~M. Fahad, S.~Sakib, M.~M.~J. Mim, J.~Ahmad, M.~E. Ali, and S.~Azam, ``A review on large language models: Architectures, applications, taxonomies, open issues and challenges,'' {\em IEEE Access}, vol.~12, pp.~26839--26874, 2024.

\bibitem{llmsgraphs}
B.~Jin, G.~Liu, C.~Han, M.~Jiang, H.~Ji, and J.~Han, ``Large language models on graphs: A comprehensive survey,'' {\em IEEE Transactions on Knowledge and Data Engineering}, pp.~1--20, 2024.

\bibitem{llmsongames}
R.~Gallotta, G.~Todd, M.~Zammit, S.~Earle, A.~Liapis, J.~Togelius, and G.~N. Yannakakis, ``Large language models and games: A survey and roadmap,'' {\em IEEE Transactions on Games}, pp.~1--18, 2024.

\bibitem{chatgpteducation}
E.~Kasneci, K.~Sessler, S.~Küchemann, M.~Bannert, D.~Dementieva, F.~Fischer, U.~Gasser, G.~Groh, S.~Günnemann, E.~Hüllermeier, S.~Krusche, G.~Kutyniok, T.~Michaeli, C.~Nerdel, J.~Pfeffer, O.~Poquet, M.~Sailer, A.~Schmidt, T.~Seidel, M.~Stadler, J.~Weller, J.~Kuhn, and G.~Kasneci, ``{ChatGPT} for good? on opportunities and challenges of large language models for education,'' {\em Learning and Individual Differences}, vol.~103. Art. no. 102274, April 2023.

\bibitem{inforetrieval}
R.~Reinanda, E.~Meij, and M.~de~Rijke, ``Knowledge graphs: An information retrieval perspective,'' {\em Foundations and Trends® in Information Retrieval}, vol.~14, no.~4, pp.~289--444, 2020.

\bibitem{ijcai2024p890}
T.~Guo, X.~Chen, Y.~Wang, R.~Chang, S.~Pei, N.~V. Chawla, O.~Wiest, and X.~Zhang, ``Large language model based multi-agents: A survey of progress and challenges,'' in {\em Proceedings of the Thirty-Third International Joint Conference on Artificial Intelligence, {IJCAI-24}}, pp.~8048--8057, August 2024.

\bibitem{li2024personalllmagentsinsights}
Y.~Li, H.~Wen, W.~Wang, X.~Li, Y.~Yuan, G.~Liu, J.~Liu, W.~Xu, X.~Wang, Y.~Sun, R.~Kong, Y.~Wang, H.~Geng, J.~Luan, X.~Jin, Z.~Ye, G.~Xiong, F.~Zhang, X.~Li, M.~Xu, Z.~Li, P.~Li, Y.~Liu, Y.-Q. Zhang, and Y.~Liu, ``Personal llm agents: Insights and survey about the capability, efficiency and security,'' {\em arXiv:2401.05459}, May 2024.
\newblock [Online]. Available: https://arxiv.org/abs/2401.05459.

\bibitem{li2024finmem}
Y.~Yu, H.~Li, Z.~Chen, Y.~Jiang, Y.~Li, D.~Zhang, R.~Liu, J.~W. Suchow, and K.~Khashanah, ``{FinMem: A Performance-Enhanced LLM Trading Agent with Layered Memory and Character Design},'' {\em Proceedings of the AAAI Symposium Series}, vol.~3, pp.~595--597, May 2024.

\bibitem{hsu2023helpinghelpersupportingpeer}
S.-L. Hsu, R.~S. Shah, P.~Senthil, Z.~Ashktorab, C.~Dugan, W.~Geyer, and D.~Yang, ``Helping the helper: Supporting peer counselors via {AI}-empowered practice and feedback,'' {\em arXiv:2305.08982}, May 2023.
\newblock [Online]. Available: https://arxiv.org/abs/2305.08982.

\bibitem{llmautomation}
Y.~Guan, D.~Wang, Z.~Chu, S.~Wang, F.~Ni, R.~Song, and C.~Zhuang, ``Intelligent agents with llm-based process automation,'' in {\em Proceedings of the 30th ACM SIGKDD Conference on Knowledge Discovery and Data Mining}, KDD '24, (New York, NY, USA), p.~5018–5027, Association for Computing Machinery, 2024.

\bibitem{suri_software_2023}
S.~Suri, S.~N. Das, K.~Singi, K.~Dey, V.~S. Sharma, and V.~Kaulgud, ``Software engineering using autonomous agents: Are we there yet?,'' in {\em 2023 38th {IEEE}/{ACM} {International} {Conference} on {Automated} {Software} {Engineering} ({ASE})}, pp.~1855--1857, Sept. 2023.
\newblock ISSN: 2643-1572.

\bibitem{webshop}
S.~Yao, H.~Chen, J.~Yang, and K.~Narasimhan, ``{WebShop}: {Towards Scalable Real-World Web Interaction with Grounded Language Agents},'' in {\em Proceedings of the 36th International Conference on Neural Information Processing Systems}, vol.~35, pp.~20744--20757, Curran Associates, Inc., Apr. 2024.

\bibitem{sumers2024cognitive}
T.~Sumers, S.~Yao, K.~Narasimhan, and T.~Griffiths, ``{C}ognitive {A}rchitectures for {L}anguage {A}gents,'' {\em Transactions on Machine Learning Research}, February 2024.
\newblock [Online]. Available: https://openreview.net/forum?id=1i6ZCvflQJ.

\bibitem{zhuge2023mindstormsnaturallanguagebasedsocieties}
M.~Zhuge, H.~Liu, F.~Faccio, D.~R. Ashley, R.~Csordás, A.~Gopalakrishnan, A.~Hamdi, H.~A. A.~K. Hammoud, V.~Herrmann, K.~Irie, L.~Kirsch, B.~Li, G.~Li, S.~Liu, J.~Mai, P.~Piękos, A.~Ramesh, I.~Schlag, W.~Shi, A.~Stanić, W.~Wang, Y.~Wang, M.~Xu, D.-P. Fan, B.~Ghanem, and J.~Schmidhuber, ``Mindstorms in {N}atural {L}anguage-based {S}ocieties of {M}ind,'' {\em arXiv:2305.17066}, May 2023.
\newblock [Online]. Available: https://arxiv.org/abs/2305.17066.

\bibitem{liu2024a}
Z.~Liu, Y.~Zhang, P.~Li, Y.~Liu, and D.~Yang, ``A dynamic {LLM}-powered agent network for task-oriented agent collaboration,'' in {\em First Conference on Language Modeling}, July 2024.
\newblock [Online]. Available: https://openreview.net/forum?id=XII0Wp1XA9.

\bibitem{wang2023jarvis1}
Z.~Wang, S.~Cai, A.~Liu, Y.~Jin, J.~Hou, B.~Zhang, H.~Lin, Z.~He, Z.~Zheng, Y.~Yang, X.~Ma, and Y.~Liang, ``Jarvis-1: Open-world multi-task agents with memory-augmented multimodal language models,'' {\em arXiv preprint arXiv: 2311.05997}, 2023.

\bibitem{chuang2024simulatingopiniondynamicsnetworks}
Y.-S. Chuang, A.~Goyal, N.~Harlalka, S.~Suresh, R.~Hawkins, S.~Yang, D.~Shah, J.~Hu, and T.~T. Rogers, ``Simulating opinion dynamics with networks of llm-based agents,'' {\em arXiv:2311.09618}, 2024.
\newblock [Online]. Available: https://arxiv.org/abs/2311.09618.

\bibitem{han2024regulatormanufactureraiagentsmodeling}
Y.~Han and Z.~Guo, ``Regulator-{M}anufacturer {AI} {A}gents {M}odeling: Mathematical feedback-driven {M}ulti-{A}gent {LLM} framework,'' {\em arXiv:2411.15356}, 2024.
\newblock [Online]. Available: https://arxiv.org/abs/2411.15356.

\bibitem{marchi2024heatdeathgenerativemodels}
M.~Marchi, S.~Soatto, P.~Chaudhari, and P.~Tabuada, ``Heat death of generative models in closed-loop learning,'' {\em arXiv:2404.02325}, 2024.
\newblock [Online]. Available: https://arxiv.org/abs/2404.02325.

\bibitem{duan_flocks_nodate}
H.~Duan, A.~Dziedzic, N.~Papernot, and F.~Boenisch, ``Flocks of stochastic parrots: differentially private prompt learning for large language models,'' in {\em Proceedings of the 37th International Conference on Neural Information Processing Systems}, pp.~76852 -- 76871, Curran Associates Inc., May 2024.

\bibitem{mohtashami_social_2024}
A.~Mohtashami, F.~Hartmann, S.~Gooding, L.~Zilka, M.~Sharifi, and B.~A.~y. Arcas, ``Social learning: Towards collaborative learning with large language models,'' {\em arXiv:2312.11441}, Feb. 2024.
\newblock [Online]. Available: http://arxiv.org/abs/2312.11441.

\bibitem{NEURIPS2023_1190733f}
Y.~Ge, W.~Hua, K.~Mei, j.~ji, J.~Tan, S.~Xu, Z.~Li, and Y.~Zhang, ``{OpenAGI}: When {LLM} meets {D}omain {E}xperts,'' in {\em Advances in Neural Information Processing Systems}, vol.~36, pp.~5539--5568, Curran Associates, Inc., May 2024.

\bibitem{chevalier2023adapting}
A.~Chevalier, A.~Wettig, A.~Ajith, and D.~Chen, ``Adapting language models to compress contexts,'' in {\em The 2023 Conference on Empirical Methods in Natural Language Processing}, October 2023.
\newblock [Online]. Available: https://openreview.net/forum?id=kp1U6wBPXq.

\bibitem{brinkmann-etal-2024-mechanistic}
J.~Brinkmann, A.~Sheshadri, V.~Levoso, P.~Swoboda, and C.~Bartelt, ``A mechanistic analysis of a transformer trained on a symbolic multi-step reasoning task,'' in {\em Findings of the Association for Computational Linguistics ACL 2024}, (Bangkok, Thailand), pp.~4082--4102, Association for Computational Linguistics, Aug. 2024.

\bibitem{nanda2023progress}
N.~Nanda, L.~Chan, T.~Lieberum, J.~Smith, and J.~Steinhardt, ``Progress measures for grokking via mechanistic interpretability,'' in {\em The Eleventh International Conference on Learning Representations}, February 2023.
\newblock [Online]. Available: https://openreview.net/forum?id=9XFSbDPmdW.

\bibitem{singh_augmenting_2023}
C.~Singh, A.~Askari, R.~Caruana, and J.~Gao, ``Augmenting interpretable models with large language models during training,'' {\em Nature Communications}, vol.~14, p.~7913, Nov. 2023.

\bibitem{pattanayak}
K.~Pattanayak, V.~Krishnamurthy, and A.~Jain, ``{I}nterpretable {D}eep {I}mage {C}lassification using {R}ationally {I}nattentive {U}tility {M}aximization,'' {\em IEEE Journal of Selected Topics in Signal Processing}, vol.~18, pp.~168--183, March 2024.

\bibitem{JMLR:v24:20-1202}
K.~Pattanayak and V.~Krishnamurthy, ``{N}ecessary and {S}ufficient {C}onditions for {I}nverse {R}einforcement {L}earning of {B}ayesian {S}topping {T}ime {P}roblems,'' {\em Journal of Machine Learning Research}, vol.~24, pp.~1--64, March 2024.

\bibitem{biasfairness}
N.~Mehrabi, F.~Morstatter, N.~Saxena, K.~Lerman, and A.~Galstyan, ``A survey on {Bias} and {Fairness} in {M}achine {L}earning,'' {\em ACM Comput. Surv.}, vol.~54, pp.~1--35, July 2021.

\bibitem{anthis2024impossibilityfairllms}
J.~Anthis, K.~Lum, M.~Ekstrand, A.~Feller, A.~D'Amour, and C.~Tan, ``The {Impossibility} of {Fair} {LLMs},'' {\em arXiv:2406.03198}, June 2024.
\newblock [Online]. Available: https://arxiv.org/abs/2406.03198.

\bibitem{accessxai}
A.~Adadi and M.~Berrada, ``Peeking inside the black-box: A survey on explainable artificial intelligence {(xAI)},'' {\em IEEE Access}, vol.~6, pp.~52138--52160, Sept. 2018.

\bibitem{operationalizingresponsibleAI}
L.~Zhu, X.~Xu, Q.~Lu, G.~Governatori, and J.~Whittle, ``{AI} and ethics---operationalizing responsible {AI},'' {\em Humanity Driven AI: Productivity, Well-being, Sustainability and Partnership}, pp.~15--33, Dec. 2021.

\bibitem{Liu2024}
C.~Liu, R.~Suresh, and A.~Banitalebi, ``Beauty beyond words: Explainable beauty product recommendations using ingredient-based product attributes,'' in {\em RecSys 2024 Workshop on Strategic and Utility-aware Recommendation}, Oct. 2024.
\newblock [Online]. Available: https://arxiv.org/abs/2409.13628.

\bibitem{sentimentanalysis}
B.~Liu, {\em Sentiment Analysis And Opinion Mining}.
\newblock Morgan \& Claypool Publishers, 2012.

\bibitem{sachdeva_measuring_2022}
P.~Sachdeva, R.~Barreto, G.~Bacon, A.~Sahn, C.~von Vacano, and C.~Kennedy, ``The {Measuring} {Hate} {Speech} {Corpus}: {Leveraging} {Rasch} {Measurement} {Theory} for {Data} {Perspectivism},'' in {\em Proceedings of the 1st {Workshop} on {Perspectivist} {Approaches} to {NLP} @{LREC2022}}, pp.~83--94, European Language Resources Association, June 2022.

\bibitem{recommendationllms}
Z.~Zhao, W.~Fan, J.~Li, Y.~Liu, X.~Mei, Y.~Wang, Z.~Wen, F.~Wang, X.~Zhao, J.~Tang, and Q.~Li, ``Recommender {S}ystems in the {E}ra of {L}arge {L}anguage {M}odels ({LLMs}),'' {\em IEEE Transactions on Knowledge and Data Engineering}, vol.~36, pp.~6889--6907, April 2024.

\bibitem{ni-etal-2019-justifying}
J.~Ni, J.~Li, and J.~McAuley, ``Justifying recommendations using distantly-labeled reviews and fine-grained aspects,'' in {\em Proceedings of the 2019 Conference on Empirical Methods in Natural Language Processing and the 9th International Joint Conference on Natural Language Processing (EMNLP-IJCNLP)} (K.~Inui, J.~Jiang, V.~Ng, and X.~Wan, eds.), (Hong Kong, China), pp.~188--197, Association for Computational Linguistics, Nov. 2019.

\bibitem{jiang2023mistral7b}
A.~Q. Jiang, A.~Sablayrolles, A.~Mensch, C.~Bamford, D.~S. Chaplot, D.~de~las Casas, F.~Bressand, G.~Lengyel, G.~Lample, L.~Saulnier, L.~R. Lavaud, M.-A. Lachaux, P.~Stock, T.~L. Scao, T.~Lavril, T.~Wang, T.~Lacroix, and W.~E. Sayed, ``Mistral 7b,'' {\em arXiv:2310.06825}, Oct. 2023.
\newblock [Online]. Available: https://arxiv.org/abs/2310.06825.

\bibitem{touvron2023llamaopenefficientfoundation}
H.~Touvron, T.~Lavril, G.~Izacard, X.~Martinet, M.-A. Lachaux, T.~Lacroix, B.~Rozière, N.~Goyal, E.~Hambro, F.~Azhar, A.~Rodriguez, A.~Joulin, E.~Grave, and G.~Lample, ``{LLaMA}: Open and efficient foundation language models,'' {\em arXiv:2302.13971}, Feb. 2023.
\newblock [Online]. Available: https://arxiv.org/abs/2302.13971.

\bibitem{singh2024rethinkinginterpretabilityeralarge}
C.~Singh, J.~P. Inala, M.~Galley, R.~Caruana, and J.~Gao, ``Rethinking interpretability in the era of large language models,'' {\em arXiv:2402.01761}, Feb. 2024.
\newblock [Online]. Available: https://arxiv.org/abs/2402.01761.

\bibitem{zhang2024llm}
Y.~Zhang, S.~Mao, T.~Ge, X.~Wang, Y.~Xia, W.~Wu, T.~Song, M.~Lan, and F.~Wei, ``{LLM} as a mastermind: A survey of strategic reasoning with large language models,'' in {\em First Conference on Language Modeling}, July 2024.
\newblock [Online]. Available: https://openreview.net/forum?id=iMqJsQ4evS.

\bibitem{lanchantin2023learning}
J.~Lanchantin, S.~Toshniwal, J.~E. Weston, A.~Szlam, and S.~Sukhbaatar, ``Learning to reason and memorize with self-notes,'' in {\em Thirty-seventh Conference on Neural Information Processing Systems}, pp.~11891--11911, September 2023.

\bibitem{wang2024voyager}
G.~Wang, Y.~Xie, Y.~Jiang, A.~Mandlekar, C.~Xiao, Y.~Zhu, L.~Fan, and A.~Anandkumar, ``Voyager: An open-ended embodied agent with large language models,'' {\em Transactions on Machine Learning Research}, March 2024.
\newblock [Online]. Available: https://openreview.net/forum?id=ehfRiF0R3a.

\bibitem{infocascades}
C.~Li, J.~Ma, X.~Guo, and Q.~Mei, ``{DeepCas}: An end-to-end predictor of information cascades,'' in {\em Proceedings of the 26th International Conference on World Wide Web}, WWW '17, (Republic and Canton of Geneva, CHE), pp.~577–--586, April 2017.

\bibitem{cui_drive_2024}
C.~Cui, Y.~Ma, X.~Cao, W.~Ye, and Z.~Wang, ``Drive as you speak: Enabling human-like interaction with large language models in autonomous vehicles,'' pp.~902--909, April 2024.

\bibitem{sun2024crafting}
C.-E. Sun, T.~Oikarinen, and T.-W. Weng, ``Crafting large language models for enhanced interpretability,'' in {\em ICML 2024 Workshop on Mechanistic Interpretability}, June 2024.
\newblock [Online]. Available: https://openreview.net/forum?id=pXx3nDi8hI.

\bibitem{baltaji-etal-2024-conformity}
R.~Baltaji, B.~Hemmatian, and L.~Varshney, ``Conformity, {C}onfabulation, and {I}mpersonation: {P}ersona inconstancy in multi-agent {LLM} collaboration,'' in {\em Proceedings of the 2nd Workshop on Cross-Cultural Considerations in NLP}, pp.~17--31, Association for Computational Linguistics, Aug. 2024.

\bibitem{borah2024implicitbiasdetectionmitigation}
A.~Borah and R.~Mihalcea, ``Towards implicit bias detection and mitigation in multi-agent llm interactions,'' {\em arXiv:2410.02584}, Oct. 2024.
\newblock [Online]. Available: https://arxiv.org/abs/2410.02584.

\bibitem{gupta2024language}
N.~Gupta, H.~Narasimhan, W.~Jitkrittum, A.~S. Rawat, A.~K. Menon, and S.~Kumar, ``Language model cascades: Token-level uncertainty and beyond,'' in {\em The Twelfth International Conference on Learning Representations}, January 2024.
\newblock [Online]. Available: https://openreview.net/forum?id=KgaBScZ4VI.

\bibitem{palo2023towards}
N.~D. Palo, A.~Byravan, L.~Hasenclever, M.~Wulfmeier, N.~Heess, and M.~Riedmiller, ``Towards a unified agent with foundation models,'' in {\em Workshop on Reincarnating Reinforcement Learning at ICLR 2023}, March 2023.
\newblock [Online]. Available: https://openreview.net/forum?id=JK\_B1tB6p-.

\bibitem{pmlr-v235-huh24a}
M.~Huh, B.~Cheung, T.~Wang, and P.~Isola, ``Position: The platonic representation hypothesis,'' in {\em Proceedings of the 41st International Conference on Machine Learning}, vol.~235 of {\em Proceedings of Machine Learning Research}, pp.~20617--20642, PMLR, July 2024.

\bibitem{evaluationsurveyllms}
Y.~Chang, X.~Wang, J.~Wang, Y.~Wu, L.~Yang, K.~Zhu, H.~Chen, X.~Yi, C.~Wang, Y.~Wang, W.~Ye, Y.~Zhang, Y.~Chang, P.~S. Yu, Q.~Yang, and X.~Xie, ``A survey on evaluation of large language models,'' {\em ACM Trans. Intell. Syst. Technol.}, vol.~15, Mar. 2024.
\newblock Art. no. 39.

\bibitem{howard_free_2019}
J.~W. Howard, ``Free speech and hate speech,'' {\em Annual Review of Political Science}, vol.~22, pp.~93--109, May 2019.

\bibitem{kennedy_contextualizing_2020}
B.~Kennedy, X.~Jin, A.~Mostafazadeh~Davani, M.~Dehghani, and X.~Ren, ``Contextualizing hate speech classifiers with post-hoc explanation,'' in {\em Proceedings of the 58th {Annual} {Meeting} of the {Association} for {Computational} {Linguistics}}, pp.~5435--5442, Association for Computational Linguistics, July 2020.

\bibitem{jain2024controlling}
A.~Jain and V.~Krishnamurthy, ``Controlling {F}ederated {L}earning for {C}overtness,'' {\em Transactions on Machine Learning Research}, Jan. 2024.
\newblock [Online]. Available: https://openreview.net/forum?id=g01OVahtN9.

\bibitem{acemoglu_opinion_2011}
D.~Acemoglu and A.~Ozdaglar, ``Opinion dynamics and learning in social networks,'' {\em Dynamic Games and Applications}, vol.~1, pp.~3--49, Mar. 2011.

\bibitem{constrainedgeneration}
H.~Zhang, M.~Dang, N.~Peng, and G.~Van Den~Broeck, ``Tractable control for autoregressive language generation,'' in {\em Proceedings of the 40th International Conference on Machine Learning}, pp.~40932 -- 40945, July 2023.

\bibitem{bordes2024introductionvisionlanguagemodeling}
F.~Bordes, R.~Y. Pang, A.~Ajay, A.~C. Li, A.~Bardes, S.~Petryk, O.~Mañas, Z.~Lin, A.~Mahmoud, B.~Jayaraman, M.~Ibrahim, M.~Hall, Y.~Xiong, J.~Lebensold, C.~Ross, S.~Jayakumar, C.~Guo, D.~Bouchacourt, H.~Al-Tahan, K.~Padthe, V.~Sharma, H.~Xu, X.~E. Tan, M.~Richards, S.~Lavoie, P.~Astolfi, R.~A. Hemmat, J.~Chen, K.~Tirumala, R.~Assouel, M.~Moayeri, A.~Talattof, K.~Chaudhuri, Z.~Liu, X.~Chen, Q.~Garrido, K.~Ullrich, A.~Agrawal, K.~Saenko, A.~Celikyilmaz, and V.~Chandra, ``An {I}ntroduction to {V}ision-{L}anguage {M}odeling,'' {\em arXiv:2405.17247}, May 2024.
\newblock [Online]. Available: https://arxiv.org/abs/2405.17247.

\bibitem{10.1145/3637528.3671629}
Z.~Dong, X.~Fan, and Z.~Peng, ``{FNSPID}: A comprehensive financial news dataset in time series,'' in {\em Proceedings of the 30th ACM SIGKDD Conference on Knowledge Discovery and Data Mining}, KDD '24, p.~4918–4927, Association for Computing Machinery, Aug. 2024.

\bibitem{krishnamurthypoor}
V.~Krishnamurthy and H.~V. Poor, ``A {T}utorial on {I}nteractive {S}ensing in {S}ocial {N}etworks,'' {\em IEEE Transactions on Computational Social Systems}, vol.~1, pp.~3--21, March 2014.

\bibitem{panda2024teach}
A.~Panda, C.~A. Choquette-Choo, Z.~Zhang, Y.~Yang, and P.~Mittal, ``Teach {LLM}s to phish: Stealing private information from language models,'' in {\em The Twelfth International Conference on Learning Representations}, Jan. 2024.
\newblock [Online]. Available: https://openreview.net/forum?id=qo21ZlfNu6.

\bibitem{kazdan2024collapsethriveperilspromises}
J.~Kazdan, R.~Schaeffer, A.~Dey, M.~Gerstgrasser, R.~Rafailov, D.~L. Donoho, and S.~Koyejo, ``Collapse or thrive? perils and promises of synthetic data in a self-generating world,'' {\em arXiv:2410.16713}, Oct. 2024.
\newblock [Online]. Available: https://arxiv.org/abs/2410.16713.

\bibitem{cvprmodeldistillation}
P.~Wang, D.~Xu, Z.~Fan, D.~Wang, S.~Mohan, F.~Iandola, R.~Ranjan, Y.~Li, Q.~Liu, Z.~Wang, and V.~Chandra, ``Taming mode collapse in score distillation for text-to-3d generation,'' in {\em 2024 IEEE/CVF Conference on Computer Vision and Pattern Recognition (CVPR)}, pp.~9037--9047, IEEE Computer Society, June 2024.

\bibitem{krishnamurthy2024slowconvergenceinteractingkalman}
V.~Krishnamurthy and C.~Rojas, ``Slow convergence of interacting kalman filters in word-of-mouth social learning,'' {\em arXiv:2410.08447}, October 2024.
\newblock [Online]. Available: https://arxiv.org/abs/2410.08447.

\bibitem{10.1145/2623330.2623637}
A.~Badanidiyuru, B.~Mirzasoleiman, A.~Karbasi, and A.~Krause, ``Streaming submodular maximization: massive data summarization on the fly,'' in {\em Proceedings of the 20th ACM SIGKDD International Conference on Knowledge Discovery and Data Mining}, p.~671–680, Association for Computing Machinery, Aug. 2014.

\bibitem{national2013frontiers}
N.~R. Council, {\em Frontiers in Massive Data Analysis}.
\newblock The National Academies Press, 2013.
\newblock [Online]. Available: https://nap.nationalacademies.org/catalog/18374/frontiers-in-massive-data-analysis.

\bibitem{quickestchangedetection}
V.~Krishnamurthy, ``Quickest {C}hange {D}etection of {T}ime {I}nconsistent {A}nticipatory {A}gents. {H}uman-{S}ensor and {C}yber-{P}hysical {S}ystems,'' {\em IEEE Transactions on Signal Processing}, vol.~69, pp.~1054--1069, January 2021.

\bibitem{openaiagents}
G.~Wang, ``{OpenAI} fast-tracks ai agents. how do we balance benefits with risks?,'' {\em Forbes}, Oct. 2024.
\newblock [Online]. Available: https://www.forbes.com/sites/geruiwang/2024/10/04/openai-fast-tracks-ai-agents-how-do-we-balance-benefits-with-risks/.

\bibitem{kushner2003stochastic}
H.~Kushner and G.~Yin, {\em Stochastic Approximation and Recursive Algorithms and Applications}.
\newblock Stochastic Modelling and Applied Probability,, Springer New York, 2003.

\bibitem{jaincontrol}
A.~Jain and V.~Krishnamurthy, ``{S}tructured {R}einforcement {L}earning for {I}ncentivized {S}tochastic {C}overt {O}ptimization,'' {\em IEEE Control Systems Letters}, vol.~8, pp.~1295--1300, May 2024.

\bibitem{bahador_classifying_nodate}
B.~Bahador, ``Classifying and {Identifying} the {Intensity} of {Hate} {Speech}, {SSRC},''
\newblock [Online]. Available: https://items.ssrc.org/disinformation-democracy-and-conflict-prevention/classifying-and-identifying-the-intensity-of-hate-speech/.

\bibitem{marketingreviews}
J.~A. Chevalier and D.~Mayzlin, ``The effect of word of mouth on sales: Online book reviews,'' {\em Journal of Marketing Research}, vol.~43, pp.~345--354, August 2006.

\bibitem{marketingvalue}
V.~A. Zeithaml, ``Consumer perceptions of price, quality, and value: A means-end model and synthesis of evidence,'' {\em Journal of Marketing}, vol.~52, pp.~2--22, July 1988.

\bibitem{qiu_llm-based_2024}
J.~Qiu, K.~Lam, G.~Li, A.~Acharya, T.~Y. Wong, A.~Darzi, W.~Yuan, and E.~J. Topol, ``{LLM}-based agentic systems in medicine and healthcare,'' {\em Nature Machine Intelligence}, vol.~6, pp.~1418--1420, Dec. 2024.

\bibitem{he2024mixturemillionexperts}
X.~O. He, ``Mixture of a million experts,'' {\em arXiv:2407.04153}, July 2024.
\newblock [Online]. Available: https://arxiv.org/abs/2407.04153.

\bibitem{humanaicollab}
M.~Sidji, W.~Smith, and M.~J. Rogerson, ``Human-{AI} collaboration in cooperative games: A study of playing codenames with an {LLM} assistant,'' {\em Proceedings ACM Human-Computing Interaction}, vol.~8, Oct. 2024.
\newblock DOI: 10.1145/3677081.

\bibitem{blackwelldominance}
D.~Blackwell, ``Equivalent comparisons of experiments,'' {\em The Annals of Mathematical Statistics}, vol.~24, pp.~265--272, June 1953.

\end{thebibliography}

 \appendix
\section{Proofs}

\subsection{Proof of Theorem 1}
{\it Proof of the necessity of NIAS and NIAC:}	
The proof follows closely from proof of~\cite{pattanayak}, and is reproduced for reader's aid. Similar proofs can also be found in ~\cite{krishnamurthy_partially_2016,kaplinanddeen}.
 \begin{enumerate}
 	\item NIAS~(\ref{eq:nias}): For environment $\environment\in\environmentspace$, define the subset $\obsset_{\action}\subseteq\obsset$ so that for any observation $\observation\in\obsset_{\action}$, given posterior probability mass function (pmf) $\probabilitymeasure_{\dpiter}(\statevar|\observation)$, the choice of action is $\action$ \eqref{eq:actionutilitymaxm} is maximum. Define the revealed posterior pmf given action $\action$ as $\probabilitymeasure_{\dpiter}(\statevar|\action)$. The revealed posterior pmf is a stochastically garbled version of the actual posterior pmf $\probabilitymeasure_{\dpiter}(\statevar|\observation)$, that is,
 	\begin{equation}\label{eqn:revpos}
 	  \probabilitymeasure(\statevar|\action) = \sum_{\observation\in\obsset} \frac{p_{\dpiter}(\statevar,\observation,\action)}{p_{\dpiter}(\action)} = \sum_{\observation\in\obsset} p_{\dpiter}(\observation|\action)p_{\dpiter}(\statevar|\observation).
 	\end{equation} 
 	Since the optimal action is $a$ for all $\observation\in\obsset_{\action}$, \eqref{eq:actionutilitymaxm} implies:

 	\begin{align*}
 	&\hspace{-0.4cm}\implies \sum_{\observation\in \obsset_{\action} } p_{\dpiter}(\observation|\action) \sum_{\statevar\in \statespace} p_{\dpiter}(\statevar|\observation) (\utility_{\dpiter}(\statevar,\actionalt)-\utility_{\dpiter})\leq 0 \\
    & \hspace{-0.4cm}\implies \sum_{\observation\in \obsset} p_{\dpiter}(\observation|\action) \sum_{\statevar\in \statespace} p_{\dpiter}(\statevar|\observation) (\utility_{\dpiter}(\statevar,\actionalt)-\utility_{\dpiter})\leq 0 \\
    &\hspace{-0.4cm}\quad\quad~~(\text{since }p_{\dpiter}(\observation|\action)=0,~\forall \observation\in\obsset\backslash\obsset_{\action})\\
  	&\hspace{-0.4cm}\implies\sum_{\statevar\in \statespace} p_{\dpiter}(\statevar|\action) (\utility_{\dpiter}(\statevar,\actionalt)-\utility_{\dpiter})\leq 0~(\text{from }\eqref{eqn:revpos}).
 	\end{align*}  
 	The last equation is the NIAS inequality~\eqref{eq:nias}. 
  
 	\item NIAC~(\ref{eq:niac}): Let $\runcostinst_{\dpiter}=\infoacquisitioncost(\attfunagent{\dpiter})>0$, where $\infoacquisitioncost(\cdot)$ denotes the information acquisition cost of the collection of agents $\dpset$. Also, let $J(\attfunagent{\dpiter},\utility_{\dpiter})$ denote the expected utility of the \ribum\ in environment $\dpiter$  given attention strategy $\attfunagent{\dpiter}$ (first term in RHS of \eqref{eq:observationoptimization}). Here, the expectation is taken wrt both the state $\statevar$ and observation $\observation$. It can be verified that $J(\cdot,\utility_\dpiter)$ is convex in the first argument. Finally, for the environment $\dpiter$ , we define the revealed attention strategy $\attfunagent{\dpiter}'$ over the set of actions $\actionspace$ as $\attfunagent{\dpiter}'(\action|\statevar) = \probabilitymeasure_{\dpiter}(\action|\statevar),~\forall\action\in\actionspace$, where the variable $\probabilitymeasure_{\dpiter}(\action|\statevar)$ is obtained from the dataset $\dataset$. Clearly, the revealed attention strategy is a stochastically garbled version of the true attention strategy since
 	\begin{equation}\label{eqn:revattfun}
 	    \attfunagent{\dpiter}'(\action|\statevar) = \probabilitymeasure_{\dpiter}(\action|\statevar) = \sum_{\observation\in\obsset}p_{\dpiter}(\action|\observation)\attfunagent{\dpiter}(\observation|\statevar).
 	\end{equation}
 	From Blackwell dominance~\cite{blackwelldominance} and the convexity of the expected utility functional $J(\cdot,\utility_\dpiter)$,  it follows that: 
 	\begin{equation}
 	    J(\attfunsymb_\dpiter',\utility_\dpitertwo) \leq J(\attfunsymb_\dpiter,\utility_\dpitertwo), \label{eqn:ineq_BLK}
 	\end{equation}
 	when $\observationmatrix_{\dpiter}$ Blackwell dominates $\observationmatrix_{\dpiter}'$. The above relationship holds with equality if $\dpiter=\environmentalt$ (this is due to NIAS \eqref{eq:nias}).

We now turn to condition \eqref{eq:observationoptimization} for optimality of attention strategy.

The following inequalities hold for any pair of agents $\environmentalt\neq\dpiter$:
    \begin{align}
        &J(\attfunsymb_\dpiter',\utility_{\dpiter}) - \runcostinst_{\dpiter}\overset{\eqref{eqn:ineq_BLK}}{=} J(\attfunagent{\dpiter},\utility_{\dpiter}) - \runcostinst_{\dpiter}\nonumber\\
        \overset{\eqref{eq:observationoptimization}}{\geq}~&J(\attfunagent{\dpitertwo},\utility_{\dpiter}) - \runcostinst_{\dpitertwo} \overset{\eqref{eqn:ineq_BLK}}{\geq}~J(\attfunagent{\dpitertwo}',\utility_{\dpiter}) - \runcostinst_{\dpitertwo}\label{eqn:NIAC_proof}.
    \end{align}
    This is precisely the NIAC inequality~\eqref{eq:niac}.
\end{enumerate}

\noindent{\em Proof for sufficiency of NIAS and NIAC:}
Let $\{\utility_\dpiter,\runcostinst_\dpiter\}_{\dpiter=1}^\numenvironments$ denote a feasible solution to the NIAS and NIAC inequalities of Theorem~\ref{th:nscribum}. To prove sufficiency, we construct an \ribum\ tuple as a function of dataset $\dataset$ and the feasible solution that satisfies the optimality conditions~\eqref{eq:actionutilitymaxm},\eqref{eq:observationoptimization} for \ribum\ \eqref{eq:ribum}.

Consider the following \ribum\ tuple:
 \begin{align}
     &(\dpset,\statespace,\obsset=\actionspace,\actionspace,\prior,\infoacquisitioncost,\{\probabilitymeasure_\dpiter(\action|\statevar),\utility_\dpiter,\dpiter\in\dpset\}),\text{ where}\nonumber\\
     &\infoacquisitioncost(\probabilitymeasure_(\action|\statevar))  = \max_{\dpiter\in\dpset} \runcostinst_\dpiter + J(\probabilitymeasure_(\action|\statevar),\utility_\dpiter) - J(\probabilitymeasure_\dpiter(\action|\statevar),\utility_\dpiter).\label{eqn:construct_cost}
 \end{align}

 In \eqref{eqn:construct_cost}, $C(\cdot)$ is a convex cost since it is a point-wise maximum of monotone convex functions. Further, since NIAC is satisfied, \eqref{eqn:construct_cost} implies $\infoacquisitioncost(\attfunagent{\dpiter})=\runcostinst_{\dpiter}$. It only remains to show that inequalities ~\eqref{eq:actionutilitymaxm} and \eqref{eq:observationoptimization} are satisfied for all environments in $\dpset$.
 \begin{enumerate}
 \item {\em NIAS implies \eqref{eq:actionutilitymaxm} holds.} This is straightforward to show since the observation and action sets are identical.
 From NIAS \eqref{eq:nias}, we know that for any environment $\dpiter\in\dpset$, actions $\action,\actionalt\in\actionspace,\action\neq\actionalt$, the following inequalities hold.
 \begin{align*}
     &\sum_{\statevar}\probabilitymeasure_{\dpiter}(\statevar|\action)( \utility_{\dpiter}(\statevar,\actionalt) - \utility_{\dpiter}) \leq 0\\
     \implies& \sum_{\statevar}\probabilitymeasure_{\dpiter}(\statevar|\observation=\action)( \utility_{\dpiter}(\statevar,\actionalt) - \utility_{\dpiter}) \leq 0\\
     \implies& \action \in \argmax_{\actionalt\in\actionspace}  \sum_{\statevar}p_{\dpiter}(\statevar|\observation) \utility_{\dpiter}(\statevar,\actionalt) \implies \eqref{eq:actionutilitymaxm}.
 \end{align*}
 \item {\em Information Acquisition Cost \eqref{eqn:construct_cost} implies \eqref{eq:observationoptimization} holds.} Fix environment $\dpitertwo\in\dpset$. Then, for any attention strategy $\probabilitymeasure(\action|\statevar)$, the following inequalities hold.
 \begin{align*}
     &\infoacquisitioncost(\probabilitymeasure(\action|\statevar))  =  \max_{\dpiter\in\dpset}~ \runcostinst_{\dpiter} + J(\probabilitymeasure(\action|\statevar),\utility_{\dpiter}) - J(\probabilitymeasure_\dpiter(\action|\statevar),\utility_{\dpiter}) \\
     &\hspace{-0.4cm}\implies J(\probabilitymeasure_\dpitertwo(\action|\statevar))  -\runcostinst_\dpitertwo \geq J(\probabilitymeasure(\action|\statevar)) - C(\probabilitymeasure(\action|\statevar)),~\forall~\probabilitymeasure(\action|\statevar)\\
     &\hspace{-0.4cm}\implies \probabilitymeasure_\dpiter(\action|\statevar)\in \argmax_{\probabilitymeasure(\action|\statevar)} J(\probabilitymeasure(\action|\statevar),\utility_{\dpiter}) - \infoacquisitioncost(\probabilitymeasure(\action|\statevar))\\&=\eqref{eq:observationoptimization}.
 \end{align*}
 \end{enumerate}

\subsection{Derivation of social learning filter}\label{app:sociallearningfilter}
Let the posterior as $\posterior_\timeindex(\stateidx) = \probabilitymeasure(\statevar_\timeindex=\stateidx|\action_1,\dots,\action_{\timeindex})$.
Let $\normalizationfactor(\posterior_{\timeindex-1},\action_{\timeindex}) = \sum_{\stateidx}\sum_{\observation} \probabilitymeasure(\action_\timeindex|\observation_\timeindex=\observation,\posterior_{\timeindex-1})\probabilitymeasure(\observation_\timeindex|\statevar_{\timeindex}=\stateidx)\sum_{\stateidxalt}\transitionmatrix_{\stateidx,\stateidxalt} \posterior_{\timeindex-1}(\stateidxalt)$ be the normalization factor.

\begin{align*}
    &\posterior_\timeindex(\stateidx)  \\&= \frac{1}{\normalizationfactor(\posterior_{\timeindex-1},\action_\timeindex)}\probabilitymeasure(\action_\timeindex|\statevar_{\timeindex}=\stateidx,\action_1,\dots,\action_{\timeindex-1})\\&\sum_{\stateidxalt}\transitionmatrix_{\stateidx,\stateidxalt} \probabilitymeasure(\statevar_{\timeindex-1}=\stateidxalt|\action_1,\dots,\action_{\timeindex-1})\\
    &=\frac{1}{\normalizationfactor(\posterior_{\timeindex-1},\action_\timeindex)}\sum_{\observation}\probabilitymeasure(\action_\timeindex|\observation_{\timeindex}=\observation,\posterior_{\timeindex-1})\probabilitymeasure(\observation_\timeindex=\observation|\statevar_\timeindex=\stateidx)\\&\sum_{\stateidxalt}\transitionmatrix_{\stateidx,\stateidxalt} \posterior_{\timeindex-1}(\stateidxalt).
\end{align*}
which completes the derivation.
  \subsection{Proof for Theorem 2}
    \begin{proof}
    Define $\priorratio_\timeindex(\stateidx,\stateidxalt) = \log(\prior(\stateidx)/\prior(\stateidxalt)), \stateidx,\stateidxalt \in \statespace$. From~\eqref{eq:priorupdate} we have, $\priorratio_{\timeindex+1}(\stateidx,\stateidxalt) = \priorratio_{\timeindex}(\stateidx,\stateidxalt) + \actionratio_{\timeindex}(\stateidx,\stateidxalt)$ where $\actionratio_{\timeindex}(\stateidx,\stateidxalt) = \log(\probabilitymeasure(\action_\timeindex|\statevar=\stateidx,\prior_\timeindex)/\probabilitymeasure(\action_\timeindex|\statevar=\stateidxalt,\prior_\timeindex)$. 

    The probability of the actions given the state and prior can be written as, 
    \begin{align*}        \probabilitymeasure(\action|\statevar,\prior) = \sum_{\observation\in \observationspace} \prod_{\actionalt \in \actionspace \setminus \action} \indicator(\cost^\prime_\action \observationmatrix_\observation  \prior \leq \cost^\prime_{\actionalt} \observationmatrix_\observation  \prior) \observationmatrix_{\observation,\statevar}.
    \end{align*}
    Let $\Tilde{\observationspace}_\timeindex \subseteq \observationspace$ be a subset of the observation space for which the action $\action_\timeindex$ is suboptimal with respect to all other actions, i.e., $\prod_{\actionalt \in \actionspace \setminus \action_\timeindex}  \indicator(\cost^\prime_{\action_\timeindex} \observationmatrix_\observation  \prior > \cost^\prime_{\actionalt} \observationmatrix_\observation  \prior) = 1 \ \forall \observation \in \observationspace$. When information cascade (Def.~\ref{def:infocascade}) occurs, this set should be empty since no matter what the observation, the action $\timeindex$ should be optimal according to~\eqref{eq:action}.
    Also, rewriting $\actionratio_{\timeindex}(\stateidx,\stateidxalt)$,
    \begin{align*}
        \actionratio_{\timeindex}(\stateidx,\stateidxalt) = \log \left(\frac{1-\sum_{\observation\in\Tilde{\observationspace}_\timeindex}\observationmatrix_{\observation,\stateidx}}{1-\sum_{\observation\in\Tilde{\observationspace_\timeindex}}\observationmatrix_{\observation,\stateidxalt}}\right).
    \end{align*}
    Therefore when an information cascade occurs, $\actionratio_\timeindex(\stateidx,\stateidxalt) = 0, \ \forall \stateidx,\stateidxalt\in\statespace$ (Due to $\Tilde{\observationspace_\timeindex}$ being an empty set). Also if $\Tilde{\observationspace_\timeindex}$ is nonempty, then $\actionratio_\timeindex(\stateidx,\stateidxalt)>\constant$, where $\constant$ is a positive constant.   

    Let $\actionfiltration_\timeindex = \{\sigma(\action_1,\action_2,\dots,\action_\timeindex)\}$ denote the natural filtration, where $\sigma$ is the operator which generates the corresponding sigma field.

    $\prior_\timeindex(\stateidx) = \probabilitymeasure(\statevar=\stateidx|\action_1,\dots,\action_\timeindex) = \expectation[\indicator(\statevar = \stateidx)\vert\actionfiltration_\timeindex]$ is a martingale adapted to $\actionfiltration_\timeindex$ for all $\stateidx \in \statespace$. This follows by the application of smoothing property of conditional expectation, $\expectation[\prior_{\timeindex+1}(\stateidx)\mid \actionfiltration_\timeindex] = \expectation\left[\expectation[\indicator(\statevar = \stateidx)\mid\actionfiltration_{\timeindex+1}]\right] = \expectation[\indicator(\statevar=\stateidx)\mid\actionfiltration_\timeindex]$.

    Therefore, by the  martingale convergence theorem, there exists a random variable $\prior_\infty$ such that, $
    \prior_\timeindex \to \prior_\infty \ \text{w.p.} 1$. Therefore $\priorratio_\timeindex(\stateidx,\stateidxalt) \to \priorratio_\infty(\stateidx,\stateidxalt)$ w.p. 1., which implies there exists $\Tilde{\timeindex}$ such that $\forall \timeindex \geq \Tilde{\timeindex}$, $\vert \priorratio_\infty(\stateidx,\stateidxalt) - \priorratio_\timeindex(\stateidx,\stateidxalt) \vert \leq \constant/3$ and so,
    \begin{align}\label{eq:contra1}
        \vert \priorratio_{\timeindex+1}(\stateidx,\stateidxalt) - \priorratio_\timeindex(\stateidx,\stateidxalt) \vert \leq 2\constant/3, \forall \timeindex \geq \Tilde{\timeindex}.
    \end{align}
    We now prove the theorem by contradiction. Suppose a cascade does not occur, then for at least one pair $\stateidx \neq \stateidxalt, \stateidx,\stateidxalt \in \statespace$, $\probabilitymeasure(\action|\statevar=\stateidx,\prior)$ is different than $\probabilitymeasure(\action|\statevar=\stateidxalt,\prior)$. This would imply that the set $\Tilde{\observationspace}_\timeindex$ is nonempty and therefore, 
    \begin{align}\label{eq:contra2}
        |\priorratio_{\timeindex}(\stateidx,\stateidxalt)| = \vert \priorratio_{\timeindex+1}(\stateidx,\stateidxalt) - \priorratio_\timeindex(\stateidx,\stateidxalt) \vert \geq \constant.
    \end{align}
    ~\eqref{eq:contra1} and \eqref{eq:contra2} contradict each other. Therefore $\probabilitymeasure(\action|\statevar=\stateidx,\prior)$ is same for all $\stateidx\in\statespace$ and hence according to~\eqref{eq:priorupdate} information cascade occurs at time $\Tilde{\timeindex}$.
    \end{proof}
    \subsection{Proof for Theorem 3}
    \begin{proof}
    We prove the Theorem by showing that it satisfies the conditions of Theorem 12.3.4 of~\cite{krishnamurthy_partially_2016}. A more general proof can be found in~\cite{krishnamurthy_partially_2016,quickestchangedetectionitit}.

In order to verify the assumptions of Theorem 12.3.4 of~\cite{krishnamurthy_partially_2016}, we need to define first-order stochastic dominance (FOSD) and a submodular function. 
    We first define a Monotone Likelihood Ratio (MLR) ordering on a line and then define a submodular function with respect to this MLR ordering. We only need to consider the following lines, 
    \begin{align*}
\lines(\indicatorstate_\stateidx,\bar{\prior}) = \{\prior \in \probspace(\statedim)  : \prior = (1-\epsilon)\bar{\prior} + \epsilon \indicatorstate_\stateidx, 0 \leq \epsilon\leq 1\}\\, \bar{\prior} \in \mathcal{H}_\stateidx,
    \end{align*}
    where the state index is only between the extreme states, $\stateidx \in \{1,\statedim \}$ and,
     \begin{align*}
        \mathcal{H}_\stateidx = \{\prior \in \probspace(\statedim):\policy(\stateidx) = 0\}.
    \end{align*}
    To define the MLR ordering on a line, we first define the MLR ratio with respect to belief space, 
    \begin{definition}
       \textbf{(Monotone Likelihood Ratio (MLR) Order)} Let $\prior_1$,$\prior_2 \in \probspace(\statedim)$, then $\prior_1$ dominates  $\prior_2$ with respect to the MLR order ($\prior_1 \geq_r \prior_2$) if,
        \begin{align*}
\prior_1(\stateidx)\prior_2(\stateidxalt) \leq \prior_1(\stateidxalt)\prior_2(\stateidx), \ \ \stateidx < \stateidxalt, \stateidx,\stateidxalt \in \{1,\dots,\statedim\}.
        \end{align*}
    \end{definition}
    The following definition is for the MLR ordering the lines $\lines_{\indicatorstate_\stateidx}, \ \stateidx \in \{1,\statedim \}$ 
    \begin{definition}
        (\textbf{MLR Ordering on Line  $\geq_{\linesymb_\stateidx}$}) $\policy_1$ is greater than $\policy_2$ with respect to the MLR ordering on the line $\lines(\indicatorstate_\stateidx,\prior), \stateidx\in\{1,\statedim\}$ ($\prior_1 \geq_{\linesymb_\stateidx}\prior_2$), if $\prior_1,\prior_2 \in \lines(\indicatorstate_\stateidx,\bar{\prior})$ for some $\bar{\prior}$ and $\prior_1 \geq_r \prior_2$. 
    \end{definition}
    Finally, we are ready to define a submodular function on a line,
    \begin{definition}
        \textbf{Submodular Function on Line} For $\stateidx\in\{1,\statedim\}$, a function
    $\phi: \lines(\indicatorstate_\stateidx,\bar{\prior})\times \decisionspace \to \R$ is submodular if $\phi(\prior,\decision) - \phi(\prior,\bar{\decision}) \leq \phi(\bar{\prior},\decision) - \phi(\bar{\prior},\bar{\decision})$, for $\bar{\decision}\leq \decision, \bar{\prior} \leq_{\linesymb_\stateidx} \prior$.
    \end{definition}

  The following is used extensively to compare two beliefs and is a weaker condition than MLR ordering, 
    \begin{definition}
       (\textbf{First Order Stochastic Dominance (FOSD)}) Let $\prior_1$,$\prior_2 \in \probspace(\statedim)$, then $\prior_1$ first order stochastically dominates  $\prior_2$ ($\prior_1 \geq_s \prior_2$) if $\sum_{\stateidx=\stateidxalt}^{\statedim} \prior_1(\stateidx) \geq \sum_{\stateidx=\stateidxalt}^{\statedim} \prior_2(\stateidx) \ \forall \ \stateidxalt\in\statespace$. 
    \end{definition}

The stopping time problem with the social welfare cost of~\eqref{eq:socialwelfarecost} can be decomposed into two cost terms, each corresponding to the cost terms of the stopping time problem for partially observed Markov decision processes of Theorem 12.3.4 of~\cite{krishnamurthy_partially_2016}. 
\begin{align}\label{eq:bigcost}
\begin{split}
    \bigcost(\prior,1) &= \frac{1}{1-\discountfactor} \min_{\action} \cost_\action^{\prime} \prior, \\
    \bigcost(\prior,2) &= \sum_{\observation\in\observationspace} \cost_{\observation}^\prime \observationmatrix_\observation \prior + (\delaycost + (1-\discountfactor)\errorcost)\indicatorstate_1\prior - (1-\discountfactor)\errorcost.
\end{split}
\end{align}
$\bigcost(\prior,1)$ is the expected cost after the decision to herd has been made. Similarly, the first term in $\bigcost(\prior,2)$ is the expected cost when revealing private observations. The rest of the terms come from transforming the value function by the delay penalty costs~\cite{quickestchangedetectionitit}. 

   We now state the main assumptions of Theorem~12.3.4 of~\cite{krishnamurthy_partially_2016}, which are required for this to hold for the structural result of~\eqref{eq:optpolicy},
    \begin{enumerate}
        \item \textbf{(C)} $\prior_1 \geq_s \prior_2$ implies $\bigcost(\prior_1,\action) \leq \bigcost(\prior_2,\action)$.
        \item \textbf{(F1)} $\observationmatrix$ is totally positive of order 2 (TP2).
        \item \textbf{(F2)} $\markovkernel$ is totally positive of order 2 (TP2).
        \item (\textbf{S}) $\bigcost(\prior,\decision)$ is submodular on $[\lines(\indicatorstate_\statedim,\bar{\prior}),\geq_{\linesymb_\statedim}]$ and  $[\lines(\indicatorstate_1,\bar{\prior}),\geq_{\linesymb_1}]$.
    \end{enumerate}
    F1 follows from S4, and F2 follows from the fact that we only consider an identity transition matrix $\markovkernel$. And since the costs are linear, (C) follows by applying the definition of FOSD on equation~\eqref{eq:bigcost} and using assumption (S1). (S) follows from the definition of MLR ordering on the line, using the fact that $\observationmatrix$ is TP2 in the first term of $\bigcost(\prior,2)$ and using (S2) and (S3). Hence, the assumptions are verified, and the structural result is proved.
    \end{proof}
    \subsection{Proof of Optimal Stopping Theorem 4}
    This proof follows closely from the proof in \cite{bhattcontrolled2020}.
    From Lemma 1 of \cite{bhattcontrolled2020}, the value function can be expressed as, 
    \begin{align*}
        \valuefn(\prior) = \min\{0,\incentivefunction(\observation) -\fusionweight+\discountfactor\expectation\valuefn(\prior)\}.
    \end{align*}
    Further, Lemma 2 of \cite{bhattcontrolled2020} shows that the incentive function is decreasing. Denote $\valuefn_\dpiter$ to be the $\dpiter-$th  iterate of the value iteration algorithm~\cite{krishnamurthy_partially_2016} which iteratively converges to the value function $\valuefn(\prior)$. The iterates are given by,
    \begin{align*}
        \valuefn_{\dpiter+1}= \min\{0,\incentivefunction(\observation) -\fusionweight+\discountfactor\expectation\valuefn(\prior)\}.
    \end{align*}
    From the definition of first order stochastic dominance and Proposition 1 of \cite{bhattcontrolled2020} $\expectation\valuefn_\dpiter(\prior)$ is decreasing in $\prior$. 
    
    Therefore $\valuefn_\dpiter(\prior)$ is decreasing which implies $\valuefn(\prior)$ is decreasing. 
    Let $V(0)$ and $V(1)$ be the value at $\pi=[1,0]$ and $\pi=[0,1]$ respectively which makes $\expectation \valuefn(0) = \valuefn(0)$ and $\expectation\valuefn(1)=\valuefn(1)$. 
    \begin{enumerate}
        \item For $\valuefn = \incentivefunction(\observation) -\fusionweight+\discountfactor\expectation\valuefn(\prior)$, $\valuefn(0) = \frac{\incentivefunction(\indicatorstate_1)-\fusionweight}{1-\discountfactor}>0$ and $\valuefn(1) = \frac{\incentivefunction(\indicatorstate_2)-\fusionweight}{1-\discountfactor}<0$.
        \item For $\valuefn = 0$, $\valuefn(0) = \valuefn(1) = 0$
    \end{enumerate}
    Therefore, the value function decreases with a positive value of $[1,0]$ and a negative value of $[0,1]$. Therefore it must be 0 at some time. Since $\valuefn(\prior)$ is monotone in $\prior$, the set $E = \{\prior(2)|\valuefn(\prior) =  \incentivefunction(\observation) -\fusionweight+\discountfactor\expectation\valuefn(\prior)\}$ is convex. We choose a policy $\thresholdprior^*(2) = \{\hat{\thresholdprior}(2) | \hat{\thresholdprior}(2) > \prior(2) \forall\ \prior(2) \in E \}$. 
    \subsection{Proof of Submartingale Result Theorem 5}
    Consider the suboptimal policy where $\epsilon>0$,
    \[
   \hat{\incentivepolicy}(\prior)  = \begin{cases}
        \incentivefunction(\prior) - \epsilon &\text{if} \ \prior(2) \leq \thresholdprior^*(2)\\
         \incentivefunction(\prior) &\text{if} \ \prior(2) > \thresholdprior^*(2)\\
    \end{cases}.
    \]
    From Lemma 3 of \cite{bhattcontrolled2020} $\incentivefunction(\prior)$ is convex in $\prior$. We know that the public belief $\prior_\timeindex$ is a martingale from the proof of Theorem~\ref{th:herding}. Let $W_k =  \hat{\incentivepolicy}(\prior_{\timeindex-1})$. By Jensen's inequality for $\epsilon\to0$,
    \begin{align*}
    \expectation[W_{\timeindex+1}|\filtration_{\timeindex}] = \expectation[\incentivefunction(\prior_{\timeindex+1})|\filtration_{\timeindex}] \geq \incentivefunction(\expectation[\prior_{\timeindex+1}|\filtration_{\timeindex}])\\
    \geq \incentivefunction(\prior_{\timeindex}) \geq W_{\timeindex}.
\end{align*}
Therefore $W_\timeindex$ is a submartingale. Consider $H_\timeindex$ as the following sequence,
\begin{align*}
    H_\timeindex = \begin{cases}
        0 \ \text{if} \ \prior_{\timeindex-1}(2) &\leq \thresholdprior^*(2)\\
        1  \ \text{if} \ \prior_{\timeindex-1}(2) & \geq  \thresholdprior^*(2)
    \end{cases}.
\end{align*}
    Now, by properties of submartingales, $(HW)_{\timeindex}$ is a submartingale. This is exactly the incentive sequence, which is indeed a submartingale. 

    To show the second statement, we use Doob's martingale inequality on the submartingale sequence $\incentive_\timeindex$,
    \[
    { \probabilitymeasure\left[\max _{1\leq \timeindex\leq \timehorizon}\incentive_{\timeindex}\geq C\right]\leq {\frac {\expectation [{\max}(\incentive_{\timehorizon},0)]}{C}}} = {\frac {\expectation [\incentive_{\timehorizon}]}{C}}.
    \]
    Also note that $\sum_1^\timehorizon \incentive_{\timeindex} \leq \timehorizon\max _{1\leq \timeindex\leq \timehorizon}\incentive_{\timeindex}$. 
    Therefore the event $\{\sum_1^\timehorizon \incentive_{\timeindex}  \geq C\timehorizon \}\subseteq \{\timehorizon\max _{1\leq \timeindex\leq \timehorizon}\incentive_{\timeindex} \geq C\timehorizon \}$. 
    
    Hence 
    $\probabilitymeasure(\sum_1^\timehorizon \incentive_{\timeindex}  \geq C\timehorizon) \leq \probabilitymeasure(\timehorizon\max _{1\leq \timeindex\leq \timehorizon}\incentive_{\timeindex} \geq C\timehorizon)$ which along with the first inequality proves the statement.
 \appendix

    \section{Brief Experimental Details}\label{sec:expapp}
\subsection{Hyperparameters Of LLM at Inference Time}

As described in the main text, we use different LLMs.

Large Language models uses default parameters for inference to control the output. Temperature (default 1) influences randomness, with lower values producing more focused responses. Max tokens (default ~2048) limits response length. Top p (default 1) controls diversity, with lower values making the output more focused by considering only the most likely words.
\subsubsection{Mixtral-8x7B-v0.1}
We consider the maximum response tokens as 100, a temperature of 0.7, a top-p of 0.7, a top-k of 50, and a repetition penalty as 50.
\subsubsection{LLaMA-3-70b}
We consider the maximum response tokens as 200, a temperature of 0.7, a top-p of 0.7, a top-k of 50, and a repetition penalty as 80.

 We use the TogetherAI API to send the prompt to the LLM and receive the response for LLama3 and Mixtral.
\subsubsection{ChatGPT-4o}
Default parameters were used when inferencing ChatGPT using their API. 

\subsection{Dataset Description}
\subsubsection{Amazon Review Dataset}\label{app:amazon}
Amazon review data~\cite{ni-etal-2019-justifying}, including 233.1 million reviews spanning various product categories. It features detailed information such as ratings, review text, helpfulness votes, product descriptions, category information, prices, brands, and image features. It also has additional transaction metadata such as product color, size, and user-submitted images, as well as more detailed product landing page information like bullet-point descriptions and technical specifications.
\subsubsection{Measuring Hatespeech Dataset}
The Measuring Hate Speech corpus is a comprehensive dataset designed to evaluate hate speech while accounting for annotators’ perspectives~\cite{sachdeva_measuring_2022}. It includes 50,070 social media comments from platforms like YouTube, Reddit, and Twitter, labeled by 11,143 annotators from Amazon Mechanical Turk. Each comment is assessed on 10 ordinal labels such as sentiment, disrespect, and violence, which are aggregated into a continuous score using Rasch measurement theory. This approach allows for the statistical summarization of annotator disagreement and adjusts for labeling strictness, providing a nuanced measure of hate speech. The dataset also includes information on the identity group targets of each comment and annotator demographics, enabling detailed analyses of identity-related perspectives.
\subsubsection{JigsawAI Unintended Bias Dataset}
This dataset was used to train the likelihood network since we assume the actual dataset is not available during real-time inferencing.

The Civil Comments dataset, comprising approximately 2 million public comments from the now-defunct Civil Comments platform, provides a valuable resource for investigating online toxicity.  Jigsaw, the sponsor of this effort, facilitated human annotation of these comments for various toxic conversational attributes, including  "toxicity," "severetoxicity," "obscene," "threat," "insult," "identityattack," and "sexualexplicit."  Each comment's toxicity was assessed by up to 10 annotators who rated it on a scale ranging from "Not Toxic" to "Very Toxic," with the final toxicity label representing the fraction of annotators who deemed it toxic.  Furthermore, a subset of comments were labeled for identity mentions, such as gender, sexual orientation, religion, and race, to analyze the relationship between online toxicity and identity. This dataset offers significant potential for developing and evaluating models aimed at identifying and mitigating harmful online interactions. The dataset can be accessed here kaggle.com/c/jigsaw-unintended-bias-in-toxicity-classification/data.
\subsubsection{FNSPID}
FNSPID (Financial News and Stock Price Integration Dataset) is a large-scale dataset designed for financial market prediction. It integrates both numerical and textual data, comprising 29.7 million stock prices and 15.7 million financial news articles for 4,775 S\&P500 companies from 1999 to 2023~\cite{10.1145/3637528.3671629}.  Sourced from four major stock market news websites, FNSPID includes sentiment scores derived from the news articles, offering a unique resource for researchers to investigate the impact of news sentiment on market trends. The metric used in Figure~\ref{fig:interpretablefigfinance} is given by,
\[
\frac{C_{\timeindex+1}}{C_{\timeindex}} - \frac{C_{\timeindex}}{C_{\timeindex-1}},
\]
where $C_{\timeindex}$ is the close price at time $\timeindex$. The close price is the stock price at the end of the trading day.
    \subsection{System Prompts}
        \subsubsection{Financial Analysis Task}
        For the Financial Analysis Task of Example~\ref{ex:financial}.
        
       `` Analyze the article and answer the following questions based on the content:
        
Are there indications that recent or upcoming policy decisions could support market growth? (Yes/No)

Do statements from central banks suggest optimism about the economic outlook? (Yes/No)

Are there emerging trends or patterns that suggest a shift in market sentiment? (Yes/No)

Is there evidence of key technical levels acting as support for major indices? (Yes/No)

Are certain sectors or industries showing stronger performance compared to others? (Yes/No)

Do shifts in investor interest suggest a move toward specific sectors, such as technology or energy? (Yes/No)

Do recent economic data releases (e.g., employment, inflation, consumer sentiment) point toward growth? (Yes/No)

Are any indicators flashing signals that typically correlate with significant market moves (e.g., yield curves, commodity prices)? (Yes/No)

Is there evidence of a “risk-on” approach among investors? (Yes/No)

Do recent market movements suggest increased interest in safe-haven assets like gold or bonds? (Yes/No)

Are there global or geopolitical events mentioned that could influence market volatility? (Yes/No)

Could changes in international markets or currencies impact domestic market trends? (Yes/No)

Are recent corporate earnings or business announcements likely to influence market sentiment? (Yes/No)

Do specific companies or sectors appear to be driving recent market gains? (Yes/No) \textit{article}''

where \textit{article} contains the HTML page of the online news article. 

\subsubsection{Hate speech Classification Task}
    For the Hate speech Classification Task: 
    
    ``[INST]\\  Return a JSON with the following format for the given text:\\ \{`is\_insulting': Bool,\\`is\_dehumanizing':Bool,\\`is\_humiliating':Bool,\\`promotes\_violence':Bool,\\`promotes\_genocide':Bool,\\`is\_respectful':Bool\}\\ 
    Text: \{\textit{comment}\}[/INST]''
    
    where ``\textit{comment}'' contains the comment observation which needs to be analyzed. 

\subsubsection{Product Quality Identification Task}
    For the Product Quality Identification Task: 

    ``Analyze the following product review and provide a summary of the key points:
    
- Does the review mention any specific problems or defects with the product?

- Does the review mention any positive attributes regarding the product's durability or reliability?

- Does the review indicate that the product meets or exceeds the user's expectations?

- Would the reviewer recommend this product to others? \textit{review}
''

where \textit{review} contains the review which is being analyzed.
\subsubsection{Response}
The responses of the LLM includes the JSON response at the start and an explanation for the corresponding mapping. We truncated the output to include the JSON and got a discrete low-dimensional observation from the textual comment. 
\subsection{Observation Space}
  \subsubsection{Reducing the observation space}
  We reduce the observation space for the hate-speech classification task: 
  
Although the LLM output for text observation is of cardinality $2^6 = 64$, we reduce by considering the following order: respectful < insulting < dehumanizing < humiliating < violence < genocide~\cite{sachdeva_measuring_2022}. The binary map $\psi(z):\{0,1\}^6 \to 6 = \max\{ \stateidxalt:  \text{s.t.} \ z[\stateidxalt] = 1\} $ takes the observation as the highest severity present in the binary observation. 
\subsubsection{Likelihood Neural Network (Restricted Boltzmann Machine) } 
We use a subset of the labeled dataset to obtain a likelihood distribution.
We use restricted Boltzmann Machines (RBMs) to approximate the likelihood function $\probabilitymeasure(\observation|\statevar)$; we train a conditional RBM on observations obtained from different states (as defined in the main text). Each RBM has $|\observationspace|$ visible units and $4$ hidden units. We train the RBM using contrastive divergence for $100$ epochs and generate $1000$ samples using Gibbs sampling with $1000$ iterations. We obtain the approximate probabilities by empirically counting the samples. 
\appendix

\begin{IEEEbiography}[{\includegraphics[width=1in,height=1.33in,clip,keepaspectratio]{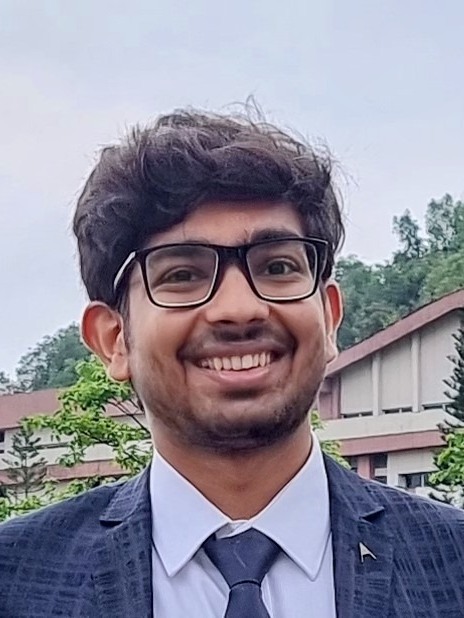}}]{Adit Jain}  received his Bachelor of Technology in Electronics and Communication Engineering from the Indian Institute of Technology, Guwahati in 2022 where he received the Institute Silver Medal. He is currently a graduate student at Cornell University. He has internship experience at Adobe Research and Goldman Sachs.

His research interests include structural results for episodic reinforcement learning, online learning, and high-dimensional linear bandits. His research is focused on applications in federated learning, large language model agents, and active learning. He is a recipient of the Data Science Fellowship graciously offered by the Cornell Center for Social Sciences.
\end{IEEEbiography}

\begin{IEEEbiography}[{\includegraphics[width=1in,height=1.33in,clip,keepaspectratio]{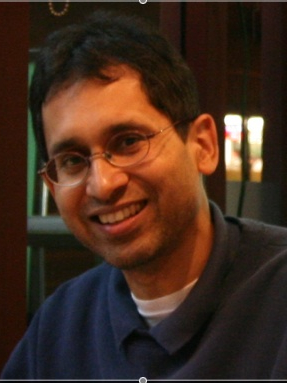}}]{Vikram Krishnamurthy}  (F'05) received the Ph.D. degree from the Australian National University
in 1992. He is  a professor in the School of Electrical \& Computer Engineering,
Cornell University. 

From 2002-2016 he was a Professor and Canada Research Chair
at the University of British Columbia, Canada. His research interests include statistical signal processing and stochastic control in social networks and adaptive sensing. He served as a Distinguished Lecturer for the IEEE Signal Processing Society and Editor-in-Chief of the \textsc{IEEE Journal on Selected Topics in Signal Processing}.

In 2013, he was awarded an Honorary Doctorate from KTH (Royal Institute of Technology), Sweden. He is the author of two books {\em Partially Observed Markov Decision Processes} and {\em Dynamics of Engineered Artificial Membranes and Biosensors}, published by Cambridge University Press in 2016 and 2018, respectively.
\end{IEEEbiography}

\EOD

\end{document}